\declaretheorem{theorem}
\declaretheorem{lemma}
\declaretheorem{definition}
\tikzset{
  ->,
  >=stealth',
  node distance=3cm,
  every state/.style={thick, fill=gray!10},
  initial text=$ $,
}
\definecolor{codegreen}{rgb}{0,0.6,0}
\definecolor{codegray}{rgb}{0.5,0.5,0.5}
\definecolor{codepurple}{rgb}{0.58,0,0.82}
\definecolor{backcolour}{rgb}{0.95,0.95,0.92}
\lstdefinestyle{mystyle}{
    aboveskip=1em,
    belowskip=0.2em,
    backgroundcolor=\color{backcolour},   
    commentstyle=\color{codegreen},
    keywordstyle=\color{magenta},
    numberstyle=\tiny\color{codegray},
    stringstyle=\color{codepurple},
    basicstyle=\footnotesize\ttfamily,
    breakatwhitespace=false,         
    breaklines=true,                 
    captionpos=b,                    
    keepspaces=true,                 
    numbers=left,                    
    numbersep=3pt,                  
    showspaces=false,                
    showstringspaces=false,
    showtabs=false,                  
    tabsize=2
}
\newcommand{\cA}{\mathcal{A}}
\newcommand{\cD}{\mathcal{D}}
\newcommand{\cF}{\mathcal{F}}
\newcommand{\cL}{\mathcal{L}}
\newcommand{\cM}{\mathcal{M}}
\newcommand{\cN}{\mathscr{N}}
\newcommand{\cP}{\mathcal{P}}
\newcommand{\cR}{\mathcal{R}}
\newcommand{\cS}{\mathcal{S}}
\newcommand{\cT}{\mathcal{T}}
\newcommand{\cW}{\mathcal{W}}
\newcommand{\rE}{\mathbb{E}}
\newcommand{\rM}{\mathbb{M}}
\newcommand{\rN}{\mathbb{N}}
\newcommand{\rP}{\mathbb{P}}
\newcommand{\rR}{\mathbb{R}}
\newcommand{\citet}[1]{\citeauthor{#1} (\citeyear{#1})}
\title{Scalable methods for computing state similarity in \\deterministic {M}arkov {D}ecision {P}rocesses}
\author{
  Pablo Samuel Castro \\
  Google Brain \\
  \texttt{psc@google.com} \\
}
\begin{document}

\maketitle

\begin{abstract}
  We present new algorithms for computing and approximating bisimulation
  metrics in Markov Decision Processes (MDPs). Bisimulation metrics are an
  elegant formalism that capture behavioral equivalence between states and
  provide strong theoretical guarantees on differences in optimal behaviour.
  Unfortunately, their computation is expensive and requires a tabular
  representation of the states, which has thus far rendered them impractical
  for large problems. In this paper we present a new version of the
  metric that is tied to a behavior policy in an MDP, along with an analysis of
  its theoretical properties. We then present two new algorithms for
  approximating bisimulation metrics in large, deterministic MDPs. The first
  does so via sampling and is guaranteed to converge to the true metric. The
  second is a differentiable loss which allows us to learn an approximation
  even for continuous state MDPs, which prior to this work had not been
  possible.
\end{abstract}

\section{Introduction}
\label{sec:intro}
A {\bf finite Markov Decision Process (MDP)} is defined as a 5-tuple $\cM =
\langle \cS, \cA, \cP, \cR, \gamma \rangle$, where $\cS$ is a finite set of
states, $\cA$ is a finite set of actions, $\cP:\cS\times\cA\rightarrow
\Delta(\cS)$ is the next state transition function (where $\Delta(X)$ is the
probability simplex over the set $X$), $\cR:\cS\times\cA\rightarrow
\rR$ is the reward function (assumed to be bounded by $R_{max}$), and
$\gamma\in[0, 1)$ is a discount factor. An MDP is the standard formalism
for expressing sequential decision problems, typically in the context of
planning or reinforcement learning (RL). The set of states $\cS$ is one of the
central components of this formalism, where each state $s\in\cS$ is meant to
encode sufficient information about the environment such that an agent can
learn how to behave in a {\em consistent} manner.
\autoref{fig:duplicated_mdp} illustrates a simple MDP where each cell
represents a state.

\begin{figure}
  \centering
  \includegraphics[width=0.35\textwidth]{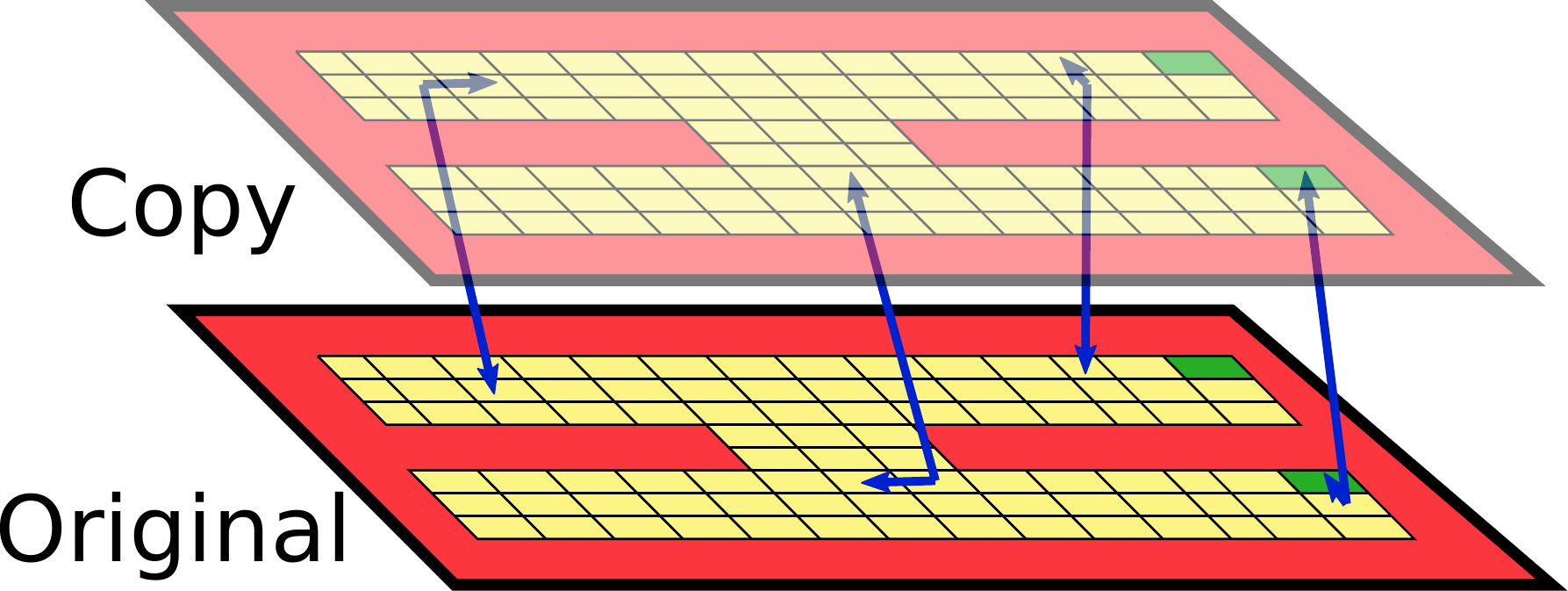}
  \caption{A grid MDP (bottom) with a copy of itself (top). The goal of an
  agent is to find the shortest path to the green cells. At each iteration, the
  agent has a 50\% chance of jumping to the other level.}
  \label{fig:duplicated_mdp}
\end{figure}

There is no canonical way of defining the set of states for a problem. Indeed,
improperly designed state spaces can have drastic effects on the algorithm
used. Consider the grid MDP in the bottom of \autoref{fig:duplicated_mdp},
where an agent must learn how to navigate to the green cells, and imagine we
create an exact replica of the MDP such that the agent randomly transitions
between the two layers for each move. By doing so we have doubled the number of
states and the complexity of the problem. However, from a planning perspective
the two copies of each state should be indistinguishable. A stronger notion of
{\em state identity} is needed that goes beyond the labeling of states and
which is able to capture {\em behavioral indistinguishability}.

In this paper we explore notions of behavioral similarity via state
pseudometrics\footnote{A pseudometric is a metric $d$ where $\forall
s,t\in\cS.\quad s = t \implies d(s, t) = 0$, but not the converse.}
$d:\cS\times\cS\rightarrow\rR$, and in particular those which assign a distance
of $0$ to states that are behaviorally indistinguishable. Pseudometrics
further allow us to reason about states based on what we may know about other
similar states. This is a common use-case in fields such as formal
verification, concurrency theory, and in safe RL, where one may want to provide
(non-)reachability guarantees. In the context of planning and reinforcement
learning, these can be useful for state aggregation and abstraction.

Our work builds on bisimulation metrics \cite{ferns04metrics} which provide us
with theoretical guarantees such as states that are close to each other (with
respect to the metric) will have similar optimal value functions.  These
theoretical properties render them appealing for planning and learning, and
they have previously been used for state aggregation
\cite{ferns04metrics,ferns06methods}, policy transfer \cite{castro10using},
representation discovery \cite{ruan15representation}, and exploration
\cite{santara19extra}.

Unfortunately, these metrics are expensive to compute \cite{chen12complexity}
and require fully enumerating the states, even when using sampling-based
approximants \cite{ferns06methods}, on-the-fly approximants
\cite{comanici12onthefly,bacci13onthefly}, or approximants which exploit
structure in the state space \cite{bacci13computing}.  The full-state
enumeration requirement has thus far rendered bisimulation metrics impractical
in problems with large state spaces, and completely incompatible with
continuous state spaces. Additionally, bisimulation metrics can be overly {\em
pessimistic} in the sense that they consider worst-case differences between
states. Although desirable for certain applications, such as in guaranteeing
safe behaviors, it can prove overly restrictive for many practical problems of
interest.

In this paper we address these impracticalities with the following key
contributions:
\begin{enumerate}
  \item An {\em on-policy} variant of bisimulation which focuses only on the
    behavior of interest, rather than worst-case scenarios, along with an
    analysis of its theoretical properties.
  \item A new sampling-based online algorithm for exact computation of the
    original and on-policy bisimulation metrics with guaranteed convergence.
  \item A differentiable loss function for learning an approximation of the two
    bisimulation metrics using neural networks. We provide empirical evidence
    of this learning algorithm on MDPs with large and continuous state
    spaces. To the best of the author's knowledge, this is the first work
    proposing a mechanism for approximating bisimulation metrics with neural
    networks.
\end{enumerate}

% The paper is organized as follows. We begin by presenting necessary background
% in \autoref{sec:background}. In \autoref{sec:onpolicy_bisim} we introduce the
% new on-policy bisimulation metric and its theoretical properties; in
% \autoref{sec:deterministic_bisim} we analyse both bisimulation metrics in the
% context of deterministic MDPs. In \autoref{sec:online_bisim} we present a
% sampling-based online algorithm for computing these metrics for deterministic
% MDPs, along with theoretical guarantees of convergence.  Leveraging this
% result, in \autoref{sec:learn_bisim} we present a loss function and algorithm
% for learning an approximation to bisimulation metrics, along with an empirical
% evaluation in \autoref{sec:empirical}.  Finally, we discuss related work in
% \autoref{sec:related} and provide concluding remarks and discuss future avenues
% for research in \autoref{sec:conclusion}.

\section{Background}
\label{sec:background}
Given an MDP $\cM$, a {\em policy} $\pi:\cS\rightarrow \Delta(\cA)$ induces a
corresponding state-value function $V^{\pi}:\cS\rightarrow\rR$
\cite{puterman94mdp}: $V^{\pi}(s) = \rE_{a\sim\pi(s)} \left[ \cR(s, a) +
\gamma\rE_{s'\sim\cP(s, a)}V^{\pi}(s')\right]$.  In the control setting, we are
typically in search of the optimal value function~\cite{bellman57dp}:\\ $V^*(s)
= \max_{a\in\cA}\left[\cR(s, a) + \gamma\rE_{s'\sim\cP(s, a)}V^*(s')\right]$.

Bisimulation relations, originally introduced in the field of concurrency
theory, were adapted for MDPs by \citet{givan03equivalence}, capture a
strong form of behavioral equivalence: if $s,t\in\cS$ are bisimilar,
then $V^*(s) = V^*(t)$.

\begin{definition}
  Given an MDP $\cM$, an equivalence relation $E\subseteq\cS\times\cS$ is a
  {\bf bisimulation relation} if whenever $(s,t)\in E$ the following
  properties hold, where $\cS_{E}$ is the state space $\cS$ partitioned into
  equivalence classes defined by $E$:
  \begin{enumerate}
    \item $\forall a\in\cA,\cR(s, a)=\cR(t, a)$
    \item $\forall a\in\cA,\forall c\in\cS_{E},\cP(s, a)(c) = \cP(t, a)(c)$,
      where $\cP(s, a)(c) = \sum_{s'\in c}\cP(s, a)(s')$,
  \end{enumerate}
  Two states $s,t\in\cS$ are {\bf bisimilar} if there exists a bisimulation
  relation $E$ such that $(s,t)\in E$. We denote the largest\footnote{Note that
  there can be a number of equivalence relations satisfying these properties.
  The smallest is the identity relation, which is vacuously a bisimulation
  relation.} bisimulation relation as $\sim$.
\end{definition}

% \begin{figure}
%   \centering
%   \begin{tikzpicture}
%     \node[state] at (1.5, 2) (s) {$s$};
%     \node[state] (u) {$u$};
%     \node[state, right of=u] (v) {$v$};
%     \node[state] at (1.5, -2) (t) {$t$};
% 
%     \draw (s) edge[above] node{p} (u)
%           (s) edge[above] node{1-p} (v)
%           (t) edge[below] node{q} (u)
%           (t) edge[below] node{1-q} (v)
%           (u) edge[loop left] (u)
%           (v) edge[loop right] node{[1]} (v);
%   \end{tikzpicture}
%   \caption{A system where $s\sim t$ only if the transition probabilities
%   $p\equiv q$ (node $v$ is the only state where a non-zero reward is received).
%   If $p$ and $q$ are estimated from data it is highly unlikely they will match
%   exactly.}
%   \label{fig:brittle}
% \end{figure}

Equivalence relations can be brittle: they require exact equivalence under
probabilistic transitions. This can be especially problematic if we are
estimating transition probabilities from data, as it is highly unlikely they
will match exactly.

Extending the work of \citet{desharnais99metrics} for labeled Markov processes,
\citet{ferns04metrics} generalized the notion of MDP bisimulation relations to
metrics, yielding a smoother notion of similarity than equivalence relations.
Let $\rM$ be the set of all pseudometrics on $\cS$. A pseudometric $d\in\rM$
induces an equivalence relation $E_d := \lbrace (s, t) | d(s, t) = 0\rbrace$.
That is, any two states with distance $0$ will be collapsed onto the same
equivalence class.

\begin{definition}
  \cite{ferns04metrics}
  A pseudometric $d\in\rM$ is a {\bf bisimulation metric} if $E_d$ is $\sim$.
\end{definition}

Bisimulation metrics use the 1-Wasserstein metric
$\cW_1:\rM\rightarrow\rP$, where $\rP$ is the set of all metrics between
probability distributions over $\cS$. Given two state distributions $X,Y\in
\Delta(\cS)$ and a pseudometric $d\in\rM$, the Wasserstein $\cW_1(d)(X,
Y)$ can be expressed by the following (primal) linear program (LP),
which ``lifts'' a pseudometric $d\in\rM$ onto
one in $\rP$ \cite{villani08optimal}:
\begin{align}
  \label{eqn:primal_lp}
  \max_\textbf{u}\in\rR^{|\cS|} \sum_{s\in\cS}\left(X(s) - Y(s)\right)u_{s} \\
  \forall s, s'\in\cS,\textrm{ }u_{s} - u_{s'} \leq d(s, s') \nonumber \\
  0\leq \textbf{u}\leq 1 \nonumber
\end{align}
\begin{theorem}
  \label{thm:ferns04}
  \cite{ferns04metrics}:
  Define $\cF:\rM\rightarrow\rM$ by
  \begin{align}
    \label{eqn:functor}
    \cF&(d)(s, t) = \\
    &\max_{a\in\cA}\left(\left|\cR(s,a) - \cR(t,a)\right| + \gamma \cW_1(d)(\cP(s, a), \cP(t, a))\right) \nonumber
  \end{align}
  then $\cF$ has a unique fixed point, $d_{\sim}$, and $d_{\sim}$ is a
  bisimulation metric.
\end{theorem}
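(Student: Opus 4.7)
The plan is to invoke the Banach fixed point theorem on the space $(\rM, d_\infty)$ where $d_\infty(d_1, d_2) = \sup_{s,t \in \cS}|d_1(s,t) - d_2(s,t)|$, and then separately verify that the resulting fixed point induces precisely the bisimulation relation $\sim$.

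First I would check that $(\rM, d_\infty)$ is a complete metric space. This is essentially standard: pseudometrics are characterized by pointwise inequalities (nonnegativity, symmetry, triangle inequality, and $d(s,s)=0$), all of which are preserved under uniform limits, so any $d_\infty$-Cauchy sequence in $\rM$ converges to a pseudometric. Next I would verify that $\cF$ actually maps $\rM$ into $\rM$. Symmetry and $\cF(d)(s,s)=0$ are immediate; the triangle inequality follows because $|\cR(s,a)-\cR(t,a)|$ and $\cW_1(d)(\cP(s,a),\cP(t,a))$ each satisfy the triangle inequality in their state arguments (the latter because $\cW_1$ lifts a pseudometric to a pseudometric), and taking the maximum over $a$ preserves this.

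The main technical step is showing $\cF$ is a $\gamma$-contraction on $(\rM, d_\infty)$. I would establish the non-expansiveness of the Wasserstein lift: for any $X,Y\in\Delta(\cS)$ and $d_1,d_2\in\rM$,
\begin{equation*}
\left|\cW_1(d_1)(X,Y) - \cW_1(d_2)(X,Y)\right| \leq d_\infty(d_1,d_2).
\end{equation*}
This follows from the LP formulation in \eqref{eqn:primal_lp} (or its dual): the feasible region depends on $d$ only through the constraints $u_s - u_{s'} \leq d(s,s')$, and perturbing $d$ by at most $\varepsilon$ uniformly perturbs the optimum by at most $\varepsilon$. Combining this with the standard inequality $|\max_a f(a) - \max_a g(a)| \leq \max_a |f(a)-g(a)|$ applied to \eqref{eqn:functor} yields $d_\infty(\cF(d_1), \cF(d_2)) \leq \gamma\, d_\infty(d_1, d_2)$. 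Banach's theorem then gives a unique fixed point $d_\sim$.

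Finally I must show $E_{d_\sim} = \sim$, which I expect to be the subtlest part. For $\sim \,\subseteq E_{d_\sim}$, I would argue that the pseudometric $d^\sim$ defined by $d^\sim(s,t)=0$ if $s\sim t$ and $d^\sim(s,t)=\tfrac{R_{\max}}{1-\gamma}$ otherwise satisfies $\cF(d^\sim) \leq d^\sim$ pointwise on $\sim$-pairs (using that bisimilar states have equal rewards and identical distributions over $\sim$-classes, so the Wasserstein distance vanishes), and then iterate $\cF$ from $d^\sim$ to conclude $d_\sim(s,t)=0$ whenever $s\sim t$. For the reverse inclusion, I would show that the relation $E_{d_\sim}$ is itself a bisimulation relation: if $d_\sim(s,t)=0$, then from $\cF(d_\sim)=d_\sim$ we get $|\cR(s,a)-\cR(t,a)|=0$ for every $a$, and $\cW_1(d_\sim)(\cP(s,a),\cP(t,a))=0$. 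The latter, together with the fact that $\cW_1$ of a pseudometric vanishes exactly when the two distributions agree on the $\sigma$-algebra generated by $E_{d_\sim}$-equivalence classes, forces $\cP(s,a)(c)=\cP(t,a)(c)$ for every $E_{d_\sim}$-class $c$. Since $\sim$ is by definition the largest bisimulation relation, $E_{d_\sim}\subseteq\,\sim$, completing the equality. The main obstacle will be justifying this last Wasserstein-zero-implies-equal-mass-on-classes step cleanly, which I would do by exhibiting suitable indicator-like feasible dual variables in \eqref{eqn:primal_lp}.
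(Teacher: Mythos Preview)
Your proposal is correct, but it follows a genuinely different route from the one the paper relies on. The paper does not prove \autoref{thm:ferns04} itself (it is quoted from \cite{ferns04metrics}); however, its proof of the analogous \autoref{thm:pi_bisim_operator} explicitly ``mimics the proof of Theorem~4.5 from \cite{ferns04metrics}'', so we can read off the intended method there. That argument is order-theoretic rather than metric: one equips $\rM$ with the pointwise order, observes it is an $\omega$-cpo with bottom the everywhere-zero pseudometric, shows that $\cW_1$ (and hence $\cF$) is Scott-continuous, and invokes the Kleene fixed-point theorem to obtain a \emph{least} fixed point $d_\sim = \bigsqcup_n \cF^n(\bot)$. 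The identification of $E_{d_\sim}$ with $\sim$ is then handled separately, much as you outline.

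Your Banach-contraction route is equally valid and in some ways more economical for this paper's purposes: it yields uniqueness of the fixed point directly (the $\omega$-cpo argument only gives a least fixed point, so uniqueness requires an extra step), and it immediately delivers the geometric convergence rate $\|\cF^n(d_0)-d_\sim\|_\infty \le \gamma^n \|d_0 - d_\sim\|_\infty$, which the paper in fact uses right after stating the theorem (the $\lceil \ln\delta / \ln\gamma\rceil$ iteration bound) and again in the proof of \autoref{thm:online_bisim}. The order-theoretic proof, on the other hand, makes the monotone iterative construction $d_n = \cF^n(0)$ the primary object and does not need a completeness/boundedness argument for $(\rM,d_\infty)$. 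Your sketch of $E_{d_\sim}=\sim$ is fine; for the ``Wasserstein-zero implies equal mass on classes'' step, note that raw indicator functions of $E_{d_\sim}$-classes need not be feasible in \eqref{eqn:primal_lp}, but scaling them by $\varepsilon = \min\{d_\sim(s,s'): s\not\sim_{E_{d_\sim}} s'\} > 0$ (finite state space) makes them feasible and forces the class-mass equalities.
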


The operator $\cF$ can be used to iteratively compute a bisimulation metric as
follows. Starting from an initial estimate $d_0$, we can compute
$d_{n+1} = \cF(d_n) = \cF^{n+1}(d_0)$.
By iteratively applying $\cF$ $\lceil\frac{\ln\delta}{\ln\gamma}\rceil$ times,
one can compute $d_{\sim}$ up to an accuracy $\delta$, with an overall
complexity of $O\left(|\cA||\cS|^4\log |\cS|\frac{\ln\delta}{\ln
\gamma}\right)$.

\section{On-policy bisimulation}
\label{sec:onpolicy_bisim}
The strong theoretical guarantees of bisimulation relations and metrics are
largely due to their inherent ``pessimism'': they consider equivalence under
all actions, even pathologically bad ones (i.e. actions that never lead to positive
outcomes for the agent). Indeed, there exist systems where
$V^*(s)=V^*(t)$, but $d_{\sim}(s,t)$ can be arbitrarily large, providing no
useful bounds on the optimal behaviour from $s$ and $t$ (see
\autoref{fig:pessimism}). \citet{castro10using} also demonstrated that this
pessimism yields poor results when using bisimulation metrics for policy
transfer.

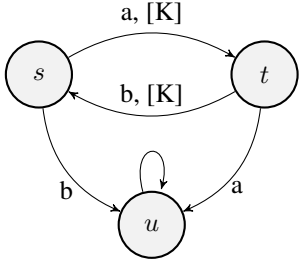
\begin{figure}
  \centering
  \begin{tikzpicture}
    \node[state] (s) {$s$};
    \node[state, right of=s] (t) {$t$};
    \node[state] at (1.5, -2) (u) {$u$};

    \draw (s) edge[bend left, above] node{a, [K]} (t)
          (s) edge[bend right, below] node{b} (u)
          % (t) edge[bend left, below] node{b, [K]} (s)
          (t) edge[bend left, above] node{b, [K]} (s)
          (t) edge[bend left, below] node{a} (u)
          %(u) edge[loop below] node{a, b} (u);
          (u) edge[loop above] (u);
  \end{tikzpicture}
  \caption{Edge labels indicate action ($\lbrace a,b\rbrace$) and non-zero
  rewards ($[K]$). When $\gamma=0.9$, $V^*(s)=V^*(t)=10K$, while $d_{\sim}(s,
  t)=10K$.  Lax bisimulation assigns distance $0$ between $s$ and $t$ (action
  $a$ from $s$ would be matched with action $b$ from $t$).}
  \label{fig:pessimism}
\end{figure}

Another disadvantage of bisimulation relations and metrics is that they are
computed via exact action matching between states; however, actions with the
same label may induce very different behaviours from different states,
resulting in an improper behavioral comparison when using bisimulation.
In the system in \autoref{fig:pessimism}, $s$ and $t$ have equal optimal
values, but their optimal action is different ($a$ from $s$, $b$ from $t$).
\citet{taylor09bounding} overcame this problem by the introduction of
lax-bisimulation metrics (definition and theoretical results provided in the
supplemental). We note, however, that their method is still susceptible to the
pessimism discussed above.

It is often the case that one is interested in behaviours relative
to a particular policy $\pi$. In reinforcement learning, for example, many
algorithms maintain a behaviour policy which is improved iteratively as the
agent interacts with the environment. In these situations the pessimism of
bisimulation can become a hindrance: if the action maximizing the distance
between two states is never chosen by $\pi$, we should not include it in the
computation!

We introduce a new notion of bisimulation, {\em on-policy
bisimulation}, defined relative to a policy $\pi$. This new notion
also removes the requirement of matching on action labels by considering the
dynamics induced by $\pi$, rather than the dynamics induced by each action.
We first define:
\begin{align*}
  \cR^{\pi}_s & := \sum_{a}\pi(a | s) \cR(s, a) \\
  \forall C\in \cS_{E^{\pi}}, \cP^{\pi}_s(C) & := \sum_{a}\pi(a | s)\sum_{s'\in C}P(s, a)(s')
\end{align*}

\begin{definition}
  \label{def:pi_bisim_rel}
  Given an MDP $\cM$, an equivalence relation $E^{\pi}\subseteq\cS\times\cS$ is
  a {\bf $\pi$-bisimulation relation} if whenever $(s,t)\in E^{\pi}$ the
  following properties hold:
  \begin{enumerate}
    \item $\cR^{\pi}_s = \cR^{\pi}_{t}$
    \item $\forall C\in \cS_{E^{\pi}}, \cP^{\pi}_s(C) = P^{\pi}_{t}(C)$
  \end{enumerate}
  Two states $s,t\in\cS$ are {\bf $\pi$-bisimilar} if there exists a
  $\pi$-bisimulation relation $E^{\pi}$ such that $(s,t)\in E^{\pi}$. Denoting
  the largest bisimulation relation as $\sim_{\pi}$, $d\in\rM$ is a {\bf
  $\pi$-bisimulation metric} if $E_d$ is $\sim_{\pi}$.
\end{definition}

\begin{restatable}{theorem}{pibisimop}
  \label{thm:pi_bisim_operator}
  Define $\cF^{\pi}:\mathcal{M}\rightarrow\mathcal{M}$ by
  $\cF^{\pi}(d)(s, t) = |\cR^{\pi}_s - \cR^{\pi}_{t}| + \gamma
  \cW_1(d)(\cP^{\pi}_s, \cP^{\pi}_{t})$, 
  then $\cF^{\pi}$ has a least fixed point $d^{\pi}_{\sim}$, and
  $d^{\pi}_{\sim}$ is a $\pi$-bisimulation metric.
\end{restatable}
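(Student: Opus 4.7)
The plan is to mirror the two-part strategy Ferns et al.\ use for $\cF$, specialised to the policy-induced dynamics.

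First, I would argue that $\cF^{\pi}$ has a fixed point $d^{\pi}_{\sim}$ and that it is least. One checks that $\cF^{\pi}$ maps $\rM$ into itself: symmetry and vanishing on the diagonal are immediate from the corresponding properties of $|\cdot|$ and $\cW_1$, and the triangle inequality is inherited term-by-term from the triangle inequalities for absolute value and $\cW_1$. Next, $\cF^{\pi}$ is a $\gamma$-contraction in the sup-norm: the only $d$-dependent summand is $\gamma\cW_1(d)(\cP^{\pi}_s,\cP^{\pi}_t)$, and the standard Wasserstein continuity bound $|\cW_1(d)(\mu,\nu) - \cW_1(d')(\mu,\nu)| \leq \|d - d'\|_{\infty}$ (an immediate consequence of the primal transportation-coupling formulation dual to \eqref{eqn:primal_lp}) gives $\|\cF^{\pi}(d) - \cF^{\pi}(d')\|_{\infty} \leq \gamma\|d-d'\|_{\infty}$. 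Since $(\rM,\|\cdot\|_{\infty})$ is complete, Banach's theorem yields a unique fixed point $d^{\pi}_{\sim}$; because $\cF^{\pi}$ is also monotone in the pointwise order (Wasserstein is monotone in the underlying pseudometric), Picard iteration from the zero pseudometric produces a pointwise-increasing chain converging to $d^{\pi}_{\sim}$, which is therefore the least fixed point.

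Second, I would verify $E_{d^{\pi}_{\sim}} = \sim_{\pi}$. For $E_{d^{\pi}_{\sim}} \subseteq \sim_{\pi}$, I would show that $E_{d^{\pi}_{\sim}}$ is itself a $\pi$-bisimulation relation. If $d^{\pi}_{\sim}(s,t) = 0$, the fixed-point equation forces $|\cR^{\pi}_s - \cR^{\pi}_t| = 0$ (giving condition (1)) and $\cW_1(d^{\pi}_{\sim})(\cP^{\pi}_s,\cP^{\pi}_t) = 0$. The latter, combined with the standard fact that $\cW_1(d)(\mu,\nu) = 0$ implies $\mu$ and $\nu$ assign equal mass to every class of $E_d$ (shown by projecting to the quotient, on which $d$ becomes a genuine metric), yields condition (2). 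For $\sim_{\pi} \subseteq E_{d^{\pi}_{\sim}}$, given any $\pi$-bisimulation $E$, I would prove by induction on $n$ that the Picard iterates $d_n = (\cF^{\pi})^n(0)$ satisfy $d_n(s,t) = 0$ for all $(s,t) \in E$: the base case is trivial; for the step, if $E \subseteq E_{d_n}$ then every $E_{d_n}$-class is a union of $E$-classes, so the equal-mass condition $\cP^{\pi}_s(C) = \cP^{\pi}_t(C)$ for $C \in \cS_E$ lifts to equality on $E_{d_n}$-classes, forcing $\cW_1(d_n)(\cP^{\pi}_s, \cP^{\pi}_t) = 0$ (since any LP-feasible test function is constant on $E_{d_n}$-classes), and combining with $\cR^{\pi}_s = \cR^{\pi}_t$ gives $d_{n+1}(s,t) = 0$. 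Taking $n \to \infty$ yields $d^{\pi}_{\sim}(s,t) = 0$.

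The main obstacle is the Wasserstein-zero characterisation invoked in both directions: pinning down the interaction between $\cW_1(d)$ and the equivalence $E_d$ is the only place the specific LP structure of \eqref{eqn:primal_lp} (including its $[0,1]$ normalisation) has to be handled with care. Everything else is a direct transfer of the Ferns et al.\ template, with the outer $\max_{a}$ replaced throughout by the $\pi$-weighted averages appearing in $\cR^{\pi}$ and $\cP^{\pi}$.
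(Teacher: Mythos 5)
Your proposal is correct, but it reaches the fixed point by a different road than the paper. The paper follows Ferns et al.'s Theorem~4.5 verbatim: it equips $\rM$ with the pointwise order, observes this is an $\omega$-cpo with bottom the zero pseudometric, shows $\cF^{\pi}$ is (Scott-)continuous using the continuity of $\cW$, and invokes the Kleene fixed-point theorem to obtain the least fixed point as $\bigsqcup_n (\cF^{\pi})^n(\bot)$; the verification that $E_{d^{\pi}_{\sim}} = \sim_{\pi}$ is then deferred to the same two-inclusion argument you sketch. You instead establish existence via the Banach fixed-point theorem, using the nonexpansiveness bound $|\cW_1(d)(\mu,\nu) - \cW_1(d')(\mu,\nu)| \leq \|d - d'\|_{\infty}$ to get a $\gamma$-contraction on the complete space of pseudometrics. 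Both are sound under the paper's standing assumption $\gamma \in [0,1)$. Your route is more elementary and buys strictly more: uniqueness of the fixed point (which makes your separate monotone-Picard argument for leastness redundant) and a geometric convergence rate for the iterates. The paper's domain-theoretic route is the one that survives at $\gamma = 1$, where $\cF^{\pi}$ is only nonexpansive, and it is what directly justifies the ``least fixed point'' phrasing in the theorem statement; that is presumably why the author kept Ferns et al.'s machinery. Your handling of the second half --- both inclusions of $E_{d^{\pi}_{\sim}} = \sim_{\pi}$ via the characterisation of $\cW_1(d)(\mu,\nu)=0$ in terms of equal mass on $E_d$-classes, and the induction transferring a $\pi$-bisimulation $E$ through the Picard iterates --- matches the structure of the argument the paper inherits from Ferns et al.\ (their Lemma~4.1), and you correctly flag that this Wasserstein-zero characterisation is the one step requiring care with the LP in \eqref{eqn:primal_lp}.
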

\begin{proof}
  (Sketch) This proof mimics the proof of Theorem~4.5 from
  \cite{ferns04metrics}. All complete proofs are provided in the supplemental
  material.
\end{proof}

The following result demonstrates that $\pi$-bisimulation metrics provide
similar theoretical guarantees as regular bisimulation metrics, but with
respect to the value function induced by $\pi$.
\begin{restatable}{theorem}{pibisimbound}
  Given any two states $s,t\in\cS$ in an MDP $\cM$, $|V^{\pi}(s) - V^{\pi}(t) |
  \leq d^{\pi}_{\sim}(s, t)$.
\end{restatable}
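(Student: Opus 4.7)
The plan is to prove the inequality by parallel induction on the iterates of the two contraction operators at play: the operator $\cF^{\pi}$ from \autoref{thm:pi_bisim_operator} acting on pseudometrics, and the policy-evaluation Bellman operator $T^{\pi}$ defined by $(T^{\pi} V)(s) = \cR^{\pi}_s + \gamma \sum_{s'} \cP^{\pi}_s(s') V(s')$. Starting from $d_0 \equiv 0$ and $V^{\pi}_0 \equiv 0$, I would set $d_{n+1} = \cF^{\pi}(d_n)$ and $V^{\pi}_{n+1} = T^{\pi} V^{\pi}_n$, and aim to establish by induction that $|V^{\pi}_n(s) - V^{\pi}_n(t)| \leq d_n(s, t)$ for every $n \geq 0$ and every $s, t \in \cS$.

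The base case is trivial. For the inductive step, expand using the Bellman equations:
\begin{align*}
V^{\pi}_{n+1}(s) - V^{\pi}_{n+1}(t) = (\cR^{\pi}_s - \cR^{\pi}_t) + \gamma \sum_{s'} (\cP^{\pi}_s(s') - \cP^{\pi}_t(s')) V^{\pi}_n(s').
\end{align*}
The inductive hypothesis says that $V^{\pi}_n$ is 1-Lipschitz with respect to $d_n$. The Kantorovich--Rubinstein dual formulation (essentially the dual of the LP in \eqref{eqn:primal_lp}) then bounds $\left|\sum_{s'}(\cP^{\pi}_s(s') - \cP^{\pi}_t(s')) V^{\pi}_n(s')\right|$ by $\cW_1(d_n)(\cP^{\pi}_s, \cP^{\pi}_t)$. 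Combined with the triangle inequality on the reward term, this gives $|V^{\pi}_{n+1}(s) - V^{\pi}_{n+1}(t)| \leq \cF^{\pi}(d_n)(s, t) = d_{n+1}(s, t)$.

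To conclude, pass to the limit. Standard value-iteration arguments yield $V^{\pi}_n \to V^{\pi}$ pointwise. For the pseudometrics, monotonicity of $\cF^{\pi}$ combined with $d_0 = 0 \leq d^{\pi}_{\sim}$ implies that $(d_n)$ is pointwise nondecreasing and uniformly bounded above by $d^{\pi}_{\sim}$, so it converges to some $d_{\infty}$; by continuity $d_{\infty}$ is a fixed point of $\cF^{\pi}$ lying below the least fixed point, forcing $d_{\infty} = d^{\pi}_{\sim}$. Taking limits in the inductive inequality yields the stated bound.

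The main obstacle, I expect, is the careful invocation of Kantorovich--Rubinstein duality in conjunction with the LP \eqref{eqn:primal_lp}, which imposes the box constraint $0 \leq \mathbf{u} \leq 1$ on top of the Lipschitz constraints. Because the iterates $V^{\pi}_n$ live in $[-R_{\max}/(1-\gamma), R_{\max}/(1-\gamma)]$ rather than $[0,1]$, one must shift and rescale them into the LP's feasible region, using that the signed sum is invariant under constant shifts of the test function (both measures are probabilities) and scales homogeneously, so the normalization cancels when one multiplies through by the normalizing constant. A smaller wrinkle is to formally identify the limit of $(d_n)$ with $d^{\pi}_{\sim}$ given that \autoref{thm:pi_bisim_operator} only asserts $d^{\pi}_{\sim}$ to be the \emph{least} fixed point.
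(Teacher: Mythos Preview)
Your approach is essentially identical to the paper's: parallel induction on the iterates $V^{\pi}_n$ and $d^{\pi}_n$, with the inductive hypothesis certifying that $V^{\pi}_n$ is feasible for the Kantorovich dual (the LP~\eqref{eqn:primal_lp}) so that the transition term is bounded by $\gamma\cW_1(d_n)(\cP^{\pi}_s,\cP^{\pi}_t)$. You are in fact more careful than the paper on two points it leaves implicit---the box constraint $0\le\mathbf{u}\le 1$ in \eqref{eqn:primal_lp} (your shift-and-rescale argument is the right fix) and the identification of $\lim_n d_n$ with the least fixed point $d^{\pi}_{\sim}$---so there is no gap.
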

\begin{proof}
  (Sketch) This is proved by induction. The result follows by expanding
  $V^{\pi}$, taking the absolute value of each term separately, and noticing
  that $V^{\pi}$ is a feasible solution to the primal LP in
  \autoref{eqn:primal_lp}, so is upper-bounded by $\cW_1(d^{\pi}_n)$.
\end{proof}

Under a fixed policy $\pi$, an MDP reduces to a Markov chain. Bisimulation
relations for Markov chains have previously been studied in concurrency theory
\cite{baier06bisimulation,katoen07bisimulation}.  Further, $\pi$-bisimulation
can be used to define a notion of {\em weak}-bisimulation for MDPs
\cite{baier06bisimulation,fioriti16deciding}.

\section{Bisimulation metrics for deterministic MDPs}
\label{sec:deterministic_bisim}
In this section we investigate the properties of deterministic MDPs, which in
concurrency theory are known as transition systems
\cite{sangiorgi11introduction}.

\begin{definition}
  A {\em deterministic} MDP $\cM$ is one where for all $s\in\cS,a\in\cA$,
  there exists a unique $\cN(s,a)\in\cS$ such that $\cP(s, a)(\cN(s,a))=1$.
\end{definition}

As the next lemma shows, under a system with deterministic transitions,
computing the Wasserstein metric (and approximants) is no longer necessary.

\begin{restatable}{lemma}{kantisd}
  \label{lemma:kant_is_d}
  Given a deterministic MDP $\cM$, for any two states $s,t\in\cS$, action
  $a\in\cA$, and pseudometric $d\in\rM$, $\cW_1(d)(\cP(s, a), \cP(t, a))=d(\cN(s,
  a), \cN(t, a))$.
\end{restatable}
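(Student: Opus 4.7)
The plan is to exploit the fact that in a deterministic MDP, each transition distribution $\cP(s,a)$ is a Dirac measure concentrated on a single successor. Setting $s' := \cN(s,a)$ and $t' := \cN(t,a)$, we have $\cP(s,a) = \delta_{s'}$ and $\cP(t,a) = \delta_{t'}$, so the claim reduces to computing $\cW_1(d)(\delta_{s'}, \delta_{t'})$ and showing it equals $d(s',t')$.

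The cleanest route is via the well-known coupling (transportation) formulation of the Wasserstein distance, which is dual to the LP in (\ref{eqn:primal_lp}): $\cW_1(d)(X,Y) = \inf_\mu \sum_{x,y} \mu(x,y)\, d(x,y)$, where $\mu$ ranges over all joint distributions on $\cS \times \cS$ with marginals $X$ and $Y$. When both marginals are point masses, the set of feasible couplings contains exactly one element, namely the product measure $\delta_{s'} \otimes \delta_{t'}$ placing all mass on the single pair $(s',t')$. Evaluating the objective at this unique coupling gives $\cW_1(d)(\cP(s,a), \cP(t,a)) = d(s',t') = d(\cN(s,a), \cN(t,a))$, which is what we need.

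As a sanity check, one can also reason directly with the primal LP in (\ref{eqn:primal_lp}): with $X-Y$ supported on $\{s',t'\}$ (and vanishing entirely if $s'=t'$, in which case both sides equal $0$ by the pseudometric property), the objective collapses to $u_{s'} - u_{t'}$, which is bounded above by $d(s',t')$ through the Lipschitz constraint taken with $(s,s') = (s',t')$; conversely, the feasible point $u_x := d(x,t')$ (clipped into $[0,1]$ if necessary) satisfies all Lipschitz constraints by the triangle inequality and attains this bound, giving equality.

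I do not anticipate any real obstacle: the lemma is essentially the textbook observation that the $1$-Wasserstein distance between two point masses is exactly the ground distance between their support points, and determinism is precisely the hypothesis that turns $\cP(s,a)$ and $\cP(t,a)$ into point masses.
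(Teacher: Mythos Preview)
Your proposal is correct and follows essentially the same route as the paper: both argue via the transportation (dual) formulation of $\cW_1$, observing that when the marginals are Dirac masses the only feasible coupling places all its mass on $(\cN(s,a),\cN(t,a))$, which forces the objective value to equal $d(\cN(s,a),\cN(t,a))$. Your primal-side sanity check is an extra flourish the paper does not include, but the core argument coincides.
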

\begin{proof}
  (Sketch) The result follows by considering the dual formulation of the primal LP
  in \autoref{eqn:primal_lp}, which implies the dual variables $\lambda_{s,t}$
  must all be either $1$ or $0$, by virtue of determinism.
\end{proof}

By considering only deterministic policies (e.g. policies that assign
probability $1$ to a single action) in the on-policy case,
\autoref{lemma:kant_is_d} allows us to rewrite the operator $\cF(d)(s, t)$ in \autoref{thm:ferns04}
and $\cF^{\pi}(d)(s, t)$ in \autoref{thm:pi_bisim_operator} as:\\
$\max_{a\in\cA}\left(\left|\cR(s,a) - \cR(t,a)\right| + \gamma d(\cN(s, a), \cN(t, a))\right)$ and\\ 
$|\cR(s, \pi(s)) - \cR(t, \pi(t))| + \gamma d(\cN(s, \pi(s)), \cN(t, \pi(t)))$, respectively.
Note the close resemblance to value functions, there is in fact
a strong connection between the two: \citet{ferns14bisimulation} proved that
$d_{\sim}$ is the optimal value function of an optimal coupling of two copies
of the original MDP.

Even in the deterministic setting, the computation of bisimulation
metrics can be intractable in MDPs with very large or continuous state
spaces. In the next sections we will leverage the results just presented to
introduce new algorithms that are able to scale to large state spaces and
learn an approximant for continuous state spaces.

\section{Computing bisimulation metrics with sampled transitions}
\label{sec:online_bisim}
We present the algorithm and results in this section for the original
bisimulation metric, $d_{\sim}$, but all the results presented here hold for
the on-policy variant $d^{\pi}_{\sim}$; the main difference is that actions in
the trajectory are given by $\pi$ and thus, may differ between states being
compared.

The update operator $\cF$ is generally applied in a dynamic-programming fashion:
all state-pairs are updated in each iteration by considering all possible
actions. However, requiring access to all state-pairs and actions in each
iteration is often not possible, especially when data is concurrently being
collected by an agent interacting with an environment. In this section we
present an algorithm for computing the bisimulation metric via access to
\emph{transition samples}. Specifically, assume we are able to sample pairs of
transitions $\lbrace\langle s, a, \cR(s, a), \cN(s, a) \rangle, \langle t, a,
\cR(t, a), \cN(t, a)\rangle\rbrace$ from an underlying distribution $\cD$ (note
the action is the same for both). This can be, for instance, a uniform
distribution over all transitions in a replay memory \cite{mnih15humanlevel}
or some other sampling procedure. Let $\cT$ be the set of all pairs of
valid transitions; for legibility we will use the shorthand
$\tau_{s,t,a}\in\cT$ to denote a pair of transitions from states $s, t\in\cS$
under action $a\in\cA$. We assume that $\cD(\tau) > 0$ for all $\tau\in\cT$.

We first define an iterative procedure for computing $d_{\sim}$ by sampling
from $\cD$.  Let $d_0\equiv 0$ be the everywhere-zero metric. At step $n$, let
$\tau_{s_n,t_n,a_n}\in\cT$ be a sample from $\cD$ and define $d_n$ as:
\begin{align}
  \label{eqn:online_update}
  d_n(s, t) & = d_{n-1}(s, t),\qquad \forall s\neq s_n, t\neq t_n \nonumber \\
  d_n(s_n, t_n) & = \max
  \begin{bmatrix}
    d_{n-1}(s_n, t_n), \\
   |\cR(s_n, a_n) - \cR(t_n, a_n)| + \\
    \quad\gamma d_{n-1}(\cN(s_n, a_n), \cN(t_n, a_n))
  \end{bmatrix}
\end{align}

In words, when we sample a pair of states, we only update the distance estimate
for these two states if applying the $\cF$ operator gives us a larger estimate.
Otherwise, our estimate remains unchanged.

\begin{restatable}{theorem}{onlinebisim}
  \label{thm:online_bisim}
  If $d_n$ is updated as in \autoref{eqn:online_update} and $d_0\equiv 0$,
  $\lim_{n\rightarrow\infty}d_n = d_{\sim}$ almost surely.
\end{restatable}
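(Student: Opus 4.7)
The plan is to show that $d_n$ converges pointwise to the unique fixed point $d_\sim$ by sandwiching the limit between two inequalities. Concretely, I will establish that (a) $d_n$ is pointwise monotone increasing and uniformly bounded above by $d_\sim$, hence converges to some $d_\infty \leq d_\sim$, and (b) almost surely $d_\infty \geq \cF(d_\infty)$, from which contractivity of $\cF$ forces $d_\infty = d_\sim$.

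For (a), the max in \autoref{eqn:online_update} immediately yields $d_n(s,t) \geq d_{n-1}(s,t)$ for every pair. An induction on $n$ then shows $d_n \leq d_\sim$ pointwise: the base case is trivial since $d_0 \equiv 0 \leq d_\sim$, and for the inductive step one uses $d_\sim = \cF(d_\sim)$ together with the pointwise monotonicity of $\cF$ (if $d \leq d'$ then $\cF(d) \leq \cF(d')$, which in the deterministic setting is immediate from \autoref{lemma:kant_is_d}) to bound the candidate quantity $|\cR(s_n,a_n) - \cR(t_n,a_n)| + \gamma d_{n-1}(\cN(s_n,a_n),\cN(t_n,a_n))$ by $d_\sim(s_n,t_n)$. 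Being monotone and bounded, $d_n(s,t)$ converges pointwise to some $d_\infty(s,t) \leq d_\sim(s,t)$ for every pair.

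For (b), since $\cT$ is finite (as $\cS,\cA$ are finite) and $\cD(\tau) > 0$ for all $\tau \in \cT$, the second Borel--Cantelli lemma gives that, almost surely, every $\tau_{s,t,a} \in \cT$ is sampled infinitely often. On this probability-one event, fix $(s,t,a)$ and restrict to the subsequence $n_k$ with $(s_{n_k}, t_{n_k}, a_{n_k}) = (s,t,a)$; the update rule gives
\[
d_{n_k}(s,t) \;\geq\; |\cR(s,a) - \cR(t,a)| + \gamma\, d_{n_k - 1}(\cN(s,a), \cN(t,a)).
\]
Pointwise convergence at both $(s,t)$ and $(\cN(s,a),\cN(t,a))$ lets me take $k \to \infty$ to obtain $d_\infty(s,t) \geq |\cR(s,a) - \cR(t,a)| + \gamma\, d_\infty(\cN(s,a), \cN(t,a))$, and maximizing over $a$ yields $d_\infty \geq \cF(d_\infty)$. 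Combined with $d_\infty \leq d_\sim$ and monotonicity of $\cF$, the iterates $\cF^k(d_\infty)$ form a decreasing sequence; since $\cF$ is a $\gamma$-contraction with unique fixed point $d_\sim$ (\autoref{thm:ferns04}), $\cF^k(d_\infty) \to d_\sim$, giving $d_\sim \leq d_\infty$, and hence $d_\infty = d_\sim$.

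The main technical obstacle I anticipate is justifying the simultaneous limit passage in step (b): the right-hand side of the update references $d_{n_k-1}$ at the shifted pair $(\cN(s,a),\cN(t,a))$, so I need pointwise convergence of $d_n$ to $d_\infty$ at \emph{every} pair at once, which is precisely why the monotone-and-bounded argument in (a) must be established uniformly over all state pairs before invoking Borel--Cantelli and passing to the subsequential limit.
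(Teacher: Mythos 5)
Your proof is correct, and although it shares the paper's monotonicity step (your part (a) is essentially the same induction as \autoref{lemma:monotonic}, in fact stated more cleanly, since the paper's version needlessly conditions on the pair having been sampled), the convergence argument is genuinely different. The paper proves a ``maximizing action'' lemma (\autoref{lemma:max_act_exists}) showing each $\tau_{s,t,a}$ is sampled at least once with probability $1-\delta$ by a finite time $n^*$, and then argues that for $n>n^*$ the error satisfies $\|d_\sim - d_n\|_\infty \le \gamma\|d_\sim - d_{n-1}\|_\infty$ with high probability, concluding via the Banach fixed-point theorem. You instead extract the pointwise monotone limit $d_\infty \le d_\sim$ first, upgrade the sampling statement to ``infinitely often, almost surely'' via Borel--Cantelli, pass to the limit along the subsequence of updates at each fixed $(s,t,a)$ to conclude $d_\infty \ge \cF(d_\infty)$, and squeeze using $\cF^k(d_\infty)\to d_\sim$. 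Your route buys rigor: the paper's per-step contraction claim is delicate because only the single sampled pair is updated at step $n$ (so the displayed identity rewriting $d_n(s,t)$ as an $\cF$-style update holds only for pairs that have already received a non-vacuous update, and the $1-\delta$ events must be managed uniformly over infinitely many steps to yield an almost-sure conclusion), whereas your monotone-limit/super-solution argument handles the limit interchange cleanly and delivers the almost-sure statement directly. What you give up is any rate: the contraction-style argument, where it goes through, suggests geometric decay of the error once all pairs have been visited. One small point: you cite \autoref{thm:ferns04} for $\cF$ being a $\gamma$-contraction, but that theorem as stated only asserts a unique fixed point; the contraction property (immediate in the deterministic case from \autoref{lemma:kant_is_d}, or from $1$-Lipschitzness of $\cW_1$ in $d$ in general) should be stated explicitly, since it is exactly what drives $\cF^k(d_\infty)\to d_\sim$ in your final step.
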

\begin{proof}
  (Sketch) We first show that since we are sampling state pairs and actions
  infinitely often, all state pairs will receive a non-vacuous update at least
  once (Maximizing action lemma); then show that $d_n\leq d_{\sim}$ for all
  $n\in\rN$ (Monotonicity lemma). We then use these two results to show that the
  difference $\| d_{\sim} - d_n\|_{\infty}$ is a contraction and the result
  follows by the Banach fixed-point theorem.
  Note that the maximizing action lemma as presented here is for the original
  bisimulation metric; for the on-policy variant, the equivalent result is
  that all states in the Markov chain induced by $\pi$ are updated infinitely
  often.

\end{proof}

\section{Learning an approximation}
\label{sec:learn_bisim}
We leverage the sampling approach from the previous section to devise a
learning algorithm for approximating $d_{\sim}$ and $d^{\pi}_{\sim}$ for MDPs
with large (or continuous) state spaces, using function approximators in the
form of neural networks. Let $\phi:\cS\rightarrow\rR^k$ be a $k$-dimensional
representation of the state space and let $\psi_\theta:\rR^{2k}\rightarrow \rR$
be a neural network parameterized by $\theta$, that receives a concatenation of
two state representations such that $\psi_\theta([\phi(s), \phi(t)])\approx
d_{\sim}(s, t)$ (see \autoref{fig:network}).

\begin{figure}[!b]
  \centering
  \includegraphics[width=0.45\textwidth]{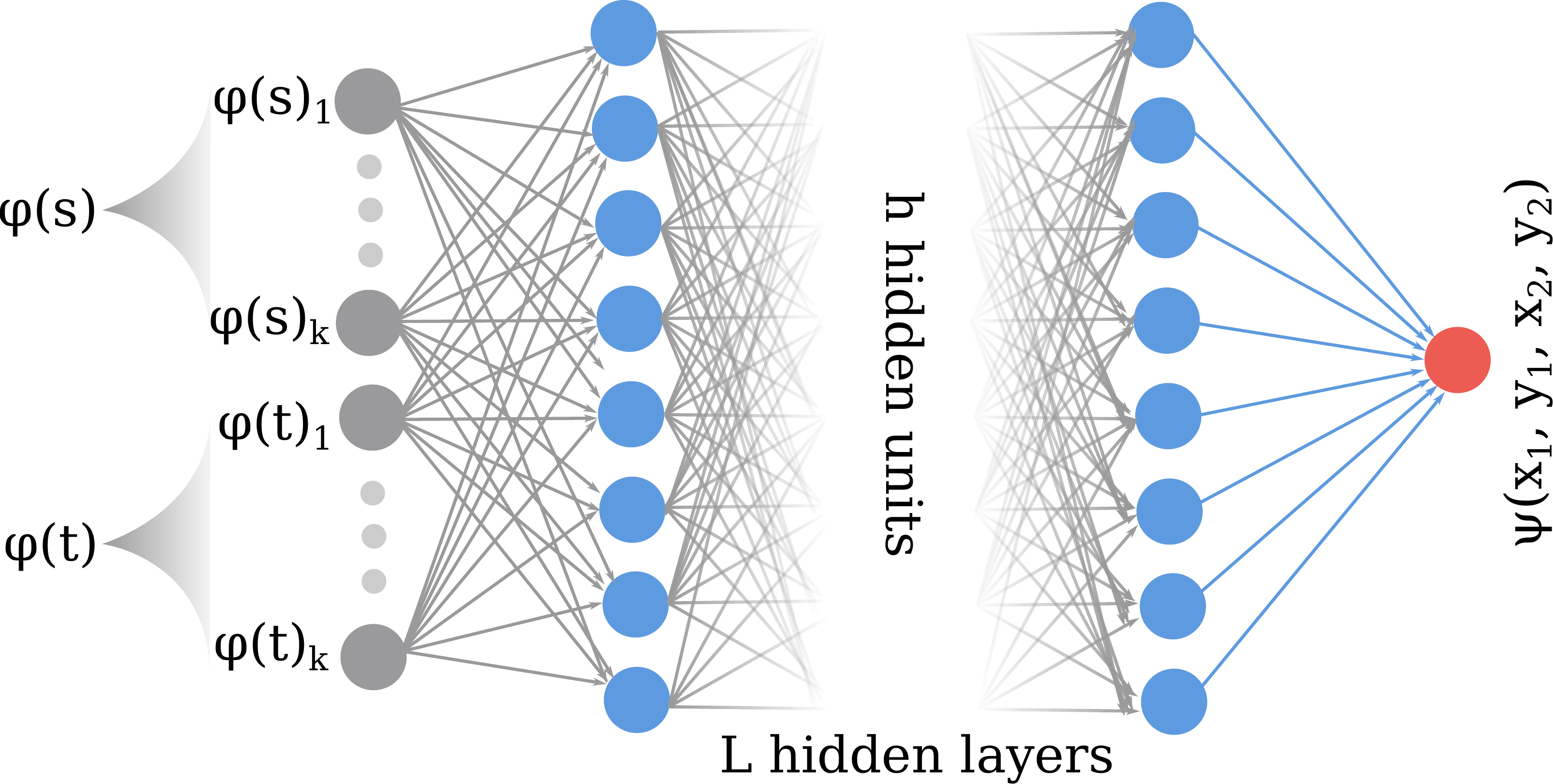}
  \caption{Using a neural network for learning $\psi$ as an approximant to
  $d_{\sim}$ or $d^{\pi}_{\sim}$.}
  \label{fig:network}
\end{figure}

Following the practice introduced by \cite{mnih15humanlevel} we make use of
online parameters $\theta$ and target parameters $\theta^{-}$, where the online
parameters are updated at each iteration while the target parameters are
updated every $C$ iterations. Given a pair of states $s\ne t$ and action
$a\in\cA$, at iteration $i$ we define the target objective
$\mathbf{T}_{\theta^{-}_i}(s, t, a)$ for $d_{\sim}$ as:
\small
\begin{align*}
  \max
  \begin{bmatrix}
    |\cR(s, a) - \cR(t, a)| + \gamma \psi_{\theta^{-}_i}([\phi(\cN(s, a)), \phi(\cN(t, a))]), \\
    \psi_{\theta^{-}_i}([\phi(s), \phi(t)])
  \end{bmatrix}
\end{align*}
\normalsize
and equal to $0$ whenever $s=t$. The target objective
$\mathbf{T}^{\pi}_{\theta^{-}_i}(s, t)$ for $d^{\pi}_{\sim}$ is:
\begin{align*}
  |\cR(s, \pi(s)) & - \cR(t, \pi(t))| + \\
  & \gamma \psi^{\pi}_{\theta^{-}_i}([\phi(\cN(s, \pi(s))), \phi(\cN(t, \pi(t)))])
\end{align*}

We can then define our loss $\cL^{(\pi)}_{s,t,a}$ as: $\rE_{\cD}\left(
\mathbf{T}^{(\pi)}_{\theta^{-}_i}(s, t, a) - \psi^{(\pi)}_{\theta_i}([\phi(s),
\phi(t)]) \right)^2$.  This loss is specified for a single pair of transitions,
but we can define an analogous target and loss with mini-batches, which allows
us to train our approximant more efficiently using specialized hardware such as
GPUs:
\begin{align*}
  \mathbf{T} & = (1 - \mathbf{I}) \otimes \max\left(\mathbf{R}^2 +
  \gamma\beta\psi_{\theta^{-}_i}(\mathbf{N}^2), \beta\psi_{\theta^{-}_i}(\mathbf{S}^2)\right) \\
  \cL_i(\theta_i) & = \rE_{\cD}\left[\mathbf{W}\otimes\left(\psi_{\theta_i}(\mathbf{S}^2) - \mathbf{T} \right)^2\right] \\
  \mathbf{T}^{\pi} & = (1 - \mathbf{I}) \left( \mathbf{R}^2 + \gamma\beta\psi^{\pi}_{\theta^{-}_i}(\mathbf{N}^2)\right) \\
  \cL^{\pi}_i(\theta_i) & = \rE_{\cD}\left[\left(\psi^{\pi}_{\theta_i}(\mathbf{S}^2) - \mathbf{T} \right)^2\right]
\end{align*}

\begin{figure}[!t]
  \centering
  \includegraphics[width=0.25\textwidth]{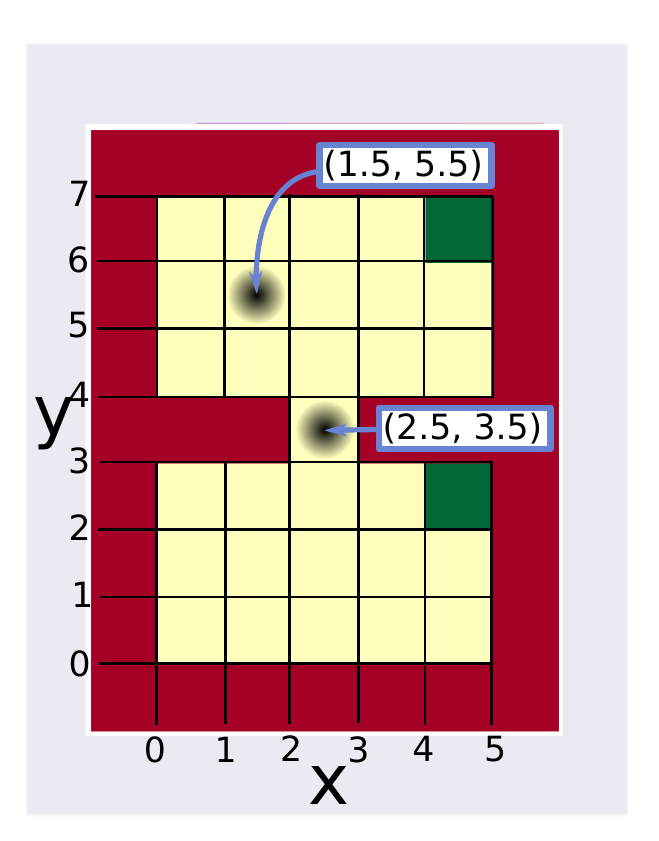}
  \caption{GridWorld and its (x, y) representation. Black densities
  illustrate the sampled states when adding noise.}
  \label{fig:mirrored_rooms}
\end{figure}

$\mathbf{R}^2$, $\mathbf{S}^2$, and $\mathbf{N}^2$ are batches of rewards,
states, and next-states, respectively, $\mathbf{W}$ is a mask used to enforce
action matching when approximating $d_{\sim}$, and $\mathbf{I}$ is the identity
matrix. $\psi(\mathbf{X})$ indicates applying $\psi$ to a matrix $\mathbf{X}$
elementwise, and $\otimes$ stands for the Hadamard product.  We multiply by $(1
- \mathbf{I})$ to zero out the diagonals, since those represent approximations
to $d^{(\pi)}_{\sim}(s, s)\equiv 0$.  The parameter $\beta$ is a stability
parameter that begins at $0$ and is incremented towards $1$ every $C$
iterations. Its purpose is to gradually ``grow'' the effective horizon of the
bisimulation backup and maximization. This is helpful since the approximant
$\psi_\theta$ can have some variance initially, depending on how $\theta$ is
initialized. Further, \citet{jiang15dependence} demonstrate that using shorter
horizons during planning can often be better than using the true horizon,
especially when using a model estimated from data.  Note that, in general, the
approximant $\psi$ is not guaranteed to be a proper pseudometric.  A lengthier
discussion, including the derivation of these matrices, is provided in the
supplemental material.

\begin{figure*}[!t]
  \begin{subfigure}[t]{0.5\textwidth}
    \centering\captionsetup{width=.8\linewidth}%
    \includegraphics[width=\textwidth]{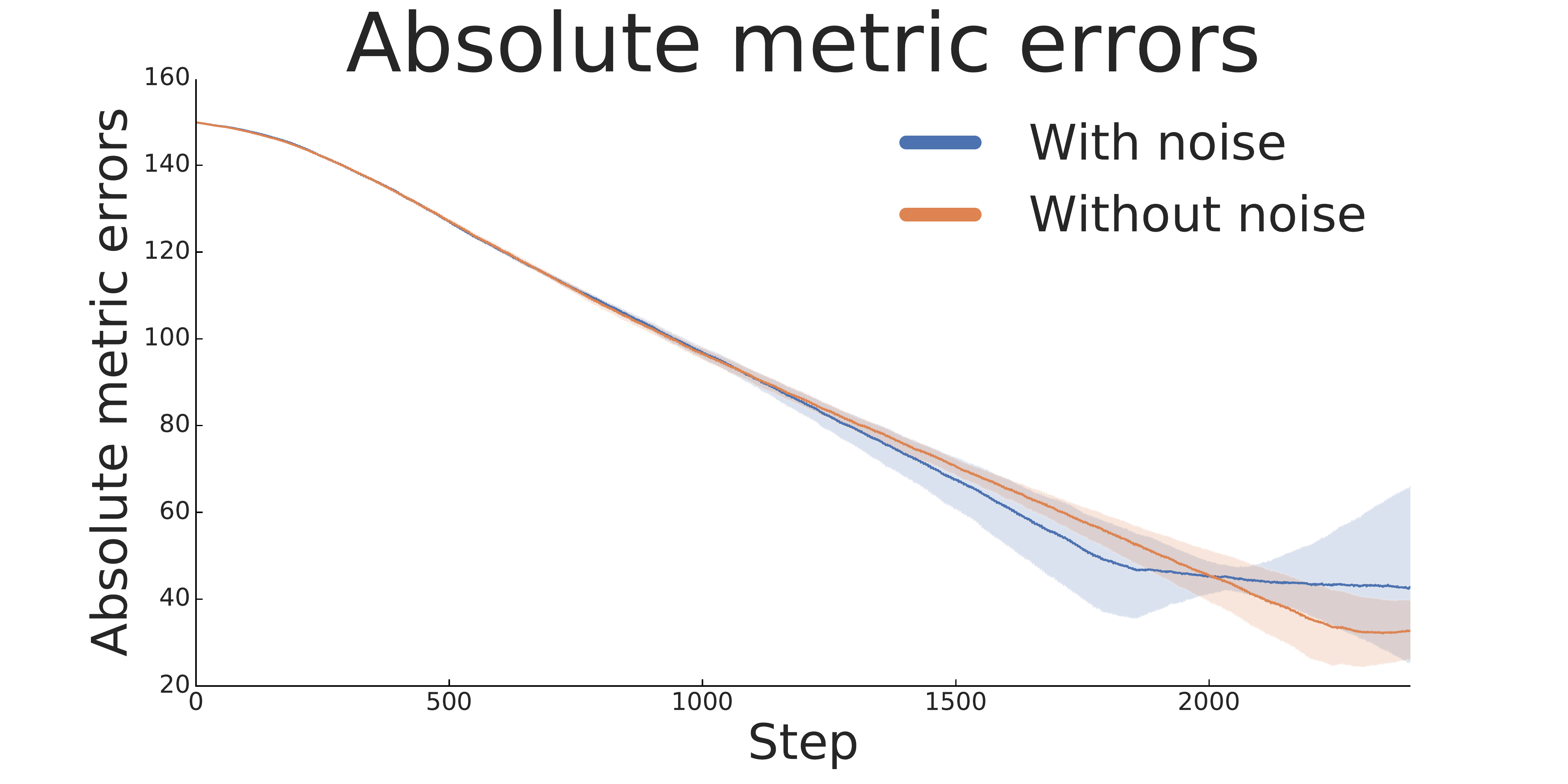}
    \caption{\label{fig:metric_errors}Absolute metric errors on the GridWorld.}
  \end{subfigure}
  \begin{subfigure}[t]{0.5\textwidth}
    \centering\captionsetup{width=.8\linewidth}%
    \includegraphics[width=\textwidth]{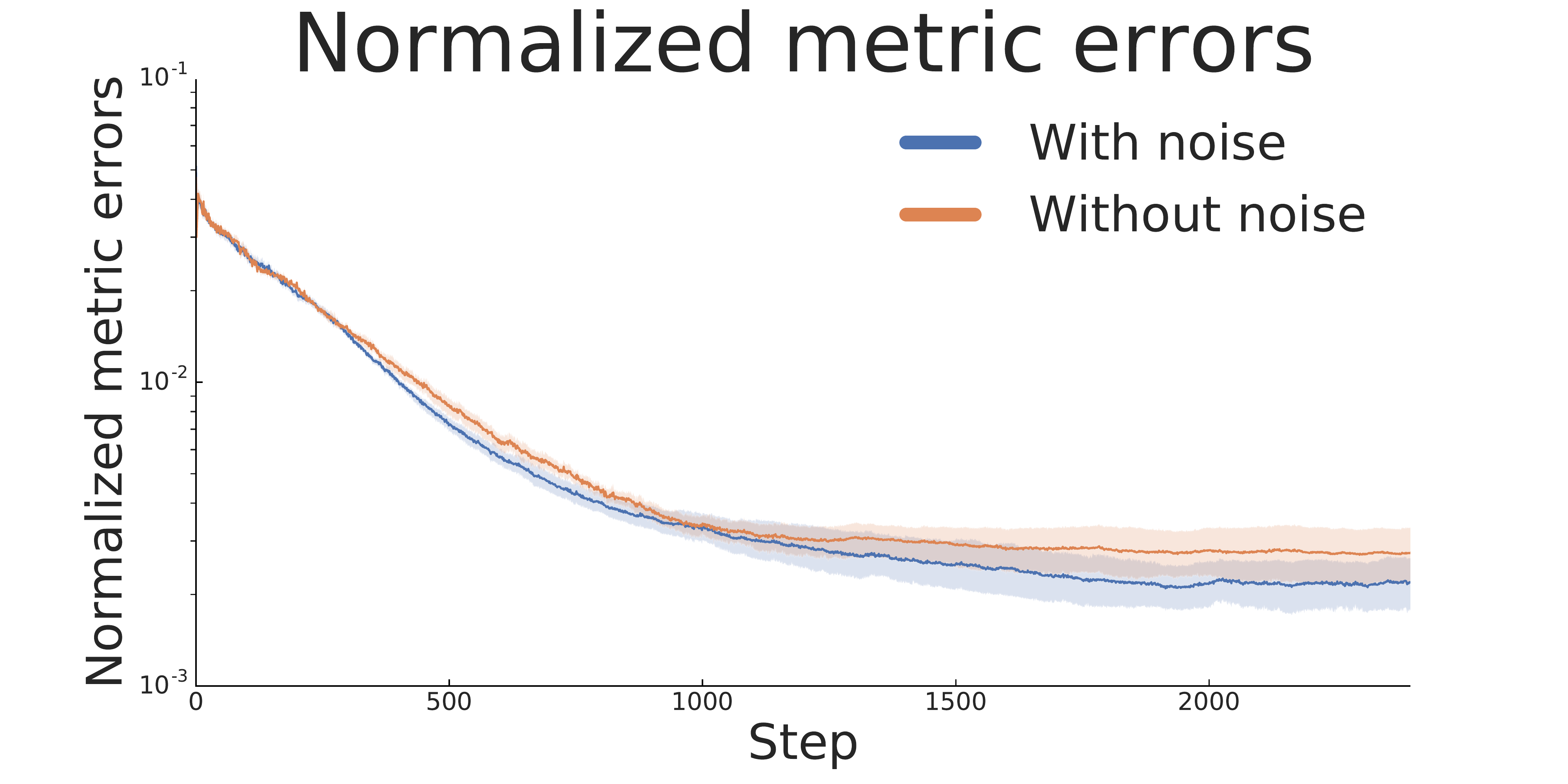}
    \caption{\label{fig:normalized_metric_errors}Normalized metric errors.}
  \end{subfigure}
  \caption{Metric errors for the learned metric on the GridWorld as training
  progresses.}
  \label{fig:errors}
\end{figure*}

\section{Empirical evaluation}
\label{sec:empirical}
In this section we provide empirical evidence for the effectiveness of our
bisimulation approximants\footnote{Code available at
https://github.com/google-research/google-research/tree/master/bisimulation\_aaai2020}.
We begin with a simple 31-state GridWorld, on which
we can compute the bisimulation metric exactly, and use a ``noisy''
representation which yields a continuous-state MDP. Having the exact metric
for the 31-state MDP allows us to quantitatively measure the quality of our
learned approximant.

We then learn a $\pi$-bisimulation approximant over policies generated by
reinforcement learning agents trained on Atari 2600 games.  In the supplemental
material we provide an extensive discussion of the hyperparameter search we
performed to find the settings used for both experiments. Training was done on
a Tesla P100 GPU.

\subsection{GridWorld}
\label{sec:grid_world}
We first evaluate our learning algorithms on the 31-state GridWorld environment
illustrated in \autoref{fig:mirrored_rooms}. There are 4 actions (up, down,
left, right) with deterministic transitions, and where an action driving the
agent towards a wall keeps the agent in the same cell. There is a single reward
of $+1.0$ received upon entering either of the green cells, and a reward of
$-1.0$ for taking an action towards a wall. We display the bisimulation
distances between all states in the supplemental, computed using the sampling
approach.

We represent each state by its coordinates $(x, y)\in\rR^2$, as illustrated in
\autoref{fig:mirrored_rooms}.  To estimate $d_{\sim}$ we use a network with an
input layer of dimension $4$, one fully connected hidden layer with $729$
hidden units, and an output of length $1$. The input is the concatenation of
two state representations, normalized to be in $[-1, 1]$, while the output
value is the estimate to $d_{\sim}$. We sampled state pairs and actions
uniformly randomly, and ran our experiments with $\gamma=0.99$, $C=500$,
$b=256$, and increased $\beta$ from $0$ to $1$ by a factor of $0.9$ every time
the target network was updated; we used the Adam optimizer \cite{kingma15adam}
with a learning rate of $0.01$. Because of the maximization term in the target,
these networks can have a tendency to overshoot (although the combination of
target networks and the $\beta$ term helps stabilize this); we ran the training
process for $2500$ steps, which, for this problem, was long enough to converge
to a reasonable solution before overshooting.  The full hyperparameter settings
are provided in the supplemental.

To evaluate the learning process, we measure the absolute error: $\|
d_{\sim} - \psi\|_{\infty}$ using the true underlying state space for which we
know the value of $d_{\sim}$.
Note that because there is no fixed learning target, absolute errors are not
guaranteed to be bounded. For this reason we also report the normalized error:
$\|\frac{d_{\sim}}{\|d_{\sim}\|_2} - \frac{\psi}{\|\psi\|_2}\|_{\infty}$ as in
practice one is mostly interested in relative, rather than absolute, distances.
\autoref{fig:metric_errors} and \autoref{fig:normalized_metric_errors} display
the results of our experiments over 10 independent runs; the shaded areas
represent the 95\% confidence interval.

In addition to training on the 31 state-MDP, we constructed a continuous
variant by adding Gaussian noise to the state representations; this noise is
centered at $(0, 0)$ with standard deviation $0.1$, and clipped to be in
$[-0.3, 0.3]$.  The per-cell noise is illustrated by the black gradients in
\autoref{fig:mirrored_rooms}.  As \autoref{fig:errors} shows, there is little
difference between learning the metric for the 31-state MDP versus learning it
for its continuous variant. Adding noise does not seem to hurt performance, and
in fact seems to be helpful. We hypotheisze that noise may be acting as a form
of regularization, but this requires further investigation.
In the supplemental material we include a figure
exploring using the metric approximant for aggregating states in the continuous
MDP setting with promising results.

%In fact, training under
%the continuous MDP seems to add stability and improve performance. This suggests
%that the learning process suffers more from the overshooting phenomenon when
%there are few data points to train on: the continuum of data points in the
%continuous setting are possibly helping regularize the network.

% Because our estimate $\psi$ is part of the maximization in
% \autoref{eqn:target} it can be difficult to prevent continuous growth of
% the metric approximants. This can be seen happening towards the end of
% training in the line without noise in \autoref{fig:metric_errors}.
% Although the normalized errors suggest that the relative magnitudes of the
% distances remain stable, we found that doubling the value of $C$ halfway
% through training helped mitigate this ``overshooting'' phenomenon. In the
% supplemental material we provide some figures that more clearly highlight
% this.  It is likely that a more sophisticated schedule for updating the target
% network can result in a stabler approximant.

\subsection{Atari 2600}
\label{sec:atari}
To evaluate the performance of our learning algorithm on an existing large
problem, we take a set of reinforcement learning (RL) agents trained on three
Atari 2600 games from the Arcade Learning Environment
\cite{bellemare13arcade}.  The RL agents were obtained from the set of trained
agents provided with the Dopamine library \cite{castro18dopamine}. Because our
methods are designed for deterministic MDPs, we only used those trained without
sticky actions\footnote{Sticky actions add
stochasticity to action outcomes.}~\cite{machado18revisiting} (evaluated in Section~4.3 in
\cite{castro18dopamine}); the trained checkpoints were provided for only three
games: Space Invaders, Pong, and Asterix. We used the Rainbow agent
\cite{hessel18rainbow} as it is the best performing of the provided Dopamine
agents. We used the penultimate layer of the trained Rainbow agent as the
representation $\phi$.

To approximate the on-policy bisimulation metric $d^{\pi}_{\sim}$ we loaded a
trained agent and ran it in evaluation mode for each respective game, filling
up the replay buffer while doing so.  Once 10,000 transitions have been stored
in the replay buffer, we begin sampling mini-batches and update our approximant
$\psi^{\pi}_{\theta}$ using the target and loss defined previously.  (note that
we still continue populating our replay buffer).  We ran our experiments using
a network of two hidden layers of dimension $16$, with
$\gamma=0.99$, $C=500$, $b=128$, and increased the $\beta$ term from $0$ to $1$
by a factor of $0.99$ every time the target network was updated.
We used the Adam optimizer
\cite{kingma15adam} with a learning rate of $7.5e^{-5}$ (except for Pong where
we found $0.001$ yielded better results). We trained the networks for around
$600K$ steps, although in practice we found that about half that many steps
were needed to reach a stable approximant.  The configuration file specifying
the full hyperparameter settings as well as the learning curves are provided in
the supplemental.

After training we evaluated our approximant $\psi^{\pi}_{\theta}([\phi(s),
\phi(t)])$ by fixing $s$ to be the first state in the game and varying $t$
throughout the episode; that is, we evaluate how similar the other states of
the game are to the initial state w.r.t. our metric approximant.  In
\autoref{fig:space_invaders_epoch} we display the first 500 steps of one
evaluation run on Space Invaders; as can be seen, the learned metric captures
more meaningful differences between frames (start of episodes, enemy alien
destroyed) that go beyond simple pixel differences.  Interestingly, when
sorting the frames by distance, the frames furthest away from $s$ are typically
those where the agent is about to be killed. It is worth noting that the way
states are encoded in Dopamine is by stacking the last four frames; in our
visualization we are only displaying the top frame in this stack. We observed
similar results for Asterix and Pong; we include these and more extensive
results, as well as videos for the three games, in the supplemental material.

\section{Related work}
\label{sec:related}
There are a number of different notions of state similarity that have been
studied over the years. \citet{li06towards} provide a unified characterization
and analysis of many of them. MDP-homomorphisms~\cite{ravindran03relativized}
do not require behavioral equivalence under the same action labels, and this
idea was extended to a metric by \citet{taylor09bounding}.

\begin{figure*}[!t]
  \centering
  \includegraphics[width=\textwidth]{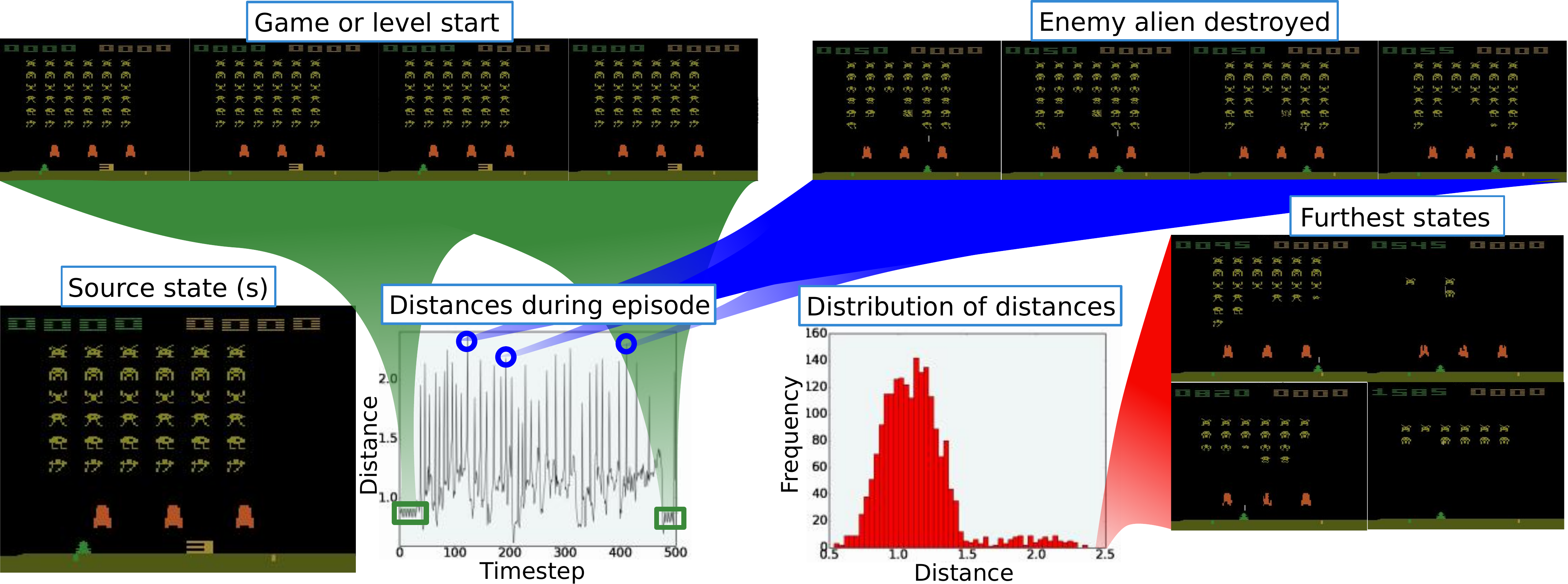}
  \caption{Evaluating the approximant to $d^{\pi}_{\sim}$ with an eval run on
  Space Invaders. We plot the distance between the source state $s$ (pictured in
  the bottom left) and every other state, highlighting the relatively low
  distances in game or level starts (green shading pointing to left and right
  side of the plot), as well as the peaks occuring when an enemy alien is
  destroyed (blue shading pointing to the distance peaks).  On the bottom right
  we display the distribution of distances and the four furthest states for
  this run.}
  \label{fig:space_invaders_epoch}
\end{figure*}

\citet{ferns06methods} introduced a sampling-based approximation to $d_{\sim}$
which exchanges the computation of the Wasserstein with an instance of the
assignment problem from network optimization; although \citet{castro11phd}
derived a PAC-bound for this approximant, the number of samples required is
still prohibitive.  \citet{bacci13computing} exploit the underlying structure
in $\cS$ to compute $d_{\sim}$.

Although there has been some work in concurrency theory to approximate large
systems via `polynomially accurate' simulations
\cite{segala07approximated}, they make no use of function
approximators in the form of neural networks.  We believe our use of neural
networks may grant our approach greater generalizability.

Deterministic on-policy systems can be reduced to a graph. As such, our notion
of $\pi$-bisimulation metrics bears a close relationship to graph similarity
measures \cite{zager08graph}. However, graph similarity notions compare two
full systems (graphs), as opposed to two nodes within a single graph, as we
evaluate here. Nonetheless, the relationship warrants further investigation,
which we leave for future work.

Perhaps most related to our sampling method is the on-the-fly
methods introduced by \citet{comanici12onthefly}. The authors replace the use of
standard dynamic programming in their computation with something akin to
asynchronous dynamic programming \cite{sutton98rl}, where not all state-pairs
are updated at each iteration, but rather $\cS$ is split into disjoint sets
that are updated at different intervals. A few strategies for sampling
state-pairs are discussed, of which the most similar to ours is the ``uniform
asynchronous update''.

\citet{gelada19deepmdp} introduced DeepMDP latent models and established a close
relationship with bisimulation metrics (specifically, Theorem 4 in their paper).
Although closely related, there are some important differences. Their work deals
with state representations, where the distance between states is their distance
in the representation space; by contrast, our proposed neural networks approximate
the bisimulation metric between two states, independent of their representation.
Further, the authors use the DeepMDP losses as an auxiliary task without a
guarantee that their representations are consistent with their theoretical
results. In our work we are able to show that our approximant is valid both
quantitatively (GridWorld) and qualitatively (Atari 2600). Nonetheless, a natural
extension of our work is to use the bisimulation losses we introduced as a
means to learn better representations.

\section{Conclusion}
\label{sec:conclusion}
We introduced new methods for computing and approximating
bisimulation metrics in large deterministic MDPs. The first is an {\em exact}
method that converges to the true metric asymptotically, and the second is a
differentiable method for approximating the metric which we demonstrated can
learn a good approximant even in continuous state spaces.  Since their
introduction, bisimulation metrics have been used for theoretical analysis or in
MDPs of small-to-moderate size, but they have scarcely been used in larger
problems. Our results open the doors for their application in systems with
a large, and even continuous, state space.

One important avenue for research is to extend these results to stochastic
systems. Computing the Wasserstein metric without access to a generative model
is challenging for deep RL environments, as the next-state samples typically
come from single trajectories in replay buffers. One possibility is to build a
model of the transition dynamics from the transitions in the replay buffer and
compute the Wasserstein metrics from this estimate.

Although the network architecture and hyperparameters used to train
$d^{\pi}_{\sim}$ are by no means optimal, the results we presented for the
Atari 2600 domain are very promising and suggest that bisimulation metrics can
be used effectively for deep reinforcement learning. Some promising areas we
are currently exploring are using bisimulation metrics as an auxiliary task for
improved state representation, as a mechanism for compressing replay buffers,
and as a tool for more efficient exploration.

\section{Acknowledgements}
The author would like to thank Marc G. Bellemare, Gheorghe Comanici, Marlos C.
Machado, Doina Precup, Carles Gelada, as well as the rest of the Google Brain
team in Montreal for helpful discussions. The author would also like to thank
the anonymous reviewers for their useful feedback while reviewing this work.

\bibliography{bisimulation}
\bibliographystyle{aaai}

\pagebreak

\onecolumn
\appendix

\section{Proofs of the theoretical results}
\pibisimop*
\begin{proof}
  This proof mimics the proof of Theorem~4.5 from \cite{ferns04metrics}
  (included as Theorem~1 in this paper). Lemma~4.1 from that paper
  holds under Definition~3 by definition. We make use of the same
  pointwise ordering on $\cM$: $d\leq d'$ iff $d(s, t)\leq d'(s, t)$ for all
  $s,t\in\cS$, which gives us an $\omega$-cpo with bottom $\bot$, which is the
  everywhere-zero metric. Since Lemma~4.4 from \cite{ferns04metrics} ($\cW$ is
  continuous) also applies in our definition, it only remains to show that
  $\cF^{\pi}$ is continuous:
  \begin{align*}
    \cF^{\pi}(\bigsqcup_{n\in\rN}\lbrace x_n\rbrace)(s, t) & = |\cR^{\pi}_s -
    \cR^{\pi}_t | + \gamma\cW\left(\bigsqcup_{n\in\rN}\lbrace
    x_n\rbrace\right)(\cP^{\pi}_s, \cP^{\pi}_t) \\ & = |\cR^{\pi}_s - \cR^{\pi}_t | +
    \gamma\sup_{n\in\rN}\cW(x_n)(\cP^{\pi}_s, \cP^{\pi}_t) \quad\textrm{by
    continuity of }\cW \\ & = \sup_{n\in\rN}\left(|\cR^{\pi}_s - \cR^{\pi}_t |
    + \gamma\cW(x_n)(\cP^{\pi}_s, \cP^{\pi}_t)\right) \\ & =
    \sup_{n\in\rN}\left\lbrace\cF^{\pi}(x_n)(s, t)\right\rbrace \\ & =
    \left(\bigsqcup_{n\in\rN}\left\lbrace\cF^{\pi}(x_n)\right\rbrace\right)(s,
    t)
  \end{align*}
  The rest of the proof follows in the same way as in \cite{ferns04metrics}.
\end{proof}

\pibisimbound*
\begin{proof}
  We will use the standard value function update: $V^{\pi}_n(s) = \cR^{\pi}_s +
  \gamma\sum_{s'\in\cS}\cP^{\pi}_s(s')V^{\pi}_{n-1}(s')$ with $V^{\pi}_0\equiv 0$
  and our update operator from \autoref{thm:pi_bisim_operator}: $d^{\pi}_{n}(s,
  t) = \cF^{\pi}(d^{\pi}_{n-1})(s, t)$ with $d^{\pi}_0\equiv 0$, and prove this
  by induction, showing that for all $n\in\rN$ and $s,t\in\cS$,
  $|V^{\pi}_n(s)-V^{\pi}_n(t)|\leq d^{\pi}_n(s, t)$.

  The base case holds vacuously: $0 = V^{\pi}_0(s, t) \leq d^{\pi}_0(s, t) = 0$, so assume true
  up to $n$.
  \begin{align*}
    |V^{\pi}_{n+1}(s) - V^{\pi}_{n+1}(t)| & = \left|\cR^{\pi}_s +
    \gamma\sum_{s'\in\cS}\cP^{\pi}_s(s')V^{\pi}_n(s') - \left(\cR^{\pi}_t +
    \gamma\sum_{s'\in\cS}\cP^{\pi}_t(s')V^{\pi}_n(s')\right)\right| \\
    & \leq |\cR^{\pi}_s - \cR^{\pi}_t| + \left|\gamma\sum_{s'\in\cS}V^{\pi}_n(s')(\cP^{\pi}_s(s') - \cP^{\pi}_t(s'))\right| \\
    & \leq |\cR^{\pi}_s - \cR^{\pi}_t| + \left|\gamma\cW(d^{\pi}_n)(\cP^{\pi}_s, \cP^{\pi}_t)\right| \\
    & = |\cR^{\pi}_s - \cR^{\pi}_t| + \gamma\cW(d^{\pi}_n)(\cP^{\pi}_s, \cP^{\pi}_t) \quad\textrm{since $\cW(d^{\pi}_n)$ is a metric} \\
    & = \cF^{\pi}(d^{\pi}_n)(s, t) \\
    & = d^{\pi}_{n+1}(s, t)
  \end{align*}
  where the second inequality follows from noticing that, by induction, for all
  $s,t\in\cS$, $V^{\pi}_n(s) - V^{\pi}_n(t)\leq d_n(s, t)$, which means $V^{\pi}$ is a feasible solution
  to the primal LP objective of $\cW(d^{\pi}_n)(\cP^{\pi}_s, \cP^{\pi}_t)$ (see Equation~1).
\end{proof}

\kantisd*
\begin{proof}
  The primal LP defined in Equation~1 can be expressed in its
  dual form (which, incidentally, is a minimum-cost flow problem):
  \begin{align*}
    & \min_{\boldsymbol{\lambda}}\sum_{s',t'\in\cS}\lambda_{s',t'}d(s', t') \\
    \textrm{s.t.}\quad &  \forall s'\in\cS, \quad \sum_{t'}\lambda_{s', t'} = \cP(s, a)(s') \\
                  &  \forall t'\in\cS, \quad \sum_{s'}\lambda_{s', t'} = \cP(t, a)(t') \\
                  &  \boldsymbol{\lambda} \geq 0
  \end{align*}
  By the deterministic assumption it follows that $\lambda_{s', t'} = 0$ whenever
  $s'\ne\cN(s, a)$ or $t'\ne\cN(t, a)$ (since otherwise one of the first two
  constraints will be violated). This means that only
  $\lambda_{\cN(s, a), \cN(t, a)}$ is positive. By the equality constraints
  it then follows that $\lambda_{\cN(s, a), \cN(t, a)}=1$, resulting in
  $d(\cN(s, a), \cN(t, a))$ as the minimal objective value.
\end{proof}

\begin{lemma}
  \label{lemma:max_act_exists}
  (Maximizing action) If $d_0\equiv 0$ and subsequent $d_n$ are udpated as in
  Equation~3, then for all $s,t\in\cS$ and $\delta\in\rR$
  there exists an $n<\infty$ and $a\in\cA$ such that\\
  $Pr(d_n(s, t) = |\cR(s, a) - \cR(t, a)| + \gamma d_{n-1}(\cN(s, a), \cN(t, a))) > 1-\delta$.
\end{lemma}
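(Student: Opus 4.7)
The plan is to show that, for any fixed $(s,t)\in\cS\times\cS$ and any $\delta > 0$, the pair $(s,t)$ almost certainly receives some non-trivial update within a bounded number of steps, and that the very first such update necessarily attains the equality we want. The key observations are (i) the sampling distribution $\cD$ puts positive mass on every pair of transitions, so the event that $(s,t)$ is sampled has positive probability at each step, and (ii) because $d_0\equiv 0$ and the update in \autoref{eqn:online_update} only modifies the entry of the sampled pair, the value $d_{n-1}(s,t)$ remains $0$ until $(s,t)$ is first drawn.

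First I would bound the waiting time until $(s,t)$ is drawn. Set $q = \sum_{a\in\cA}\cD(\tau_{s,t,a})$; since $\cD(\tau) > 0$ for every $\tau\in\cT$ and $|\cA|$ is finite, we have $q > 0$. Under independent draws from $\cD$, the probability that no $\tau_{s,t,\cdot}$ appears in the first $N$ samples is $(1-q)^N$. Choosing $N = \lceil \ln\delta / \ln(1-q) \rceil$ makes this probability at most $\delta$, so with probability at least $1-\delta$ there is some step $n \leq N$ at which a pair transition $\tau_{s,t,a_n}$ is drawn for some action $a_n\in\cA$.

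Next, on this event, let $n^*$ be the first step at which $(s_{n^*},t_{n^*}) = (s,t)$. Because the update rule leaves $d_k(s,t)$ unchanged for $k < n^*$, we have $d_{n^*-1}(s,t) = 0$. The second argument of the max in \autoref{eqn:online_update} is $|\cR(s,a_{n^*}) - \cR(t,a_{n^*})| + \gamma\, d_{n^*-1}(\cN(s,a_{n^*}), \cN(t,a_{n^*}))$, and this quantity is non-negative (an absolute value plus a non-negative discounted distance, using that $d_k \geq 0$ inductively as a max of non-negatives starting from $d_0\equiv 0$). Therefore the max in the update is achieved by this second argument, which yields $d_{n^*}(s,t) = |\cR(s,a_{n^*}) - \cR(t,a_{n^*})| + \gamma\, d_{n^*-1}(\cN(s,a_{n^*}), \cN(t,a_{n^*}))$, precisely the desired identity with $n = n^*$ and $a = a_{n^*}$.

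The argument is fairly direct; the only subtlety is to check that the ``first update to $(s,t)$'' reasoning is not disrupted by updates to other state pairs at steps $k<n^*$. Such updates can change $d_{n^*-1}(\cN(s,a_{n^*}), \cN(t,a_{n^*}))$ but never $d_{n^*-1}(s,t)$, and they preserve the non-negativity of the second argument of the max. I expect the harder part of the broader development is not this lemma in isolation, but rather combining it with monotonicity of $d_n$ to turn repeated sampling of every $(s,t,a)$ triple into a contraction estimate for $\|d_{\sim} - d_n\|_{\infty}$ in the online convergence theorem.
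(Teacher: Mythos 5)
Your proof is correct and follows essentially the same route as the paper's: a geometric waiting-time bound using $\cD(\tau_{s,t,a})>0$ to guarantee the pair is sampled within $N \approx \ln\delta/\ln(1-q)$ steps with probability at least $1-\delta$, after which the update at the first such draw attains the claimed equality because $d_0\equiv 0$. Your version is in fact slightly more careful than the paper's (you aggregate the sampling mass over actions and explicitly verify that the second argument of the $\max$ wins at the first draw), but these are elaborations of the same argument rather than a different approach.
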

\begin{proof}
  Since we start at $d_0\equiv 0$, the result will hold as long as we can
  guarantee that $\tau_{s,t,a}$ for some $a\in\cA$ will be sampled at least
  once. By assumption $\cD(\tau_{s,t,a}) > 0$, which means that the probability
  that $\tau_{s,t,a}$ is {\em not} sampled by step $n$ is
  $(1-\cD(\tau_{s,t,a}))^n$.  We obtain our result by taking $n >
  \frac{\ln\delta}{\ln(1-\cD(\tau_{s,t,a}))}$.
\end{proof}

\begin{lemma}
  \label{lemma:monotonic}
  (Monotonicity) If $d_0\equiv 0$ and subsequent $d_n$ are udpated as in Equation~3, then $d_n\leq d_{\sim}$ for all $n\in\rN$.
\end{lemma}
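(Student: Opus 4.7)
The plan is to prove the lemma by induction on $n$, leveraging the fact that $d_{\sim}$ is a fixed point of the operator $\mathcal{F}$ (from \autoref{thm:ferns04}) and that, crucially, the update in Equation~3 at step $n$ only accounts for the single sampled action $a_n$, whereas $\mathcal{F}$ takes a maximum over all actions. Thus the sampled update is always dominated by the fixed-point update, provided the previous estimate is already dominated by $d_{\sim}$.

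For the base case, $d_0 \equiv 0 \leq d_{\sim}$ holds trivially, since $d_{\sim}$ is a pseudometric and hence nonnegative everywhere. For the inductive step, assume $d_{n-1}(s,t) \leq d_{\sim}(s,t)$ for all $s,t\in\cS$. Consider the two cases in the update rule of Equation~3. First, for any pair $(s,t)$ with $s \neq s_n$ or $t \neq t_n$, we have $d_n(s,t) = d_{n-1}(s,t) \leq d_{\sim}(s,t)$ directly by the inductive hypothesis. Second, for the sampled pair $(s_n,t_n)$, I would bound each of the two arguments of the $\max$ separately: the first argument $d_{n-1}(s_n,t_n)$ is bounded by $d_{\sim}(s_n,t_n)$ by induction, and for the second argument I would use the inductive hypothesis on the successor pair $(\cN(s_n,a_n),\cN(t_n,a_n))$, followed by the fact that for any single action $a_n$,
\begin{align*}
|\cR(s_n,a_n) - \cR(t_n,a_n)| + \gamma\, d_{\sim}(\cN(s_n,a_n),\cN(t_n,a_n))
\leq \max_{a\in\cA}\bigl(|\cR(s_n,a) - \cR(t_n,a)| + \gamma\, d_{\sim}(\cN(s_n,a),\cN(t_n,a))\bigr),
\end{align*}
which equals $\cF(d_{\sim})(s_n,t_n) = d_{\sim}(s_n,t_n)$ since $d_{\sim}$ is the fixed point of $\cF$ (and by \autoref{lemma:kant_is_d}, the Wasserstein term reduces to $d_{\sim}(\cN(s_n,a),\cN(t_n,a))$ in the deterministic setting). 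Taking the max of the two bounded quantities yields $d_n(s_n,t_n) \leq d_{\sim}(s_n,t_n)$, completing the induction.

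I do not anticipate any serious obstacle here, as the argument is essentially a one-step unrolling of $\mathcal{F}$ combined with the observation that partial maximization (over the single sampled action $a_n$) can only underestimate full maximization (over all of $\cA$). The only subtlety worth highlighting explicitly is the reliance on \autoref{lemma:kant_is_d} to replace the Wasserstein term appearing in $\cF(d_{\sim})$ with a plain $d_{\sim}$ evaluation on successor states, which is what makes the sampled update of Equation~3 directly comparable term-by-term to one summand inside the max defining $\cF(d_{\sim})(s_n,t_n)$.
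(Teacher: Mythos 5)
Your proof is correct, and although it shares the paper's core skeleton --- induction on $n$, the fixed-point identity $\cF(d_{\sim}) = d_{\sim}$, and the observation that a single action term is dominated by the maximum over all actions --- it closes the inductive step by a genuinely different and, in fact, cleaner route. The paper's proof first invokes the Maximizing Action lemma (\autoref{lemma:max_act_exists}) to argue that for $n$ large enough the estimate $d_n(s,t)$ can be rewritten as a non-vacuous $\cF$-style backup with some action $a^*_n$, and only then compares that backup against the maximizing action $a^*_{\sim}$ of $d_{\sim}$. That detour imports a probabilistic qualifier (``with high likelihood of sampling'') into what is stated as a sure inequality, and it also glosses over the case in which the $\max$ of \autoref{eqn:online_update} retains the stale value $d_{n-1}(s_n,t_n)$ rather than the fresh backup. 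You avoid both issues by bounding each of the two arguments of the $\max$ separately: the first directly by the inductive hypothesis, the second by the chain single-action $\leq$ max-over-actions $= \cF(d_{\sim})(s_n,t_n) = d_{\sim}(s_n,t_n)$, using \autoref{lemma:kant_is_d} to collapse the Wasserstein term exactly as the paper implicitly does. What your approach buys is a pathwise statement that holds for every realization of the sampling process and makes no use of \autoref{lemma:max_act_exists} at all, which is arguably what the deterministic wording of the lemma demands; what the paper's phrasing buys is only that it foreshadows the structure reused later in the proof of \autoref{thm:online_bisim}, where the Maximizing Action lemma genuinely is needed.
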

\begin{proof}
  Obviously $d_{n-1}\leq d_n$ for all $n$. We will show by induction that
  $d_n\leq d_{\sim}$ for all $n$. The base case $d_0\equiv 0\leq d_{\sim}$
  follows by definition, so assume true up to $n$ and consider any $s,t\in\cS$,
  where $n$ is large enough so that we have a high likelihood of sampling
  $\tau_{s,t,a}$ for some $a\in\cA$ (from \autoref{lemma:max_act_exists}).
  First note that this implies that for all $n'>n$ there exists an
  $a^*_{n'}\in\cA$ such that $d_{n'}(s, t) = |\cR(s, a^*_{n'}) - \cR(t,
  a^*_{n'})| + \gamma d_{n'-1}(\cN(s, a^*_{n'}), \cN(t, a^*_{n'}))$.

  Define $a^*_{\sim} = \arg\max_{a\in\cA}\left(|\cR(s, a) - \cR(t, a)| + \gamma d_{\sim}(\cN(s, a), \cN(t, a))\right)$.
  We then have:
  \begin{align*}
    d_{\sim}(s, t) & = |\cR(s, a^*_{\sim}) - \cR(t, a^*_{\sim})| + \gamma d_{\sim}(\cN(s, a^*_{\sim}), \cN(t, a^*_{\sim})) \\
                   & \geq |\cR(s, a^*_{n}) - \cR(t, a^*_{n})| + \gamma d_{\sim}(\cN(s, a^*_{n}), \cN(t, a^*_{n})) \\
                   & \geq |\cR(s, a^*_{n}) - \cR(t, a^*_{n})| + \gamma d_{n}(\cN(s, a^*_{n}), \cN(t, a^*_{n})) \\
                   & = d_{n+1}(s, t)
  \end{align*}
  where the second line follows from the fact that $a^*_{\sim}$ is the action
  that maximizes the bisimulation distance, and the third line follows from the
  inductive hypothesis.
\end{proof}

\onlinebisim*
\begin{proof}
  To prove this we will look at the difference \\
  $\| d_{\sim} - d_n\|_{\infty} = \max_{s,t\in\cS}|d_{\sim}(s, t) - d_n(s, t)|$.
  For any $\delta\in\rR$ define
  $n^*=\max_{s,t\in\cS,a\in\cA}\left(\frac{\ln\delta}{\ln(1-\cD(\tau_{s,t,a}))}\right)$.
  For some $n>n^*$, let $s,t$ be the state-pair that maximizes
  $\| d_{\sim} - d_n\|_{\infty}$ at time $n$.
  \small
  \begin{align*}
    \| d_{\sim} - d_n\|_{\infty} = & | d_{\sim}(s, t) - d_n(s, t) | \\
          = & \enspace d_{\sim}(s, t) - d_n(s, t) \qquad\qquad\textrm{by \autoref{lemma:monotonic}} \\
          = & \enspace |\cR(s, a^*_{\sim}) - \cR(t, a^*_{\sim})| + \gamma d_{\sim}(\cN(s, a^*_{\sim}), \cN(t, a^*_{\sim})) - \\
            & \enspace \left(|\cR(s, a^*_n) - \cR(t, a^*_n)| + \gamma d_{n-1}(\cN(s, a^*_n), \cN(t, a^*_n))\right) \qquad\textrm{w.p. at least } 1-\delta\textrm{ by~\autoref{lemma:max_act_exists}} \\
       \leq & \enspace |\cR(s, a^*_{\sim}) - \cR(t, a^*_{\sim})| + \gamma d_{\sim}(\cN(s, a^*_{\sim}), \cN(t, a^*_{\sim})) \\
          - & \enspace \left(|\cR(s, a^*_{\sim}) - \cR(t, a^*_{\sim})| + \gamma d_{n-1}(\cN(s, a^*_{\sim}), \cN(t, a^*_{\sim}))\right) \\
          = & \enspace \gamma(d_{\sim}(\cN(s, a^*_{\sim}), \cN(t, a^*_{\sim})) - d_{n-1}(\cN(s, a^*_{\sim}), \cN(t, a^*_{\sim}))) \\
       \leq & \enspace \gamma\|d_{\sim} - d_{n-1} \|_{\infty}
  \end{align*}
  \normalsize
  Where the first inequality follows from the fact that $a^*_n$ is the action
  that maximizes $d_n(s, t)$.  Thus we have that the sequence
  $\lbrace\|d_{\sim} - d_n\|_{\infty}\rbrace$ is a contraction. By the Banach
  fixed-point theorem, the result follows.
\end{proof}

We note that the last two results are related to Lemma~3 and Theorem~2 by
\cite{comanici12onthefly}, but there are some important differences worth
highlighting, notably in their use of the approximate update function
$\hat{h_k}$, which is not required in our method. Indeed, Lemma~3 in \cite{comanici12onthefly}
is more a statement on what's required of $\hat{h_k}$ to guarantee $h_k\leq d^*$ for all $k$;
specifically, that in order for $h_k\leq d^*$, it is required that
$\hat{h_k}\leq d^*$ for all $k'\leq k$.
In contrast, our \autoref{lemma:monotonic} does not need this requirement as we make
no use of an approximate udpate. Further, our proof of \autoref{lemma:monotonic} relies
on \autoref{lemma:max_act_exists} to guarantee that we can rewrite $d_n$ as an $\cF$ update
for a sufficiently large $n$. Note that this is rather different than the use of a similar
idea with $\nu(k)$ by \cite{comanici12onthefly}, as they use it in their proof
of Lemma 4, which lower-bounds $h_{k_m}$ with $F^m(0)$.

Although the statements of both Theorem 2 by \cite{comanici12onthefly} and our
\autoref{thm:online_bisim} are similar (both methods converge to the true metric),
the approach we take is quite different. \cite{comanici12onthefly} construct
their update via a partition of the state space into 3 sets ($\alpha_k$, $\beta_k$,
$\delta_k$) at each step. The proper handling of these 3 sets, and in particular
of $\delta_k$, makes the proof of Lemma 4 rather involved, which the authors require to
obtain the lower-bound which is then used for the proof of Theorem 2. This extra complication
is somewhat unfortunate, as the authors do not use $\delta_k$ sets in any of their
empirical evaluations, nor do they provide any indication as to what a good choice
of the approximate update function $\hat{h_k}$ would be.

In contrast, the proof of our \autoref{thm:online_bisim} requires only
\autoref{lemma:max_act_exists} and \autoref{lemma:monotonic}. We believe our proof
is much simpler to follow and makes use of fewer techniques.

\subsection{Mini-batch target and loss}
We specify a set of matrix operations for computing them on a {\em batch} of
transitions.  This allows us to efficiently train this approximant using
specialized hardware like GPUs.  The discussion in this section is specific to
approximating $d_{\sim}$, but it is straightforward to adapt it to
approximating $d^{\pi}_{\sim}$.  We provide code for both approximants in the
supplemental material with their implementation in
TensorFlow~\cite{abadi15tensorflow}, as well as implementations of the
algorithms discussed in the previous section.

At each step we assume access to a batch of $b$ samples of states, actions,
rewards, and next states:
\begin{align*}
  \mathbf{S} = 
  \begin{bmatrix}
    \phi(s_1) \\
    \phi(s_2) \\
    \cdots \\
    \phi(s_b)
  \end{bmatrix}
  , \mathbf{A} = 
  \begin{bmatrix}
    a_1 \\
    a_2 \\
    \cdots \\
    a_b
  \end{bmatrix}
  , \mathbf{R} = 
  \begin{bmatrix}
    \cR(s_1, a_1) \\
    \cR(s_2, a_2) \\
    \cdots \\
    \cR(s_b, a_b)
  \end{bmatrix},
  \mathbf{N} = 
  \begin{bmatrix}
    \phi(\cN(s_1, a_1)) \\
    \phi(\cN(s_2, a_2)) \\
    \cdots \\
    \phi(\cN(s_b, a_b))
  \end{bmatrix}
\end{align*}

Letting $[X, Y]$ stand for the concatenation of two vectors $X$ and $Y$, from
$\textbf{S}$ we construct a new square matrix of dimension $b\times b$ as
follows:
\begin{align*}
  \mathbf{S}^2 = 
  \begin{bmatrix}
    [\phi(s_1),  \phi(s_1)], [\phi(s_1), \phi(s_2)], \cdots, [\phi(s_1), \phi(s_b)] \\
    [\phi(s_2), \phi(s_1)], [\phi(s_2), \phi(s_2)], \cdots, [\phi(s_2), \phi(s_b)] \\
    \cdots \\
    [\phi(s_b), \phi(s_1)], [\phi(s_b), \phi(s_2)], \cdots, [\phi(s_b), \phi(s_b)] \\
  \end{bmatrix}
\end{align*}

Each element in this matrix is a vector of dimension $2k$. We reshape this
matrix to be a ``single-column'' tensor of length $b^2$. We can perform a
similar exercise on the reward and next-state batches:

\begin{align*}
  \mathbf{R}^2 =
  \begin{bmatrix}
    |\cR(s_1, a_1) - \cR(s_1, a_1)| (= 0) \\
    |\cR(s_1, a_1) - \cR(s_2, a_2)| \\
    \cdots, \\
    |\cR(s_1, a_1) - \cR(s_b, a_b)| \\
    |\cR(s_2, a_2) - \cR(s_1, a_1)| \\
    \cdots \\
    |\cR(s_b, a_b) - \cR(s_{b-1}, a_{b-1})| \\
    |\cR(s_b, a_b) - \cR(s_b, a_b)| (= 0) \\
  \end{bmatrix}
  \mathbf{N}^2 =
  \begin{bmatrix}
    [\phi(\cN(s_1, a_1)), \phi(\cN(s_1, a_1))], \\
    [\phi(\cN(s_1, a_1)), \phi(\cN(s_2, a_2))], \\
    \cdots, \\
    [\phi(\cN(s_b, a_b)), \phi(\cN(s_{b-1}, a_{b-1}))] \\
    [\phi(\cN(s_b, a_b)), \phi(\cN(s_b, a_b))] \\
  \end{bmatrix}
\end{align*}

Finally, we define a mask which enforces that we only consider pairs of samples
that have matching actions:
\begin{align*}
  \mathbf{W} =
  \begin{bmatrix}
    a_1 == a_1 \\
    a_1 == a_2 \\
    \cdots \\
    a_b == a_{b-1} \\
    a_b == a_b
  \end{bmatrix}
\end{align*}

In batch-form, the target defined above becomes:
\small
\begin{align}
  \label{eqn:target}
  \mathbf{T} = (1 - \mathbf{I}) * \max\left(\mathbf{R}^2 + \gamma\beta\psi_{\theta^{-}_i}(\mathbf{N}^2),\quad \beta\psi_{\theta^{-}_i}(\mathbf{S}^2)\right)
\end{align}
\normalsize

where $\psi(\mathbf{X})$ indicates applying $\psi$ to a matrix $\mathbf{X}$
elementwise.  We multiply by $(1 - \mathbf{I})$ to zero out the diagonals,
since those represent approximations to $d_{\sim}(s, s)\equiv 0$.  The
parameter $\beta$ is a stability parameter that begins at $0$ and is
incremented every $C$ iterations. Its purpose is to gradually ``grow'' the
effective horizon of the bisimulation backup and maximization. This is
necessary since the approximant $\psi_\theta$ can have some variance initially,
depending on how $\theta$ is initialized. Further, \cite{jiang15dependence}
demonstrate that using shorter horizons during planning can often be better
than using the true horizon, especially when using a model estimated from data.

Finally, our loss $\cL_i$ at iteration $i$ is defined as
$\cL_i(\theta_i) = \rE_{\cD}\left[\mathbf{W}\otimes\left(\psi_{\theta_i}(\mathbf{S}^2) - \mathbf{T} \right)^2\right]$,
where $\otimes$ stands for the Hadamard product.
Note that, in general, the approximant $\psi$ is not a metric: it can violate
the identity of indiscernibles, symmetry, and subadditivity conditions.

\pagebreak

\section{Bisimulation distances between all states in the GridWorld} In
Figure~\ref{fig:bisim_distances} we display the bisimulation distances from
all states in the GridWorld MDP illustrated in Figure~4 in
the main paper. These were computed using the sampling approach of
section~5. Note how $d_{\sim}$ is able to capture similarities
that go beyond simple physical proximity. This is most evident when examining
the distance from the ``hallway'' state to all other states: even though it
neighbours the bottom row in the top room, that row is furthest according to
$d_{\sim}$.

\begin{figure*}[h]
  \begin{subfigure}[t]{0.18\textwidth}
    \centering\captionsetup{width=.8\linewidth}%
    \includegraphics[width=\textwidth]{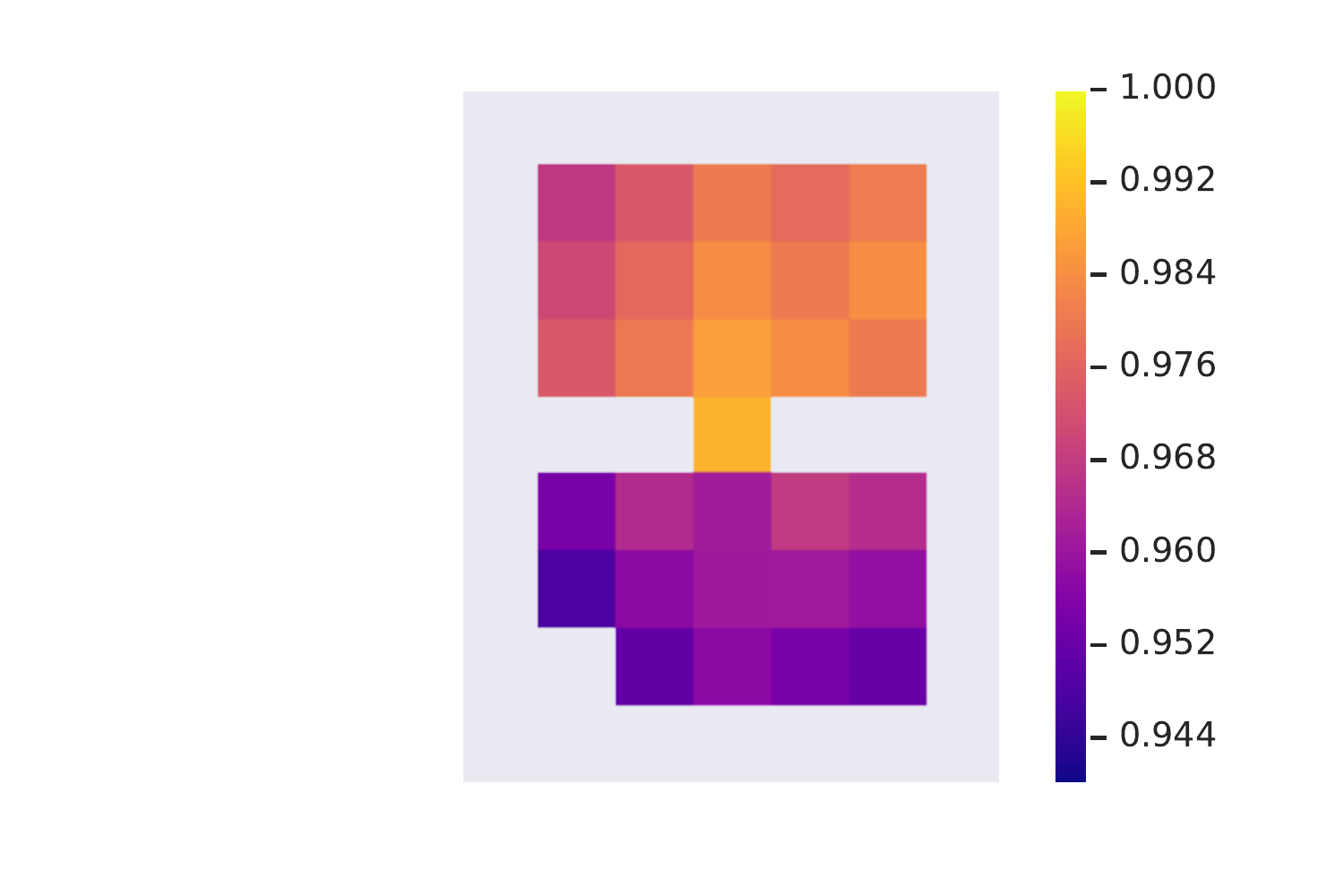}
  \end{subfigure}
  \begin{subfigure}[t]{0.18\textwidth}
    \centering\captionsetup{width=.8\linewidth}%
    \includegraphics[width=\textwidth]{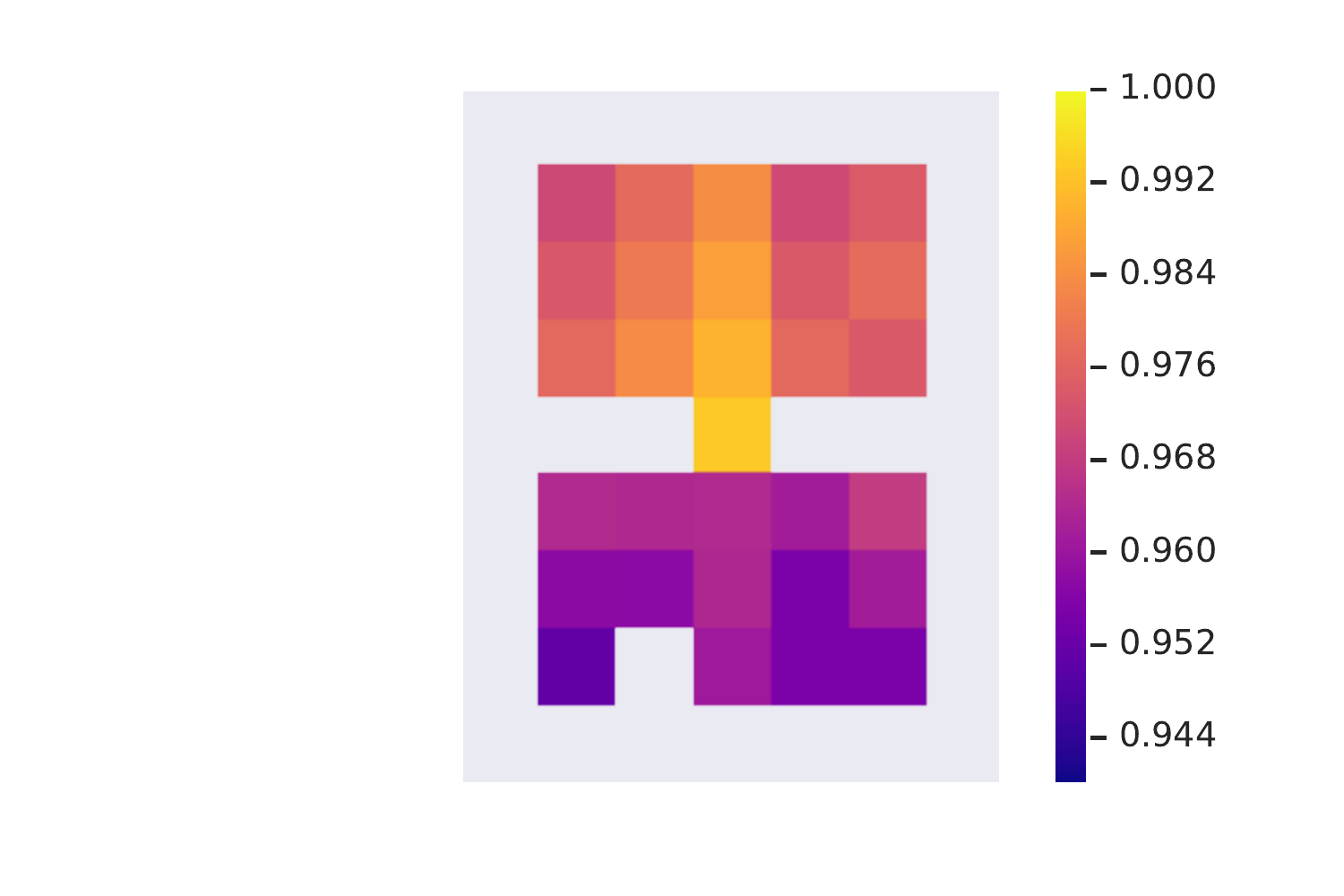}
  \end{subfigure}
  \begin{subfigure}[t]{0.18\textwidth}
    \centering\captionsetup{width=.8\linewidth}%
    \includegraphics[width=\textwidth]{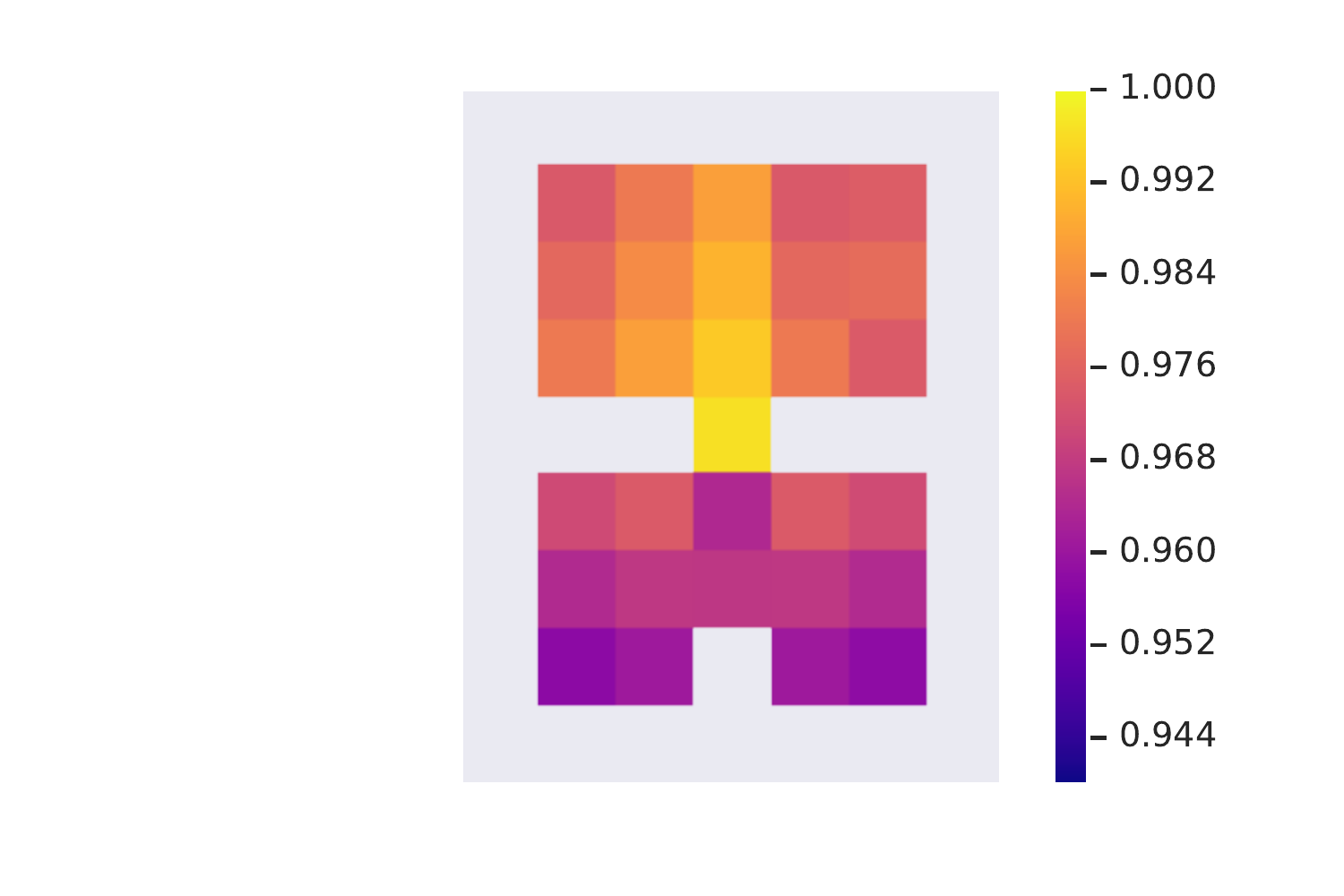}
  \end{subfigure}
  \begin{subfigure}[t]{0.18\textwidth}
    \centering\captionsetup{width=.8\linewidth}%
    \includegraphics[width=\textwidth]{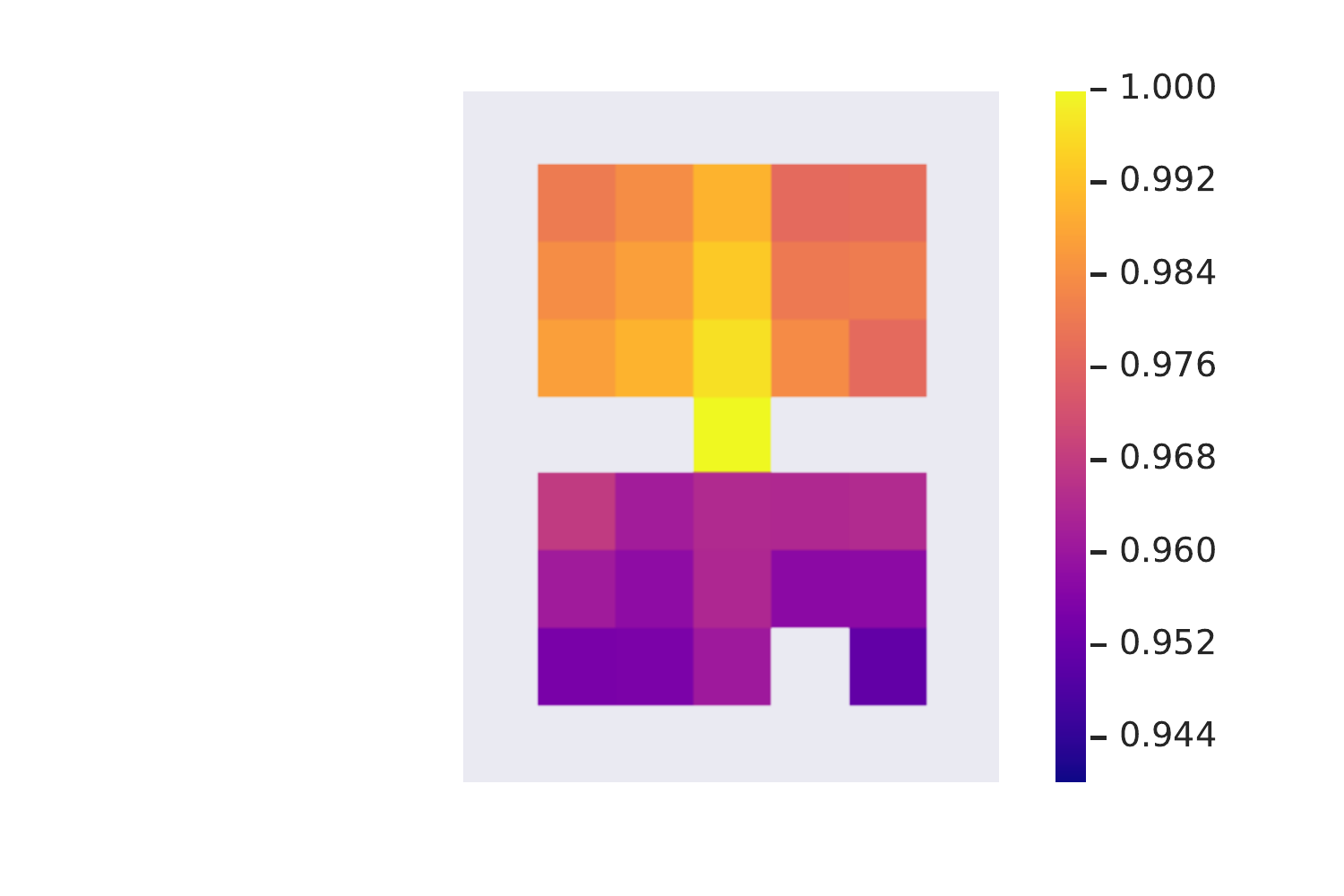}
  \end{subfigure}
  \begin{subfigure}[t]{0.18\textwidth}
    \centering\captionsetup{width=.8\linewidth}%
    \includegraphics[width=\textwidth]{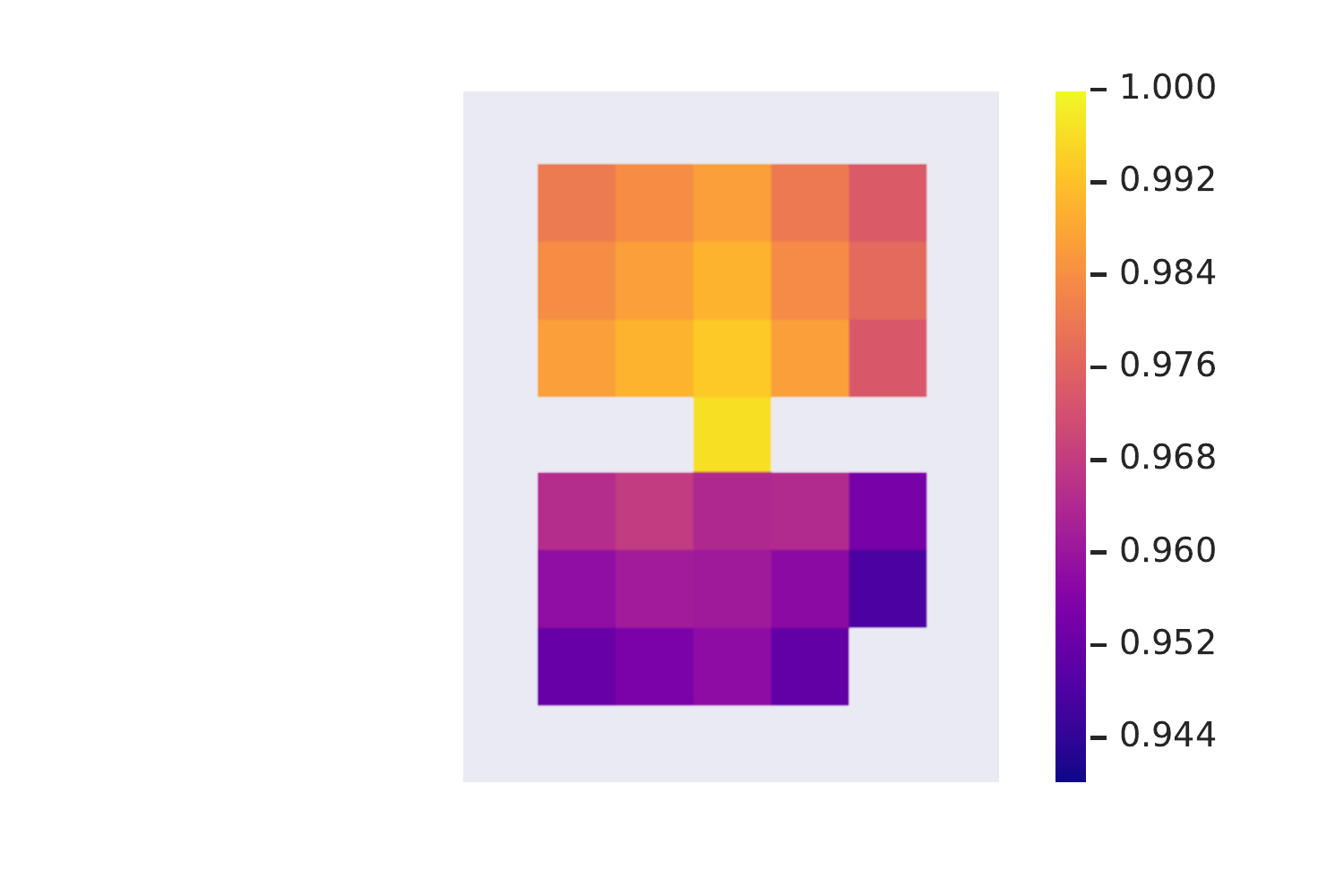}
  \end{subfigure}
  \begin{subfigure}[t]{0.18\textwidth}
    \centering\captionsetup{width=.8\linewidth}%
    \includegraphics[width=\textwidth]{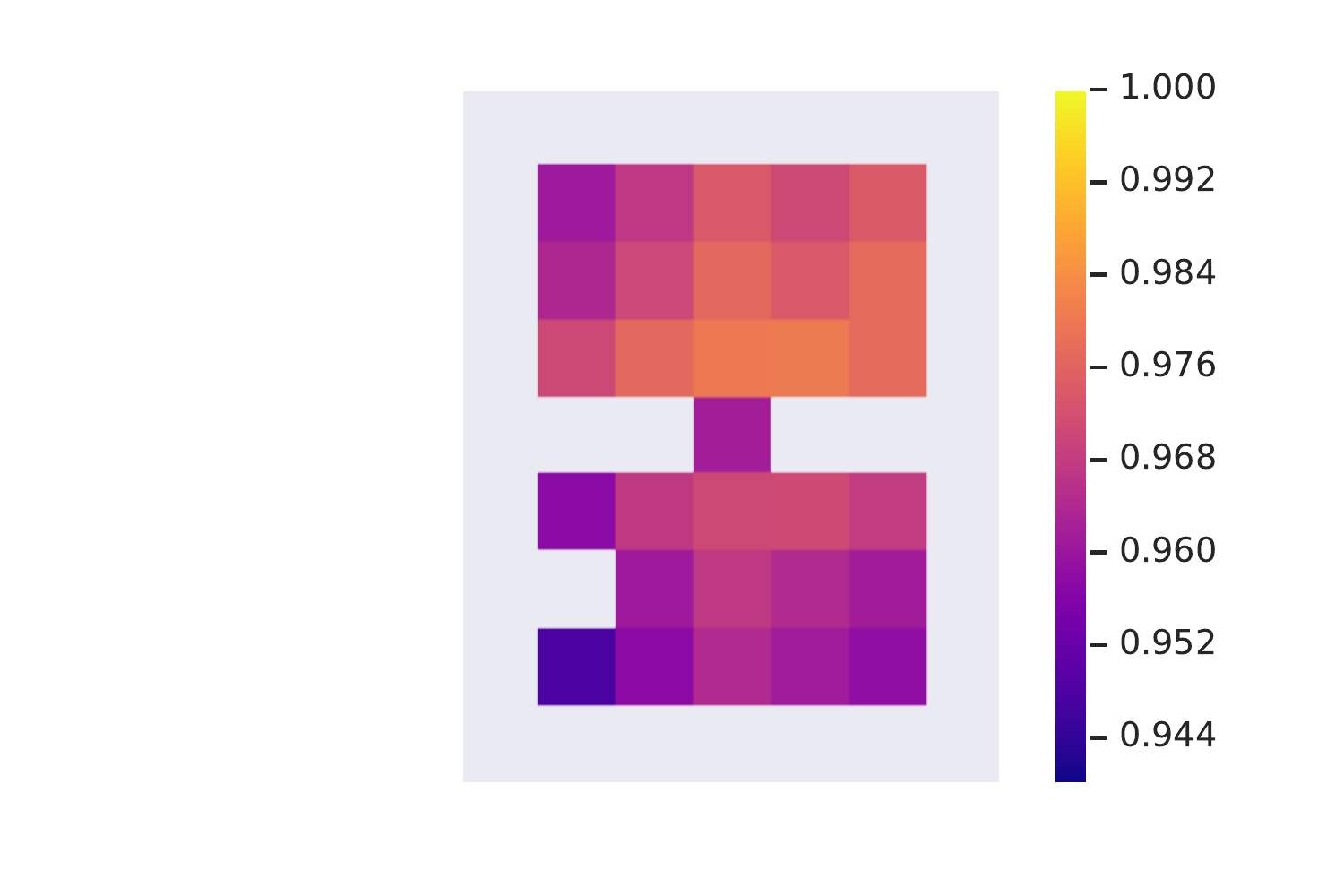}
  \end{subfigure}
  \begin{subfigure}[t]{0.18\textwidth}
    \centering\captionsetup{width=.8\linewidth}%
    \includegraphics[width=\textwidth]{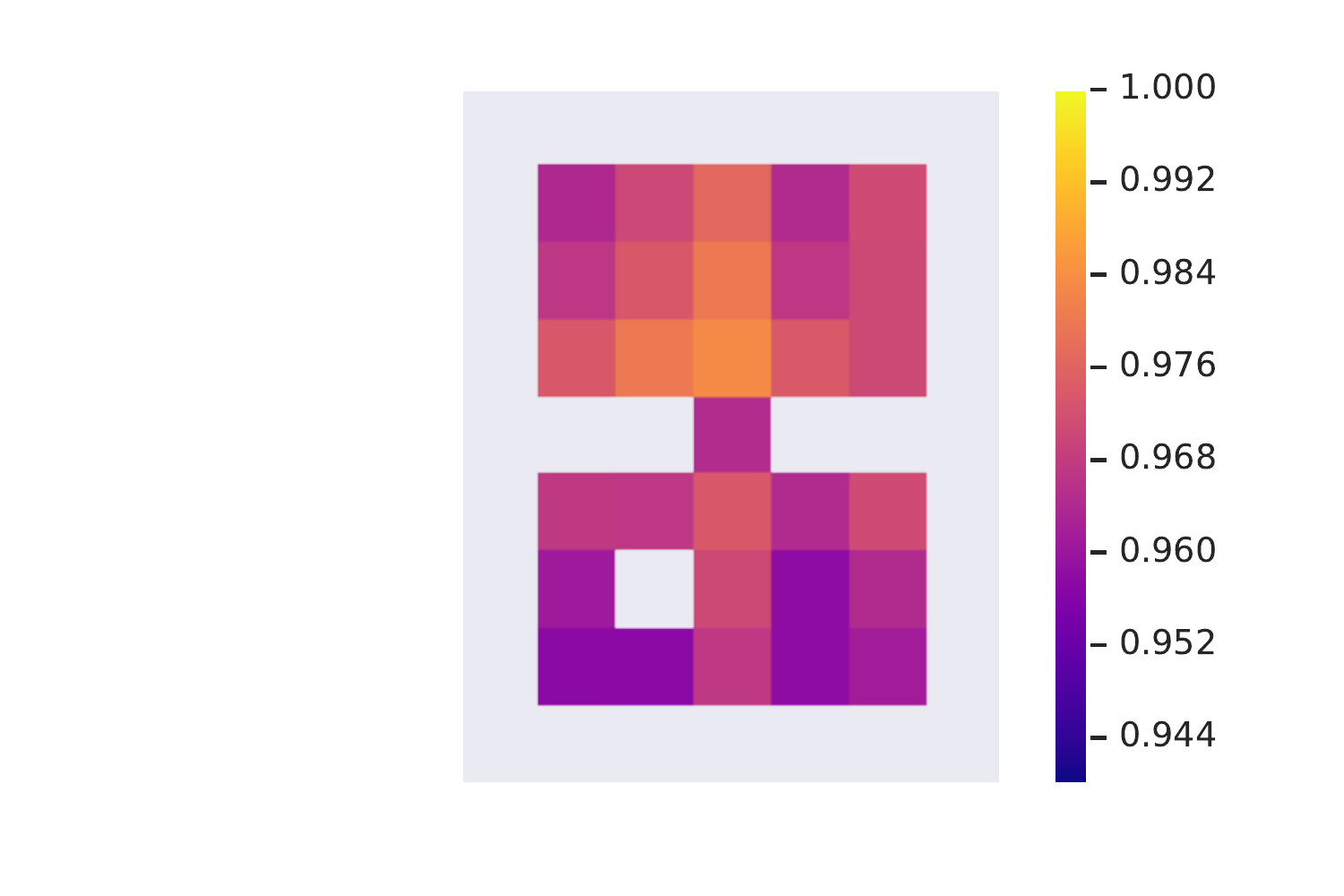}
  \end{subfigure}
  \begin{subfigure}[t]{0.18\textwidth}
    \centering\captionsetup{width=.8\linewidth}%
    \includegraphics[width=\textwidth]{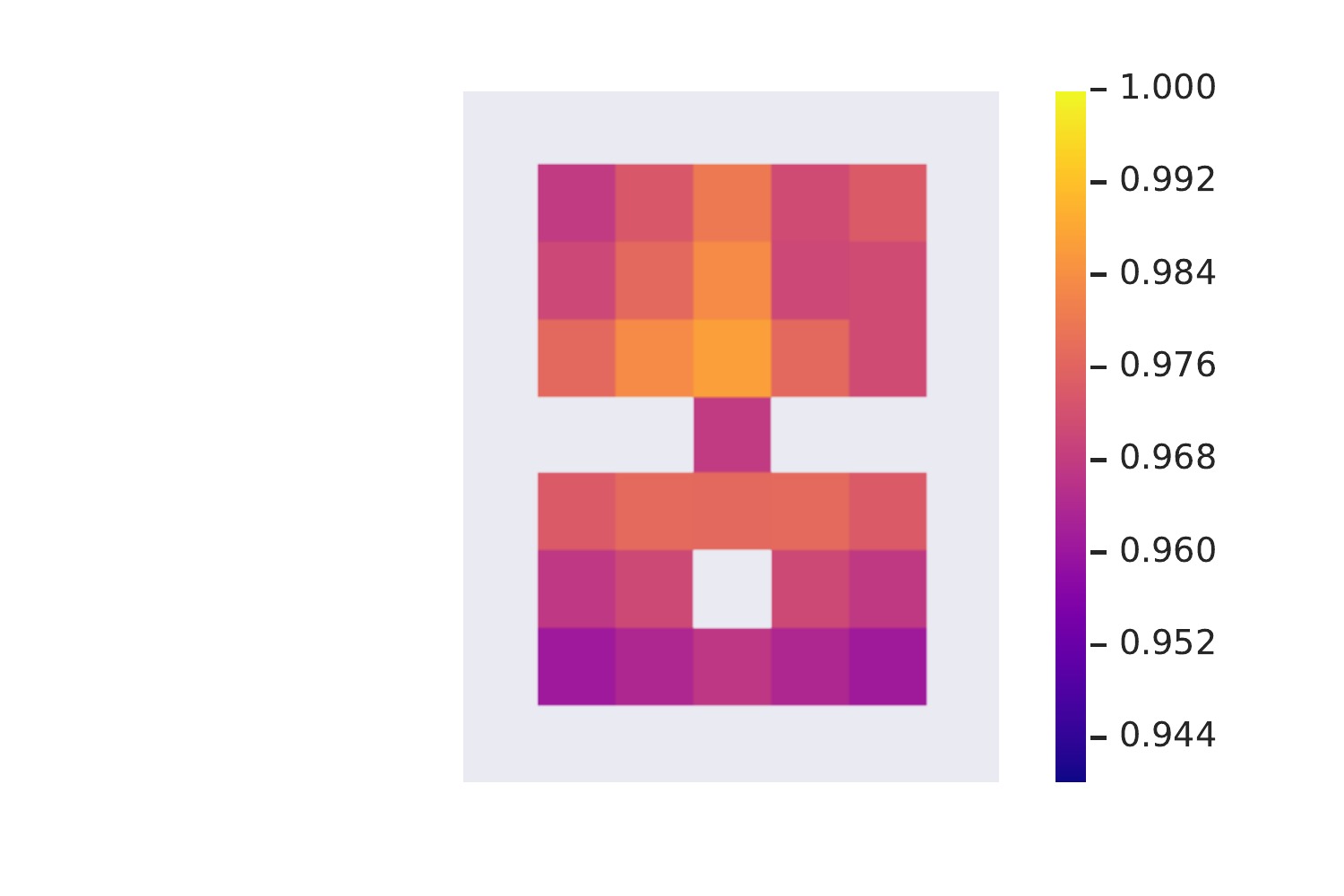}
  \end{subfigure}
  \begin{subfigure}[t]{0.18\textwidth}
    \centering\captionsetup{width=.8\linewidth}%
    \includegraphics[width=\textwidth]{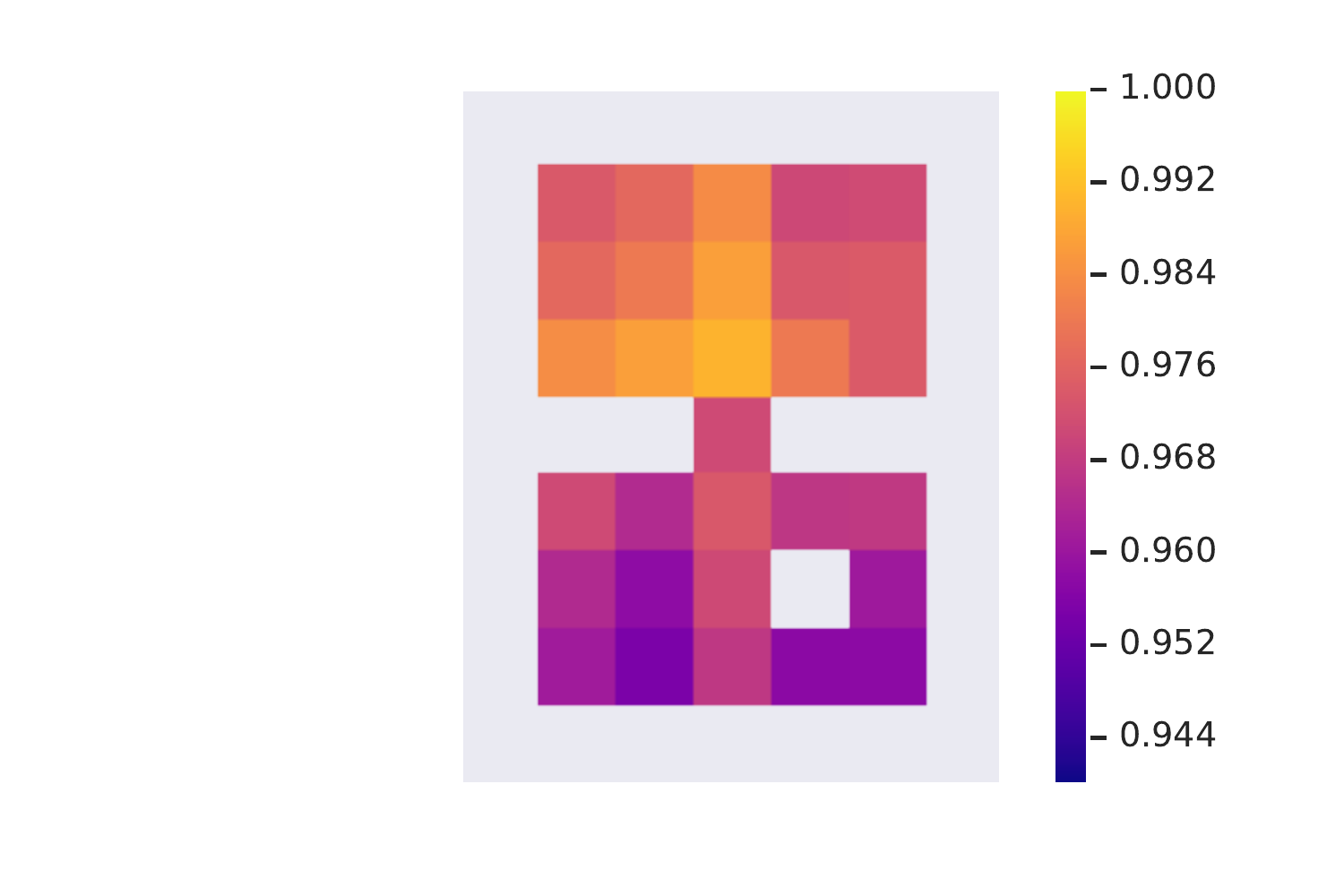}
  \end{subfigure}
  \begin{subfigure}[t]{0.18\textwidth}
    \centering\captionsetup{width=.8\linewidth}%
    \includegraphics[width=\textwidth]{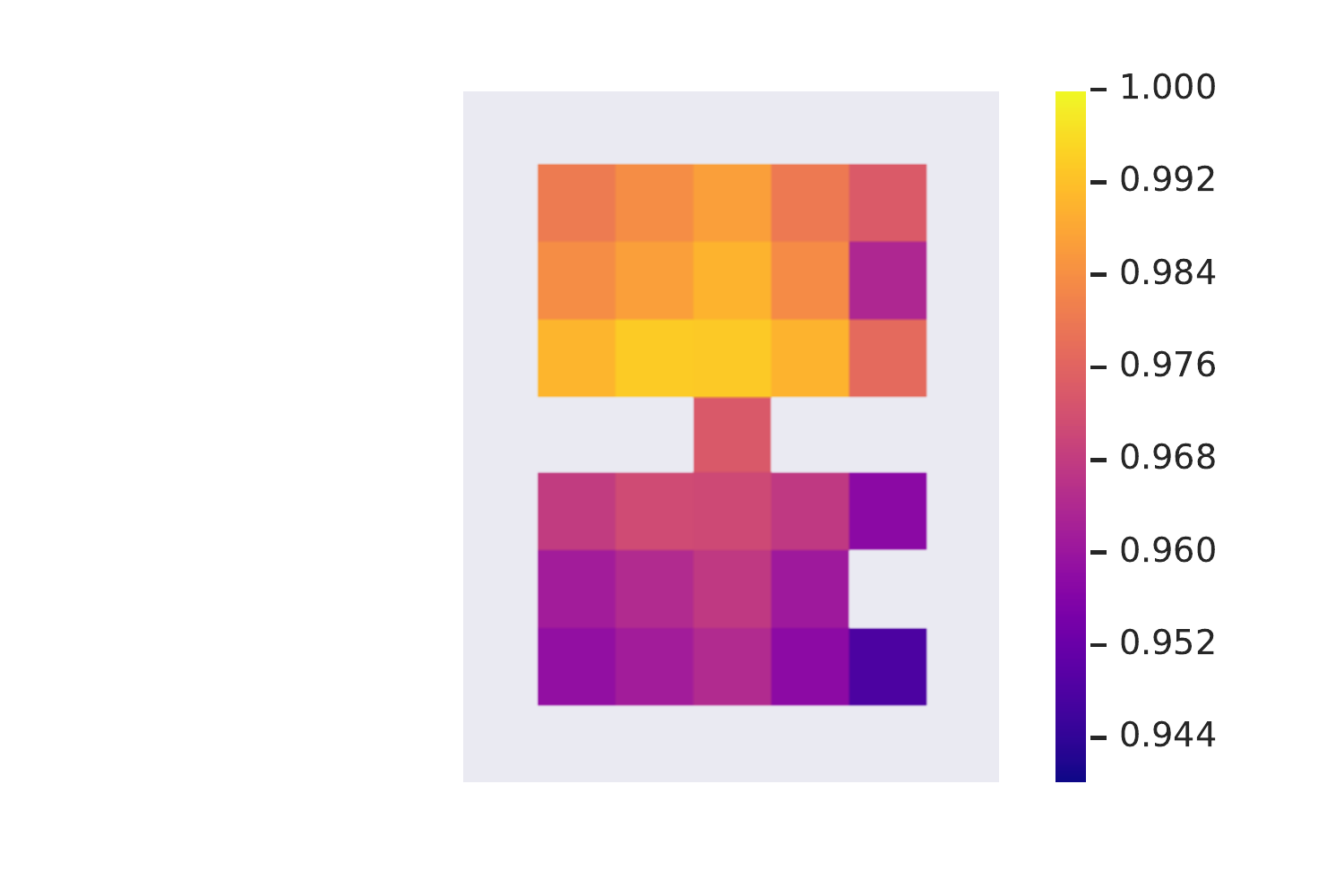}
  \end{subfigure}
  \begin{subfigure}[t]{0.18\textwidth}
    \centering\captionsetup{width=.8\linewidth}%
    \includegraphics[width=\textwidth]{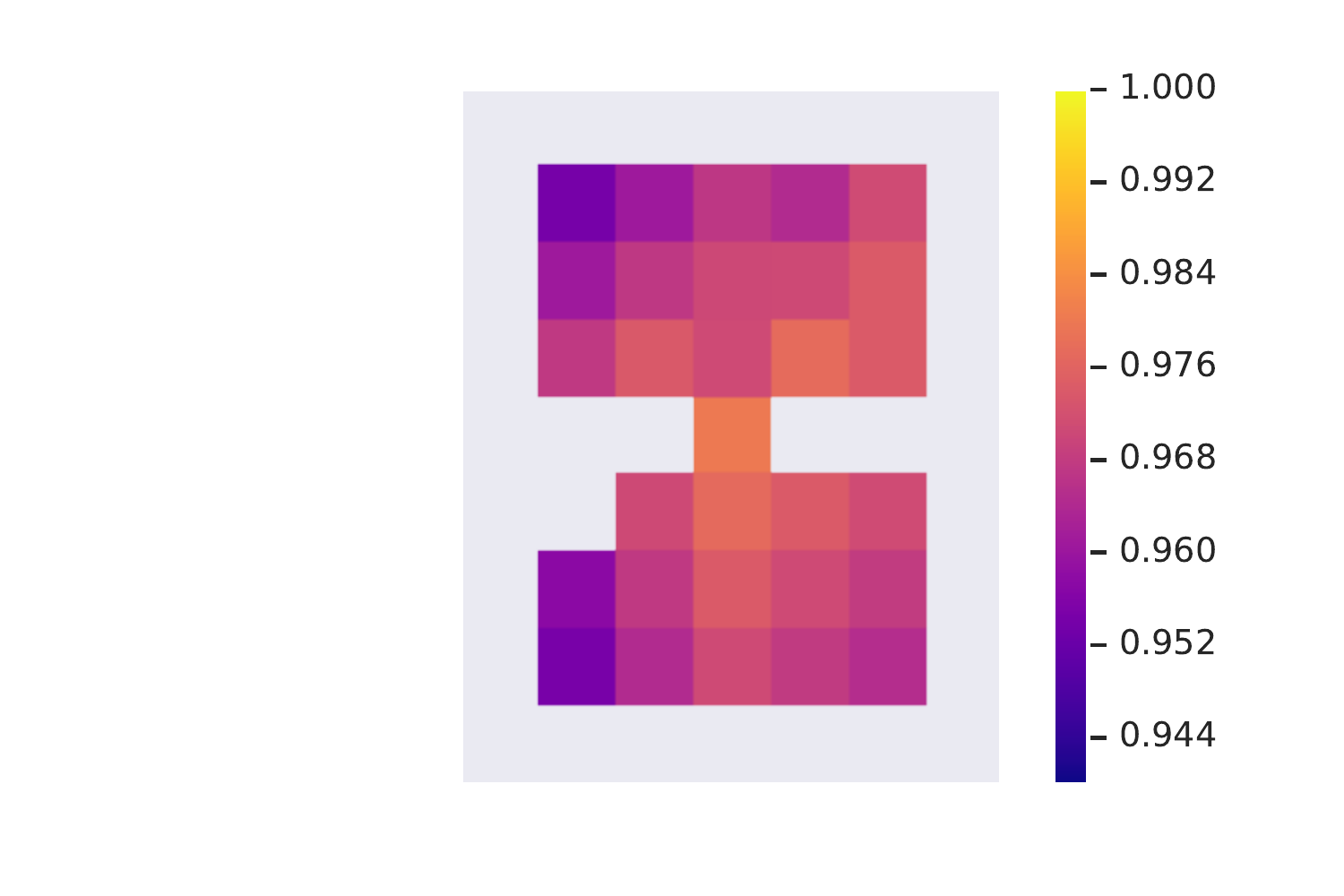}
  \end{subfigure}
  \begin{subfigure}[t]{0.18\textwidth}
    \centering\captionsetup{width=.8\linewidth}%
    \includegraphics[width=\textwidth]{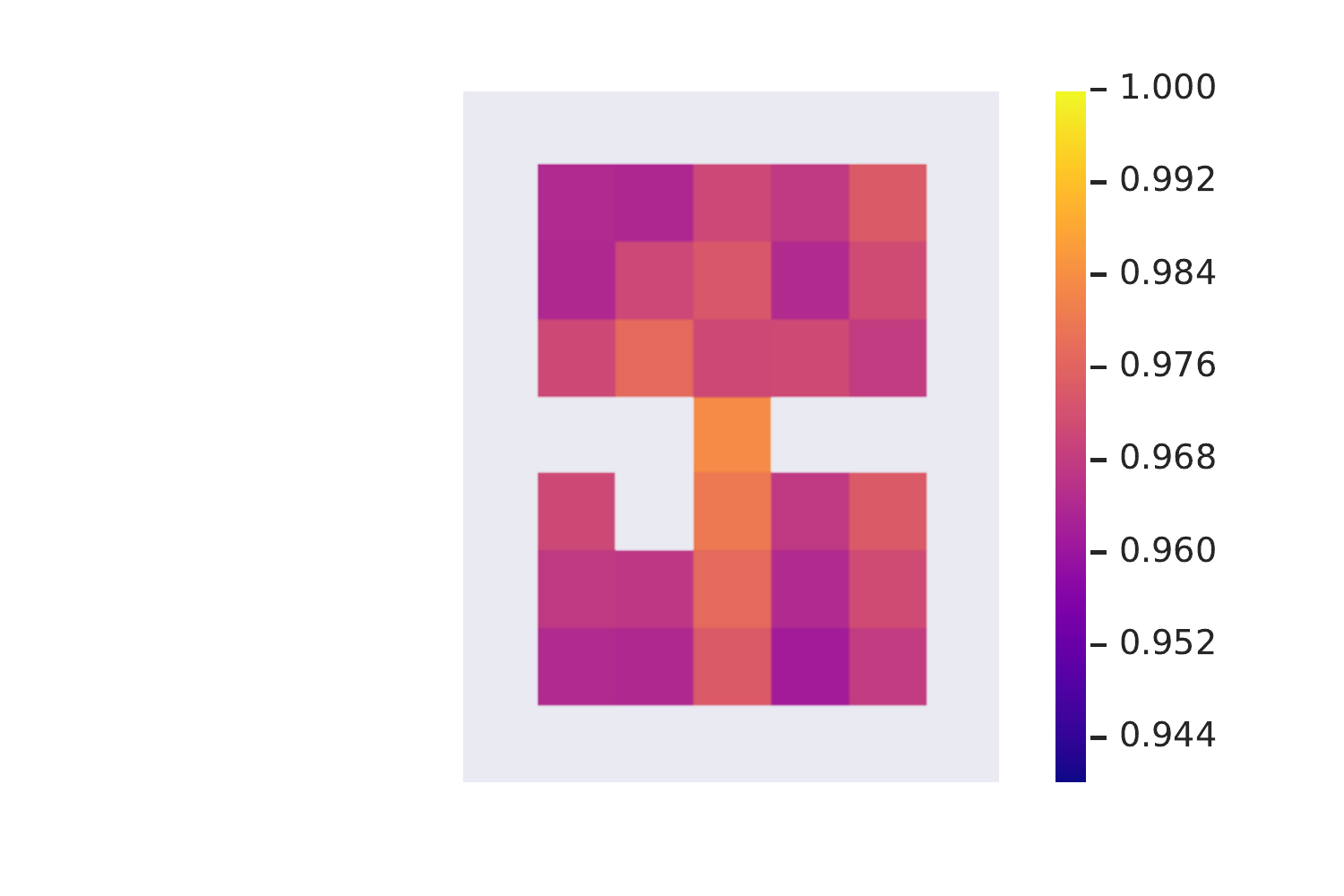}
  \end{subfigure}
  \begin{subfigure}[t]{0.18\textwidth}
    \centering\captionsetup{width=.8\linewidth}%
    \includegraphics[width=\textwidth]{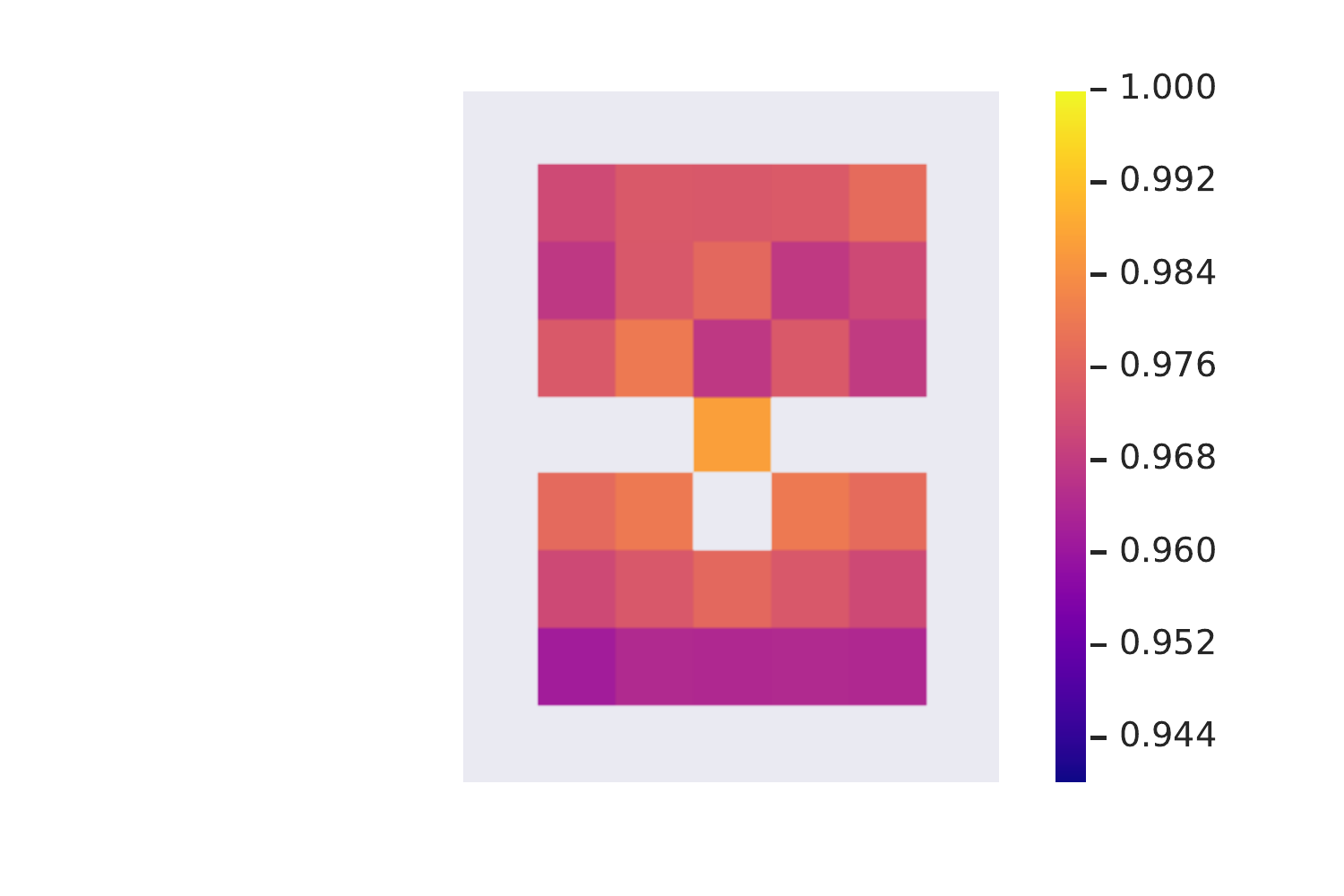}
  \end{subfigure}
  \begin{subfigure}[t]{0.18\textwidth}
    \centering\captionsetup{width=.8\linewidth}%
    \includegraphics[width=\textwidth]{dist_from_3_3}
  \end{subfigure}
  \begin{subfigure}[t]{0.18\textwidth}
    \centering\captionsetup{width=.8\linewidth}%
    \includegraphics[width=\textwidth]{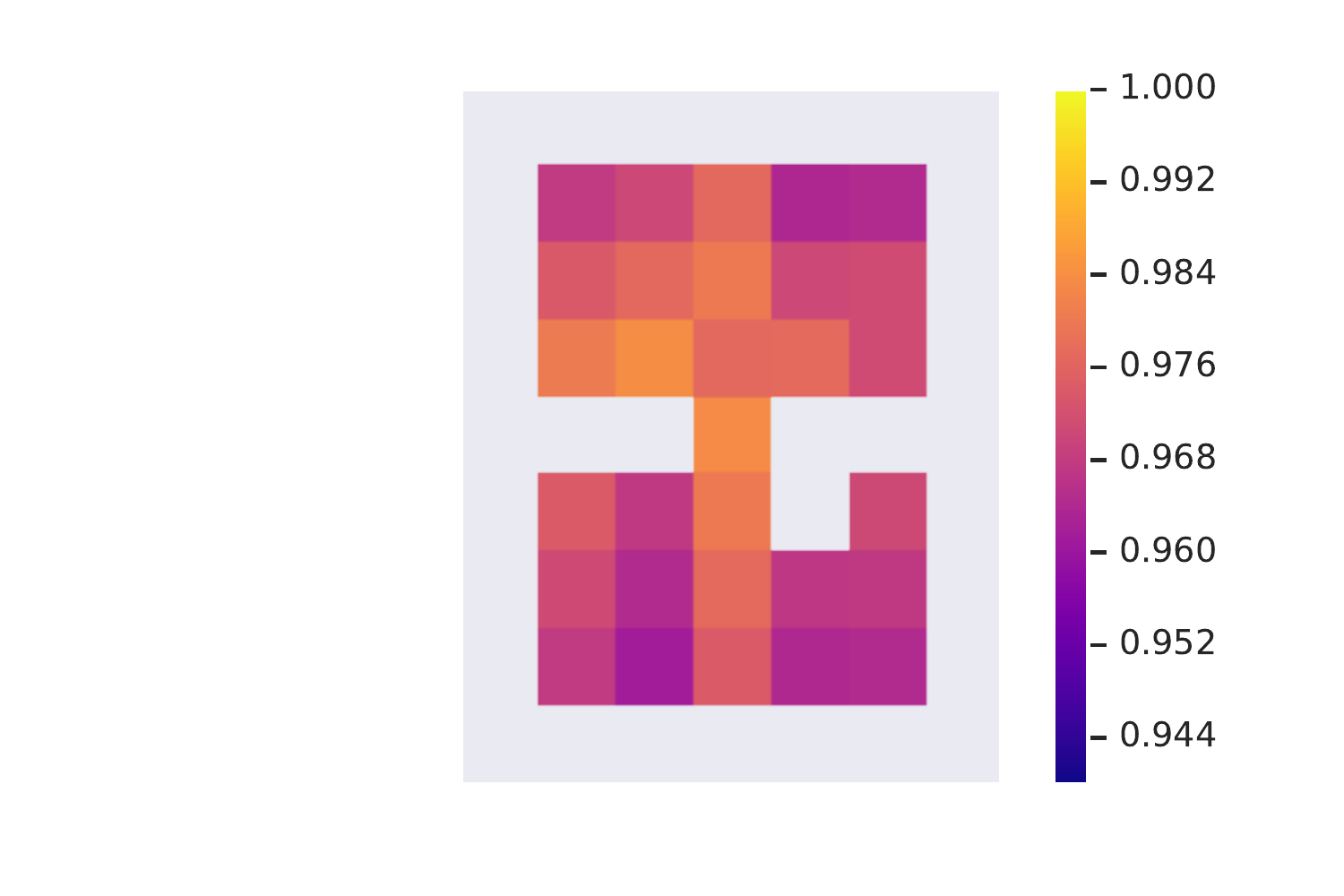}
  \end{subfigure}
  \begin{subfigure}[t]{0.18\textwidth}
    \centering\captionsetup{width=.8\linewidth}%
    \includegraphics[width=\textwidth]{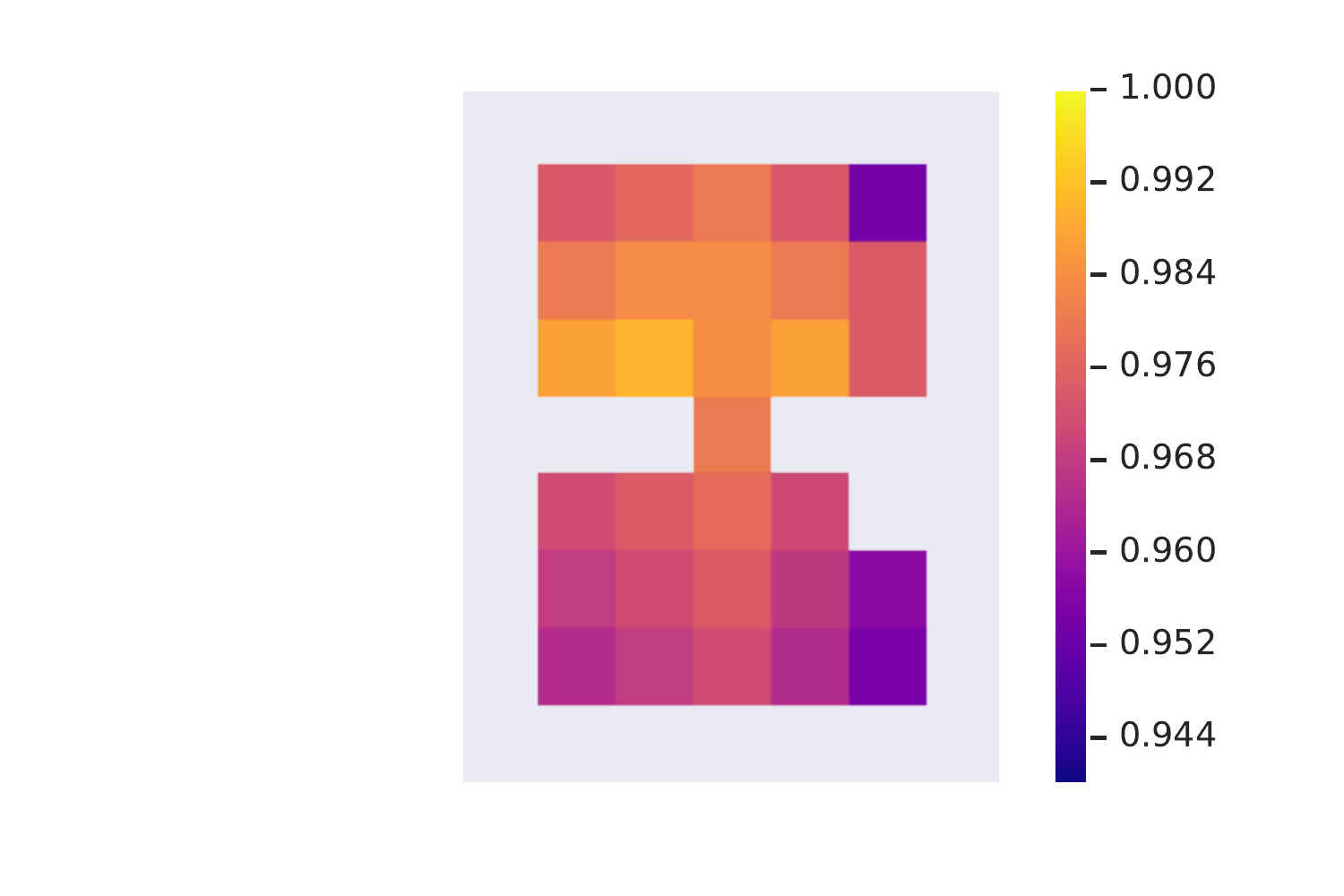}
  \end{subfigure}
  \begin{subfigure}[t]{0.18\textwidth}
    \centering\captionsetup{width=.8\linewidth}%
    \includegraphics[width=\textwidth]{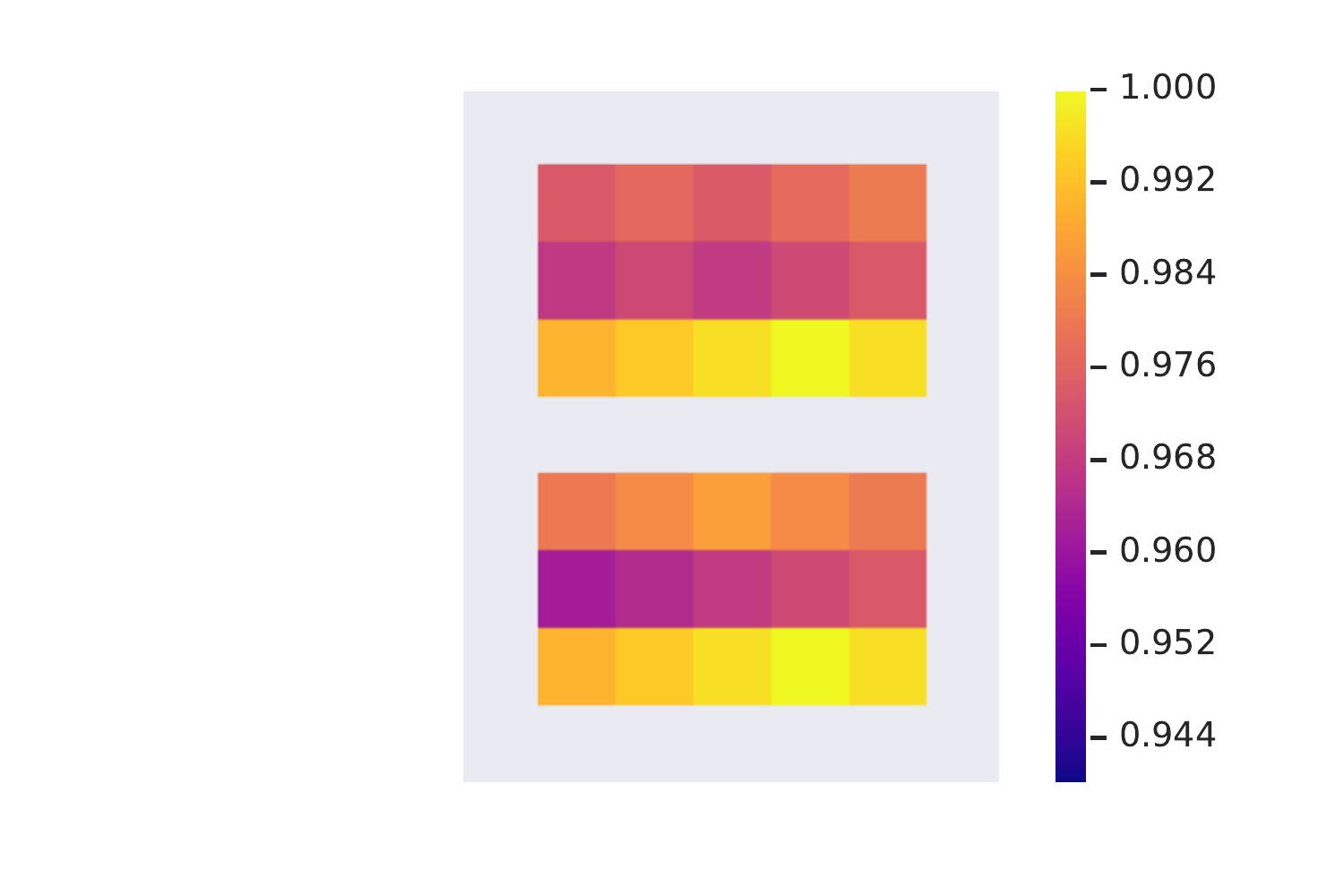}
  \end{subfigure}
  \begin{subfigure}[t]{0.18\textwidth}
    \centering\captionsetup{width=.8\linewidth}%
    \includegraphics[width=\textwidth]{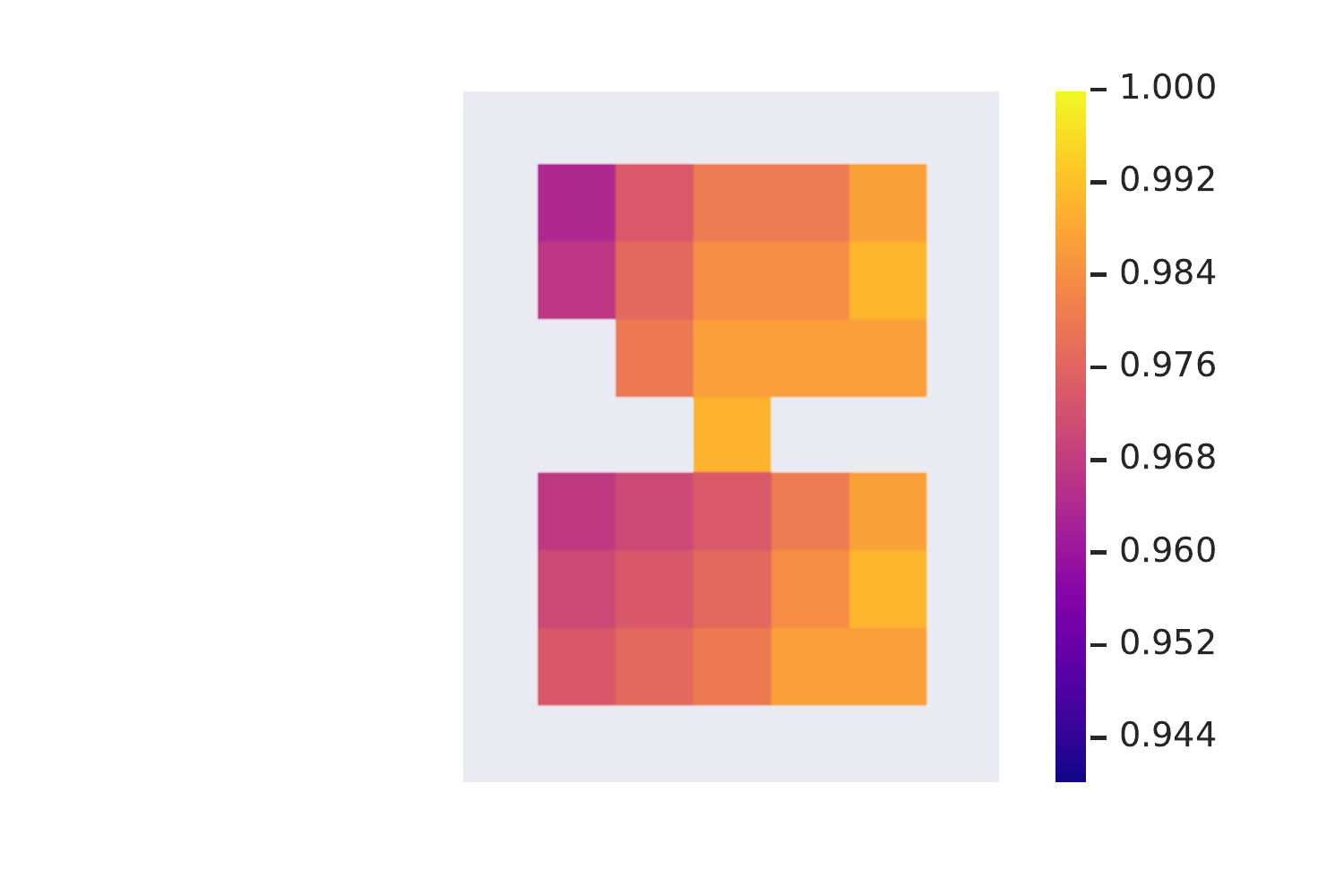}
  \end{subfigure}
  \begin{subfigure}[t]{0.18\textwidth}
    \centering\captionsetup{width=.8\linewidth}%
    \includegraphics[width=\textwidth]{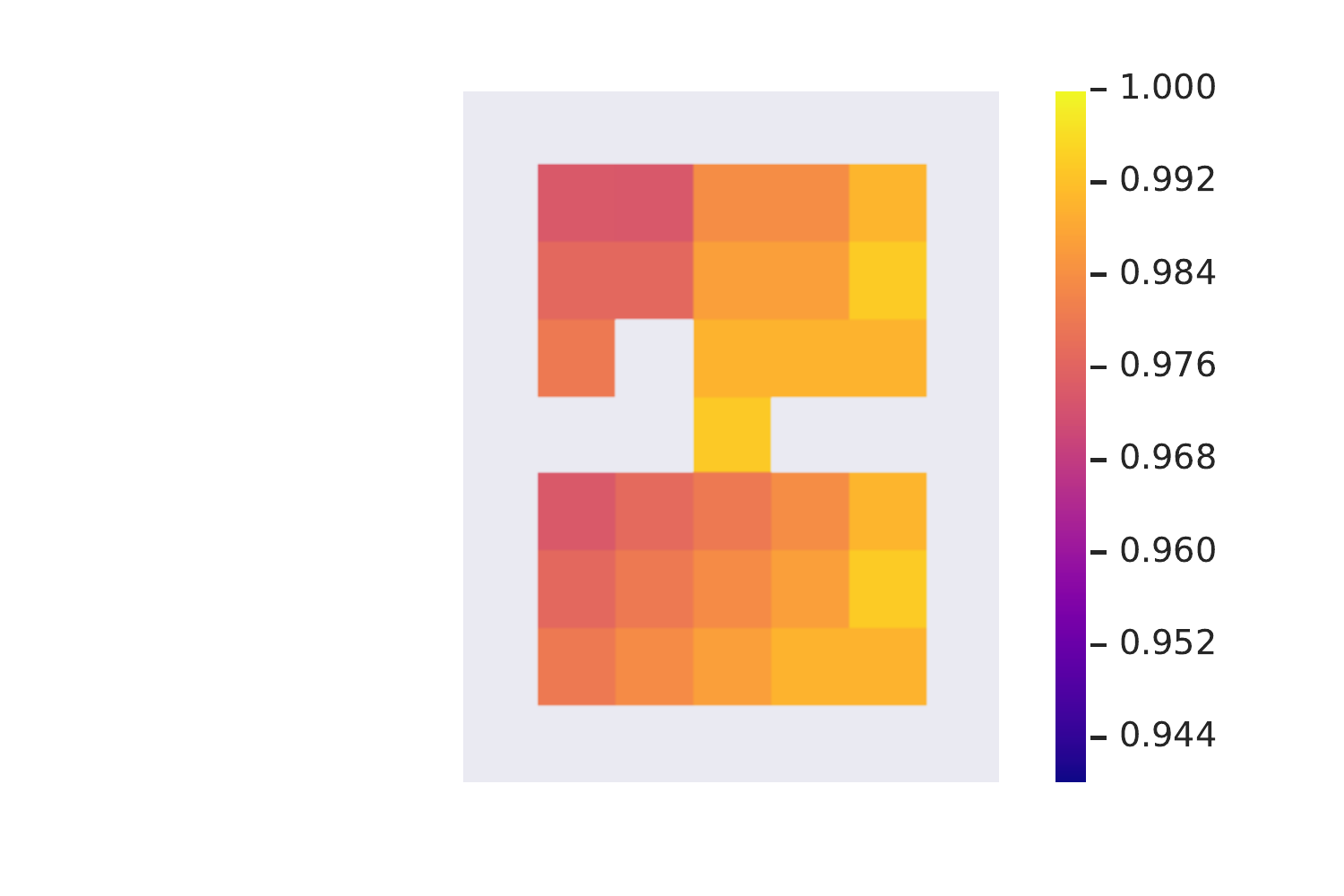}
  \end{subfigure}
  \begin{subfigure}[t]{0.18\textwidth}
    \centering\captionsetup{width=.8\linewidth}%
    \includegraphics[width=\textwidth]{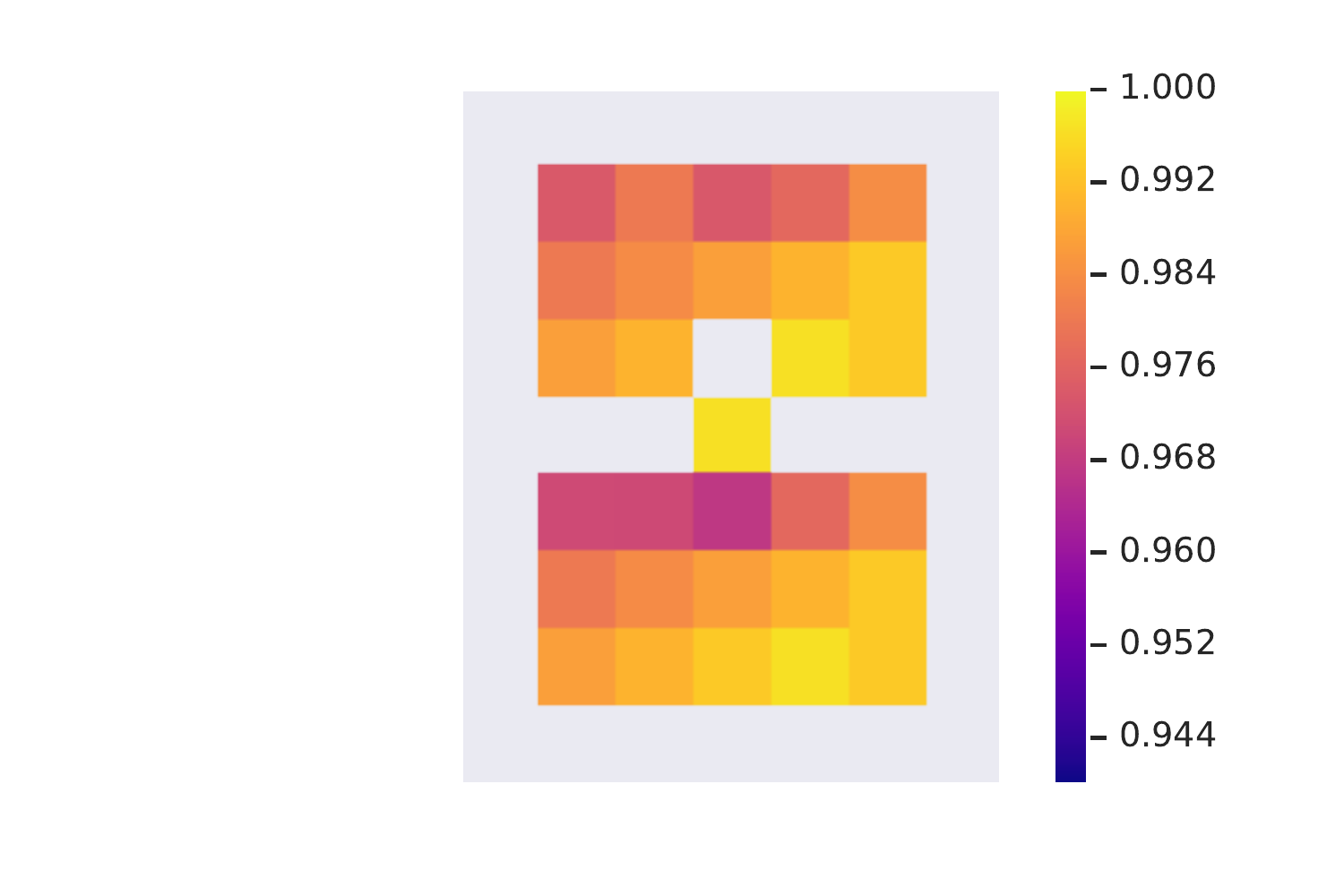}
  \end{subfigure}
  \begin{subfigure}[t]{0.18\textwidth}
    \centering\captionsetup{width=.8\linewidth}%
    \includegraphics[width=\textwidth]{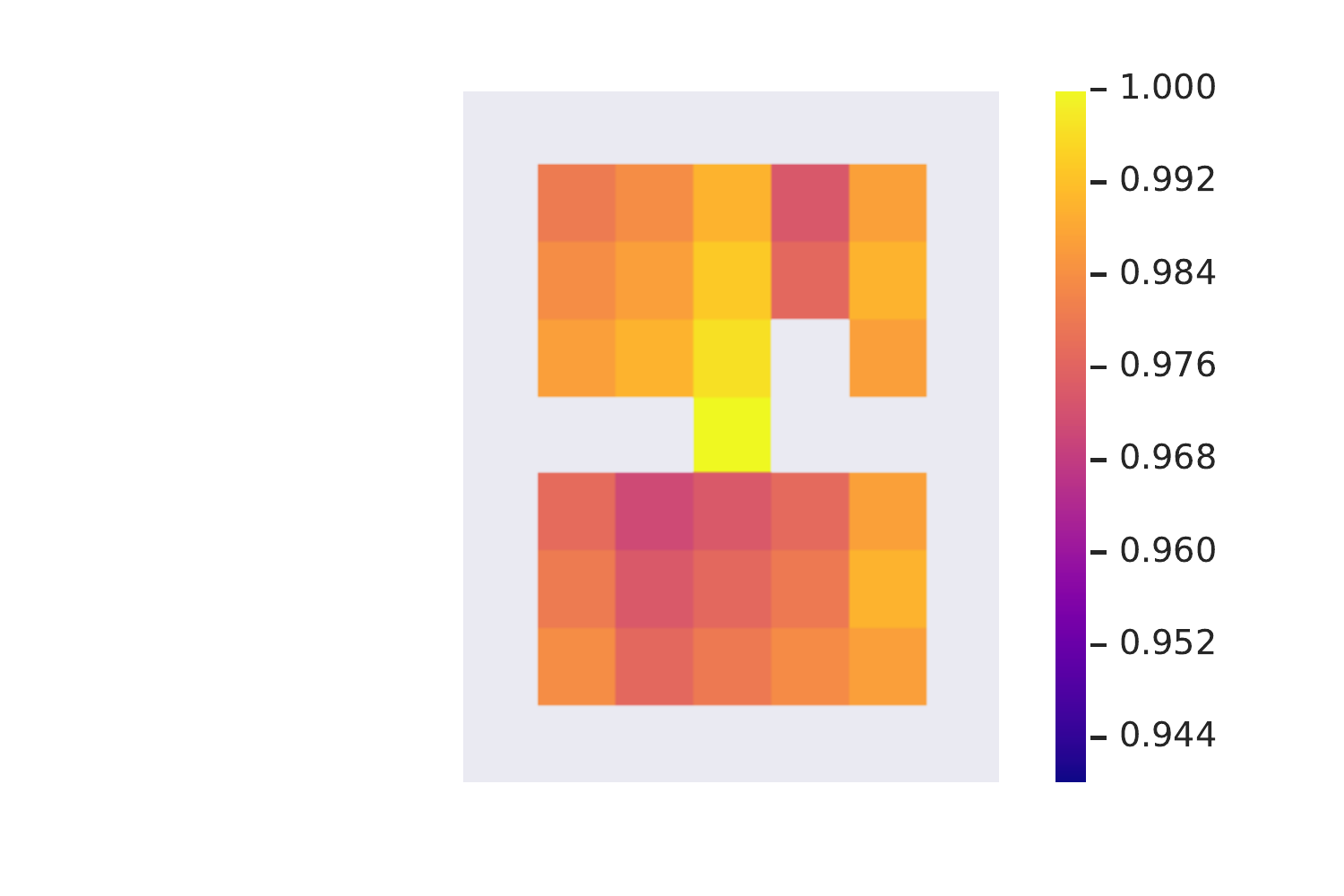}
  \end{subfigure}
  \begin{subfigure}[t]{0.18\textwidth}
    \centering\captionsetup{width=.8\linewidth}%
    \includegraphics[width=\textwidth]{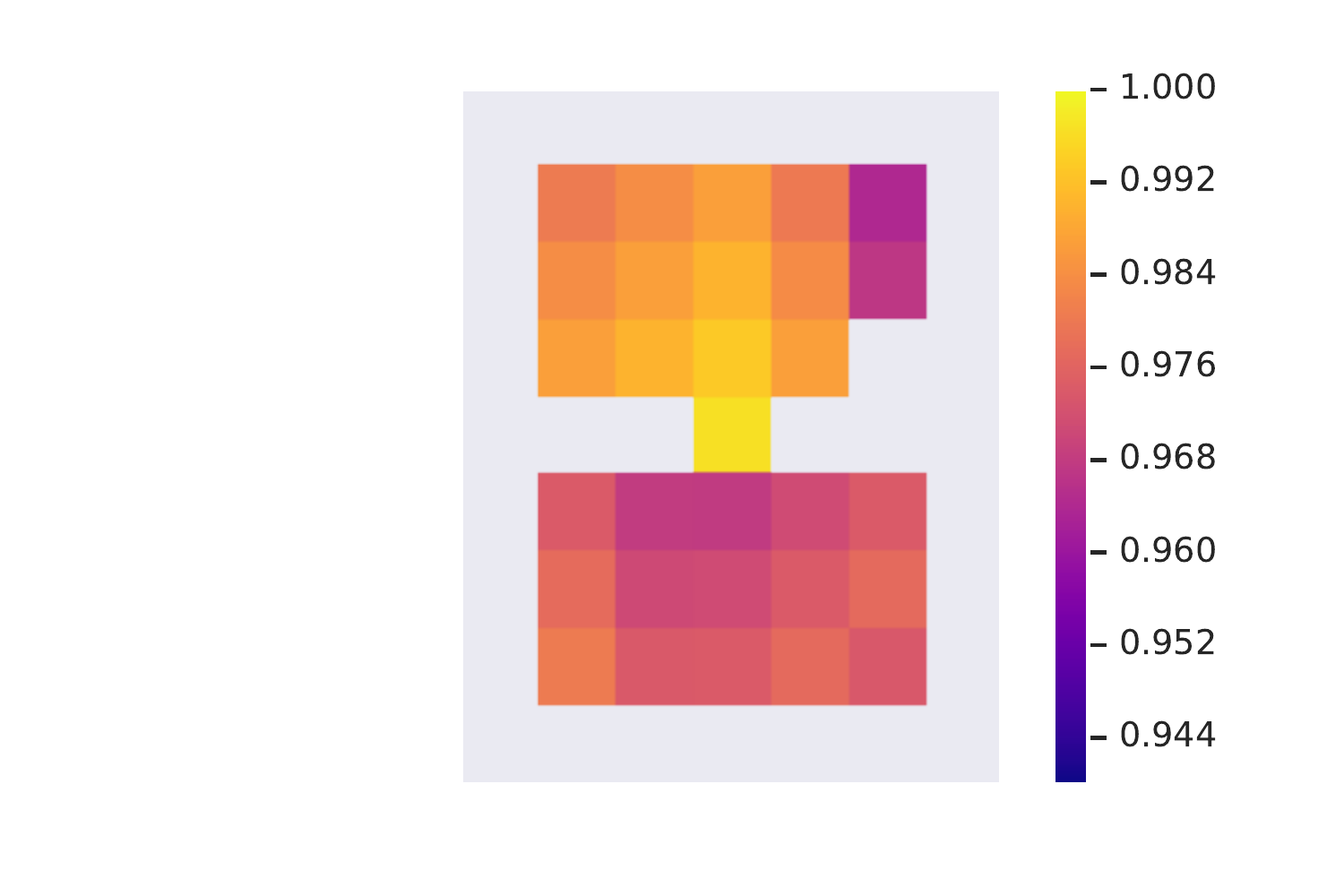}
  \end{subfigure}
  \begin{subfigure}[t]{0.18\textwidth}
    \centering\captionsetup{width=.8\linewidth}%
    \includegraphics[width=\textwidth]{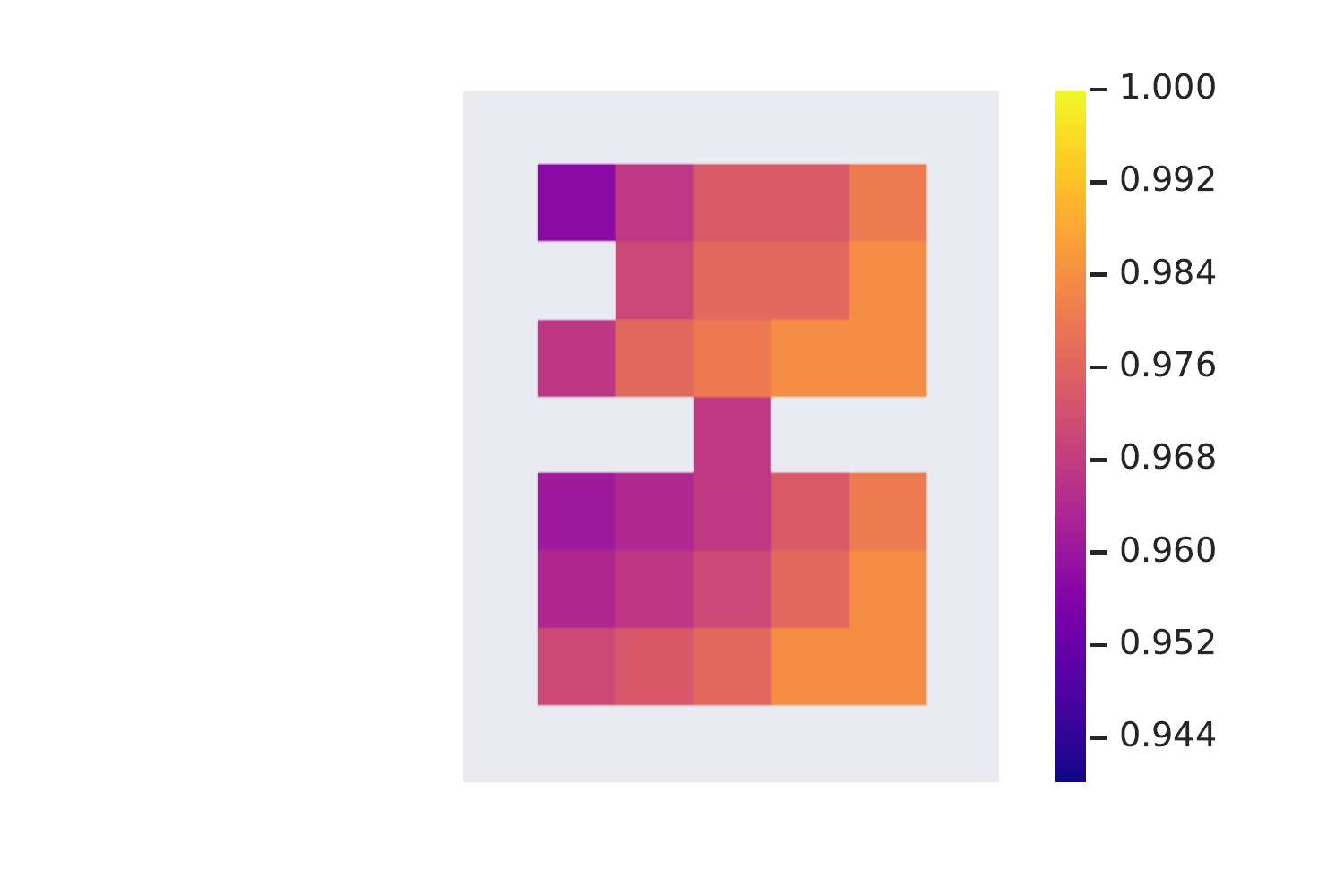}
  \end{subfigure}
  \begin{subfigure}[t]{0.18\textwidth}
    \centering\captionsetup{width=.8\linewidth}%
    \includegraphics[width=\textwidth]{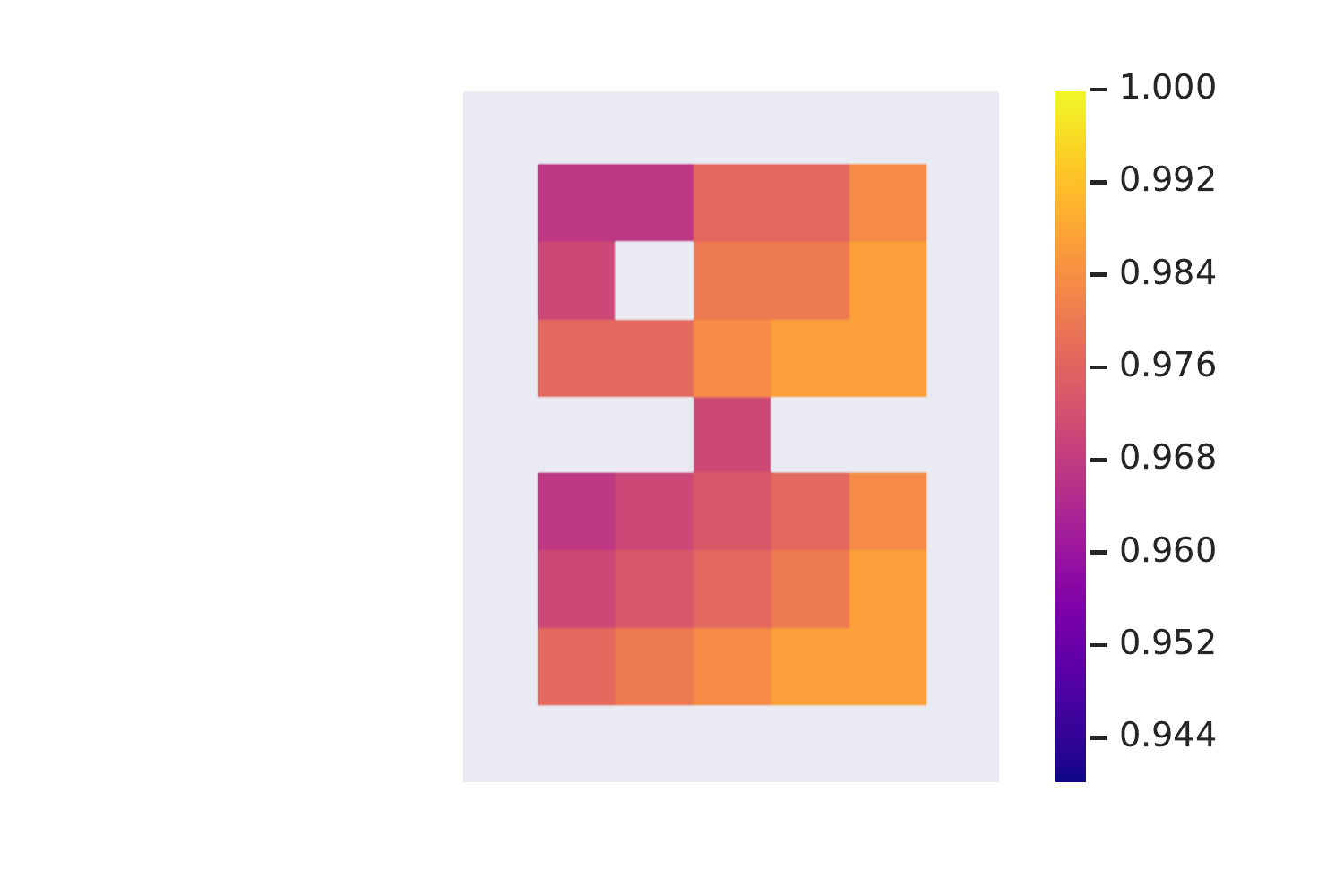}
  \end{subfigure}
  \begin{subfigure}[t]{0.18\textwidth}
    \centering\captionsetup{width=.8\linewidth}%
    \includegraphics[width=\textwidth]{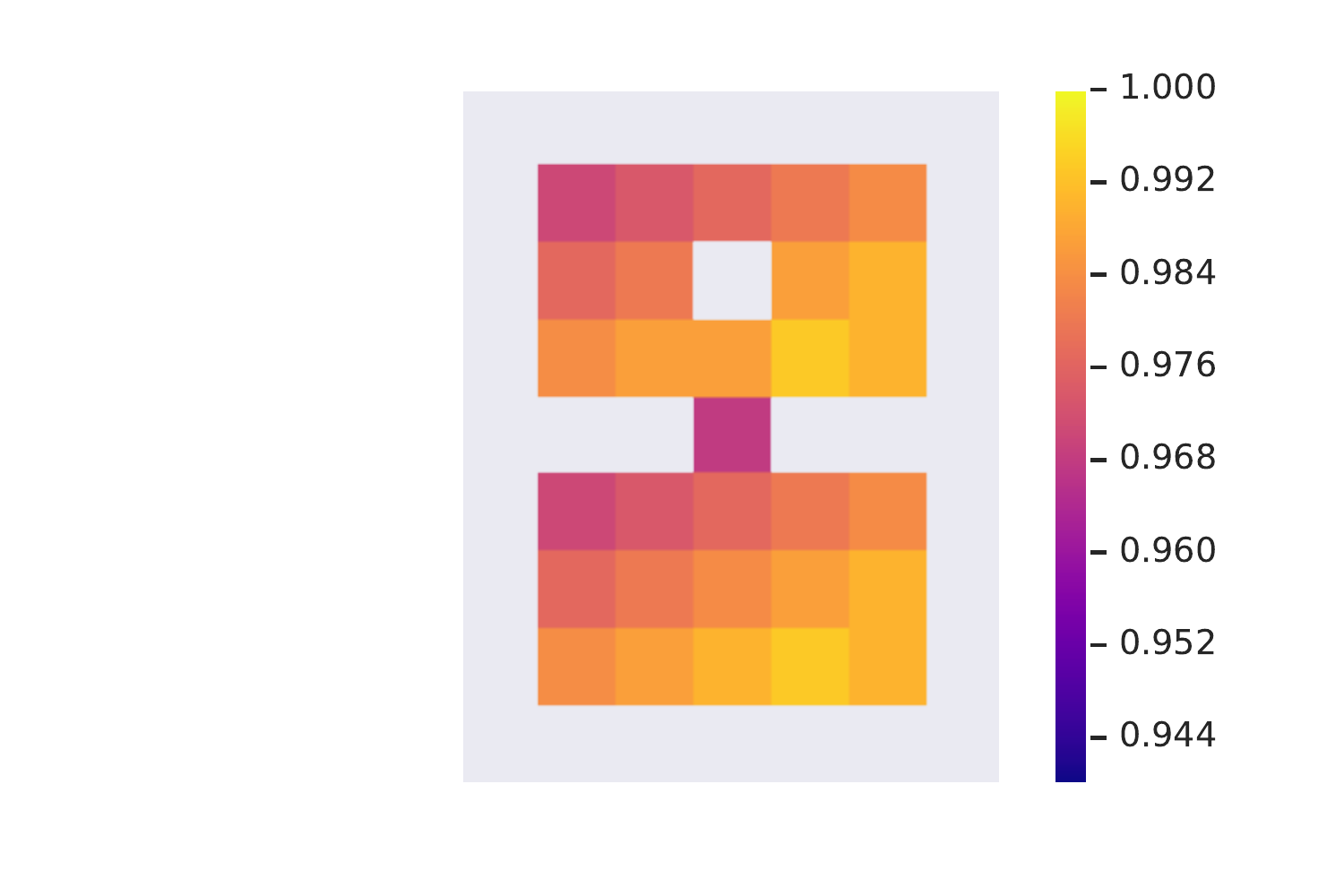}
  \end{subfigure}
  \begin{subfigure}[t]{0.18\textwidth}
    \centering\captionsetup{width=.8\linewidth}%
    \includegraphics[width=\textwidth]{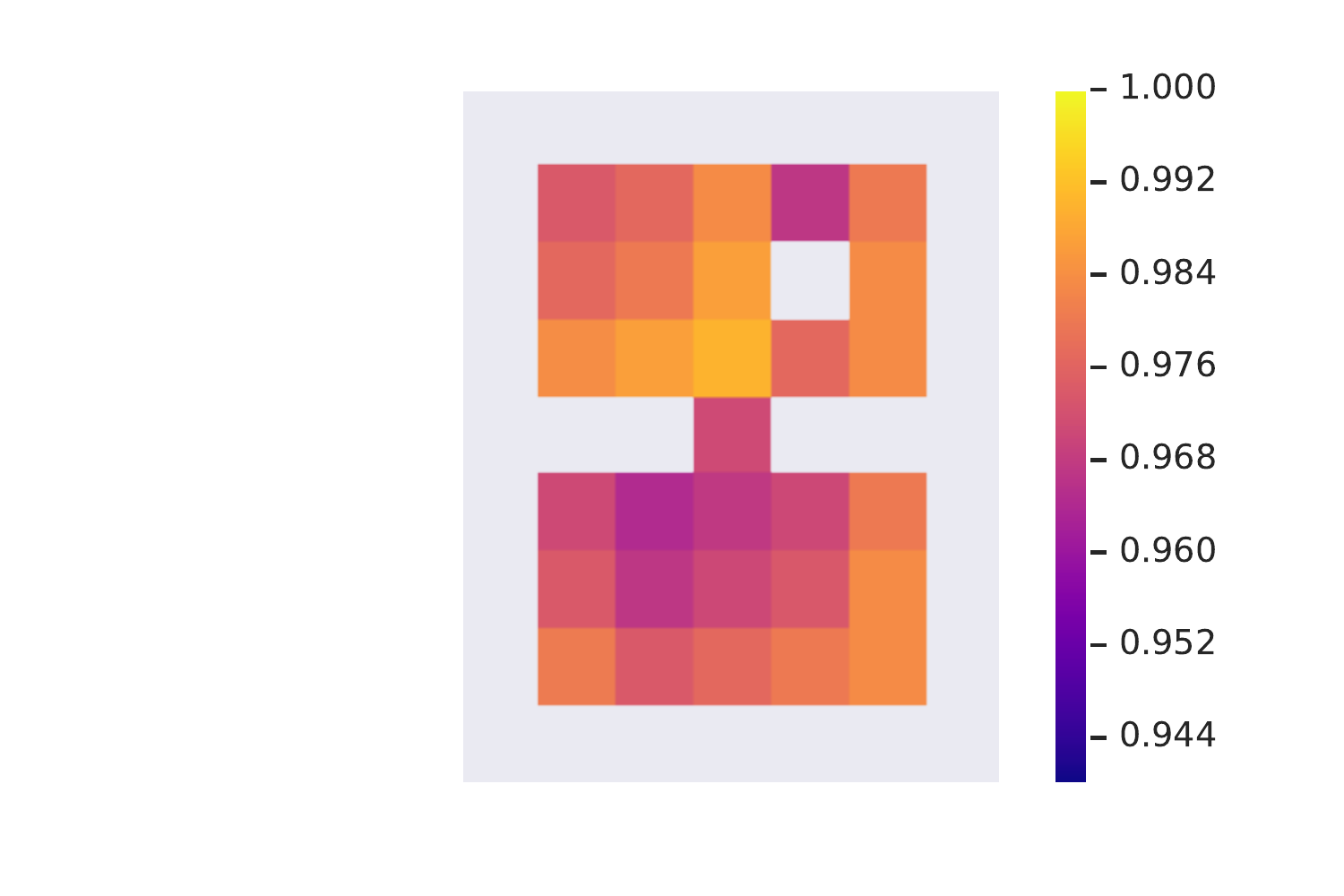}
  \end{subfigure}
  \begin{subfigure}[t]{0.18\textwidth}
    \centering\captionsetup{width=.8\linewidth}%
    \includegraphics[width=\textwidth]{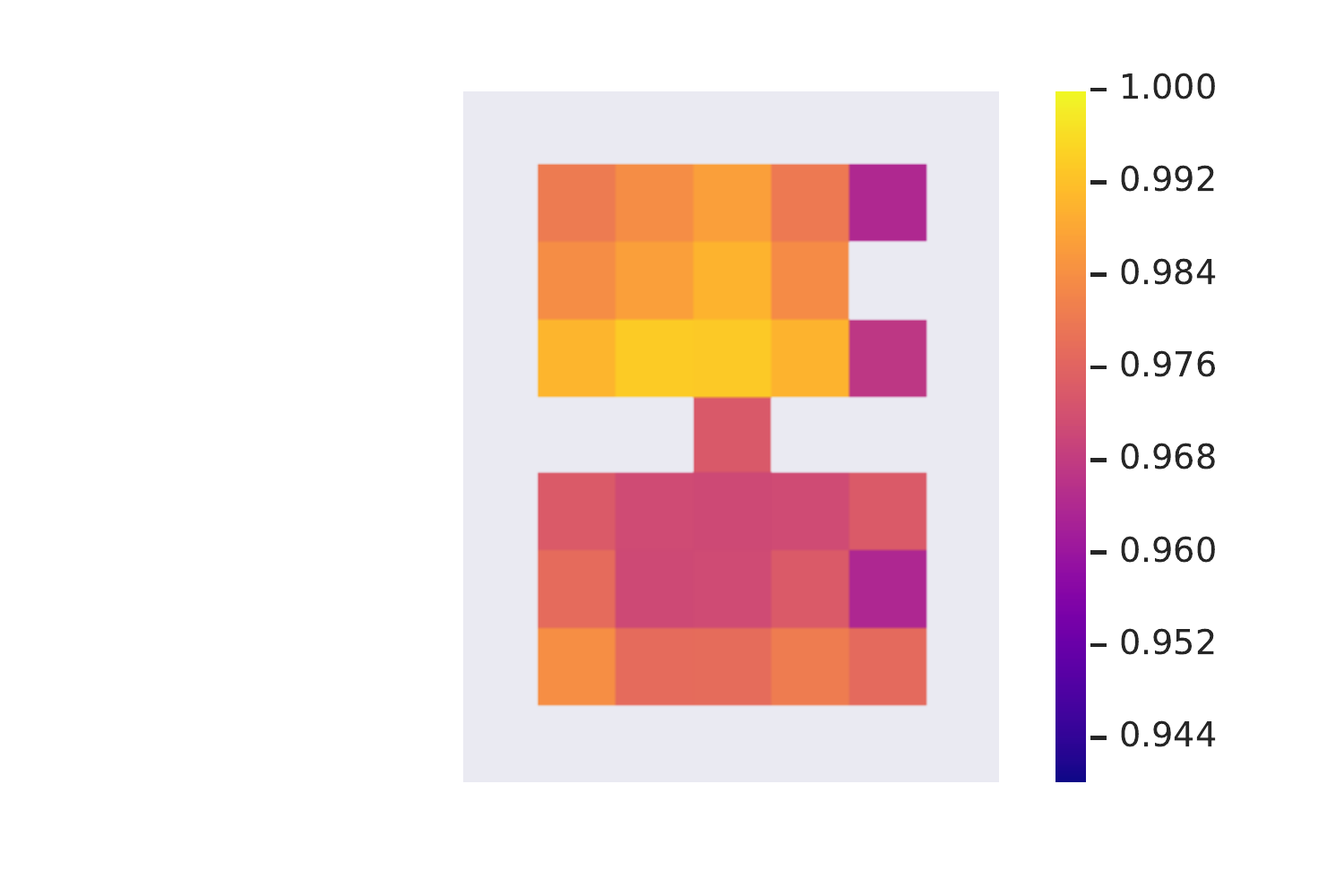}
  \end{subfigure}
  \begin{subfigure}[t]{0.18\textwidth}
    \centering\captionsetup{width=.8\linewidth}%
    \includegraphics[width=\textwidth]{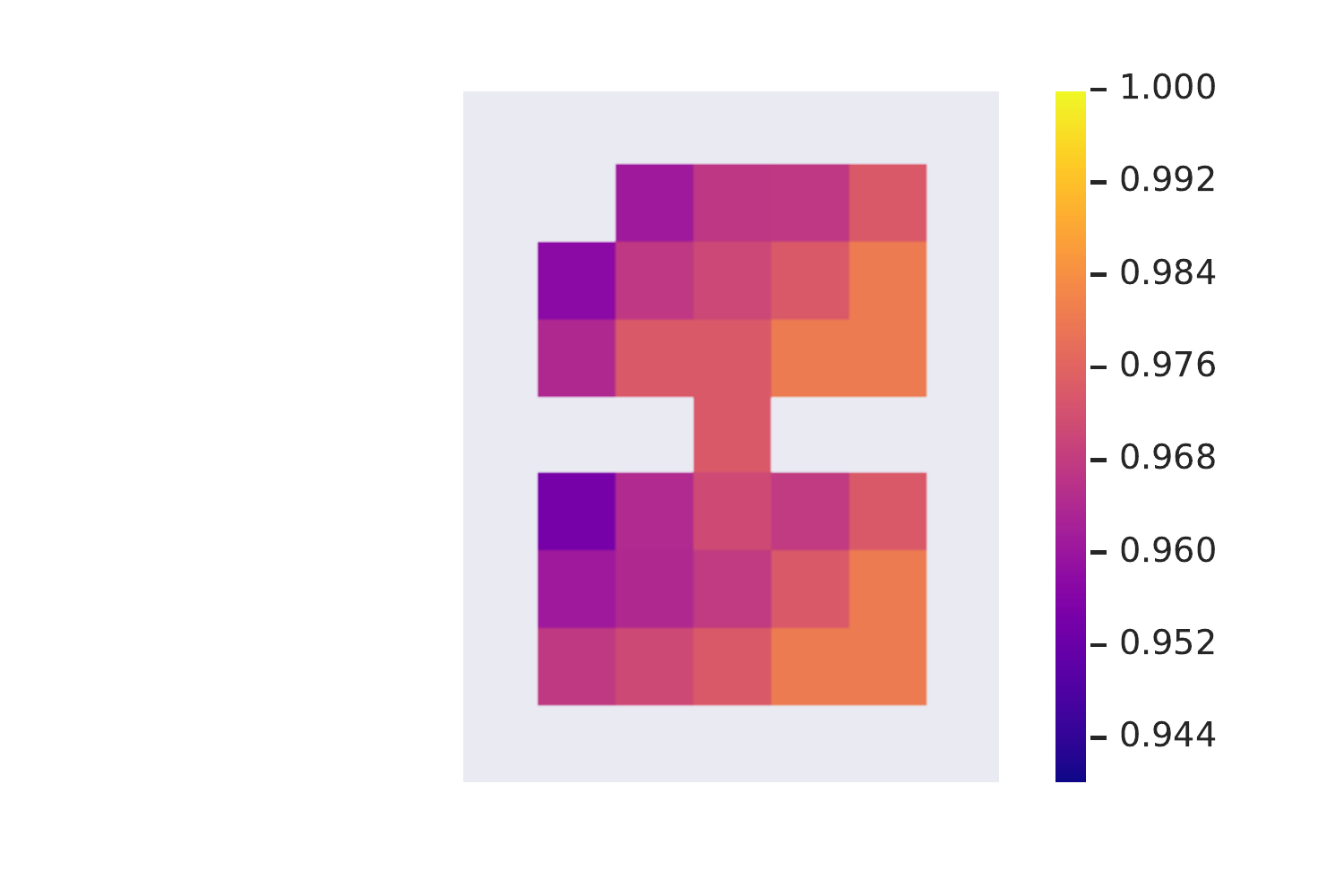}
  \end{subfigure}
  \begin{subfigure}[t]{0.18\textwidth}
    \centering\captionsetup{width=.8\linewidth}%
    \includegraphics[width=\textwidth]{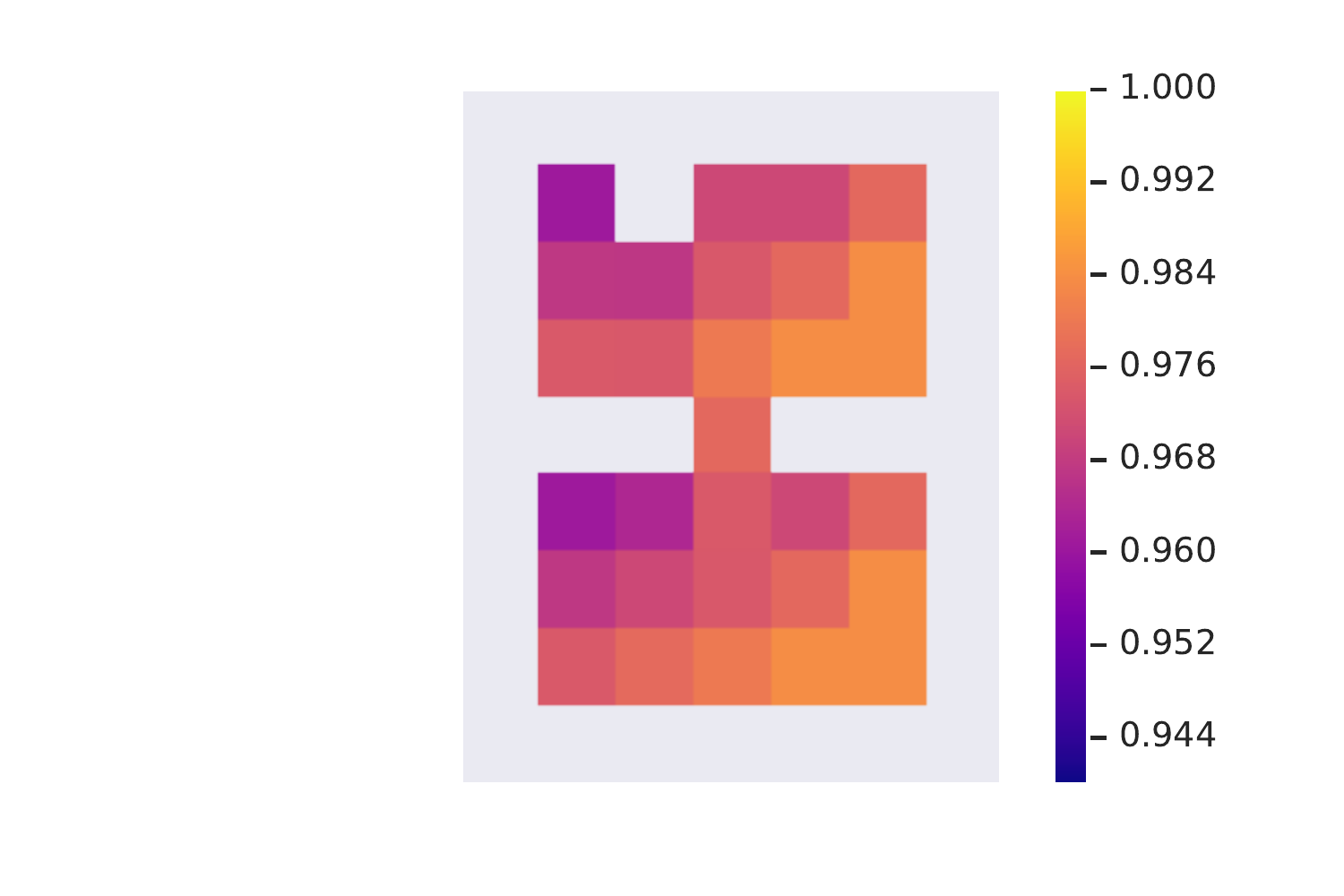}
  \end{subfigure}
  \begin{subfigure}[t]{0.18\textwidth}
    \centering\captionsetup{width=.8\linewidth}%
    \includegraphics[width=\textwidth]{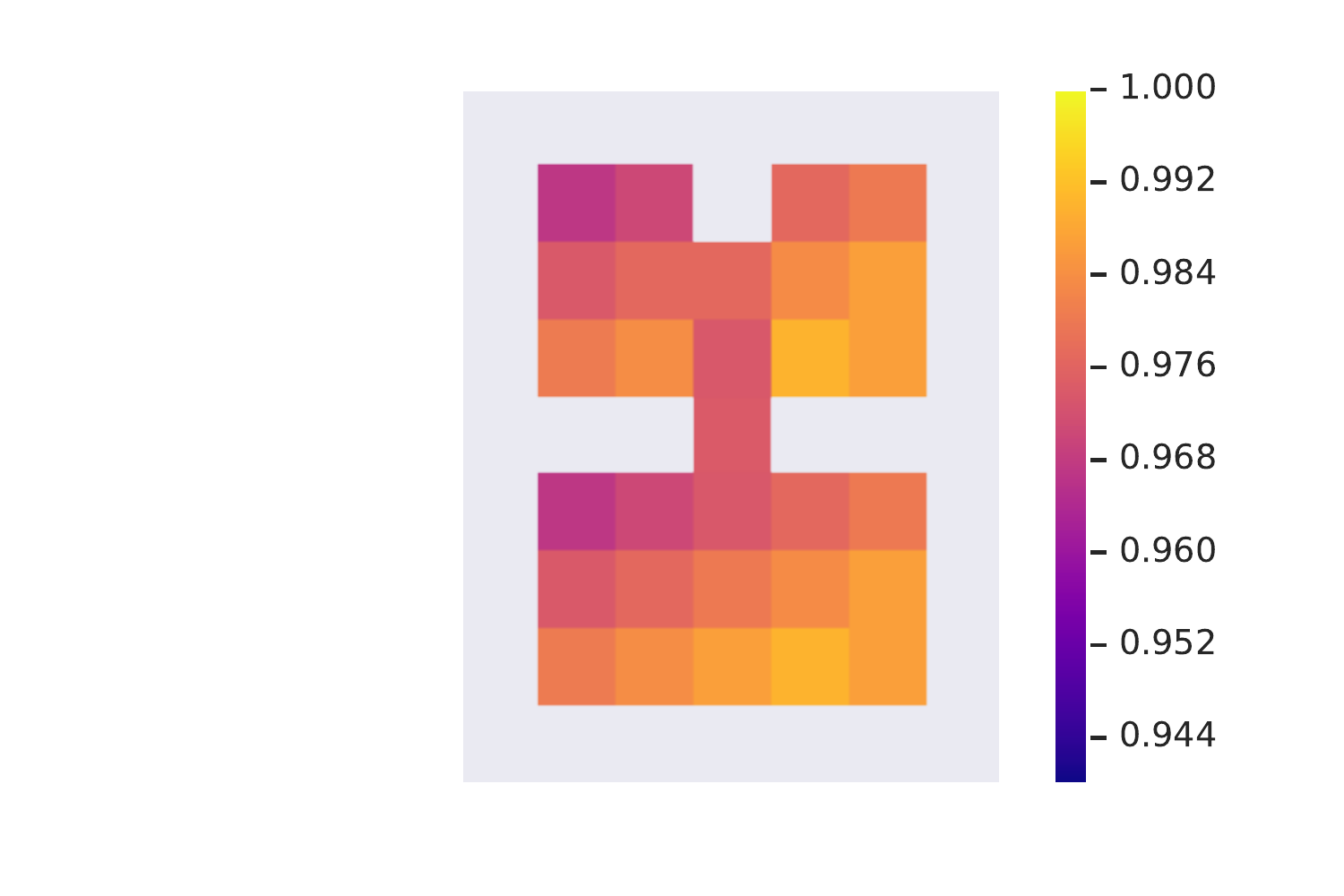}
  \end{subfigure}
  \begin{subfigure}[t]{0.18\textwidth}
    \centering\captionsetup{width=.8\linewidth}%
    \includegraphics[width=\textwidth]{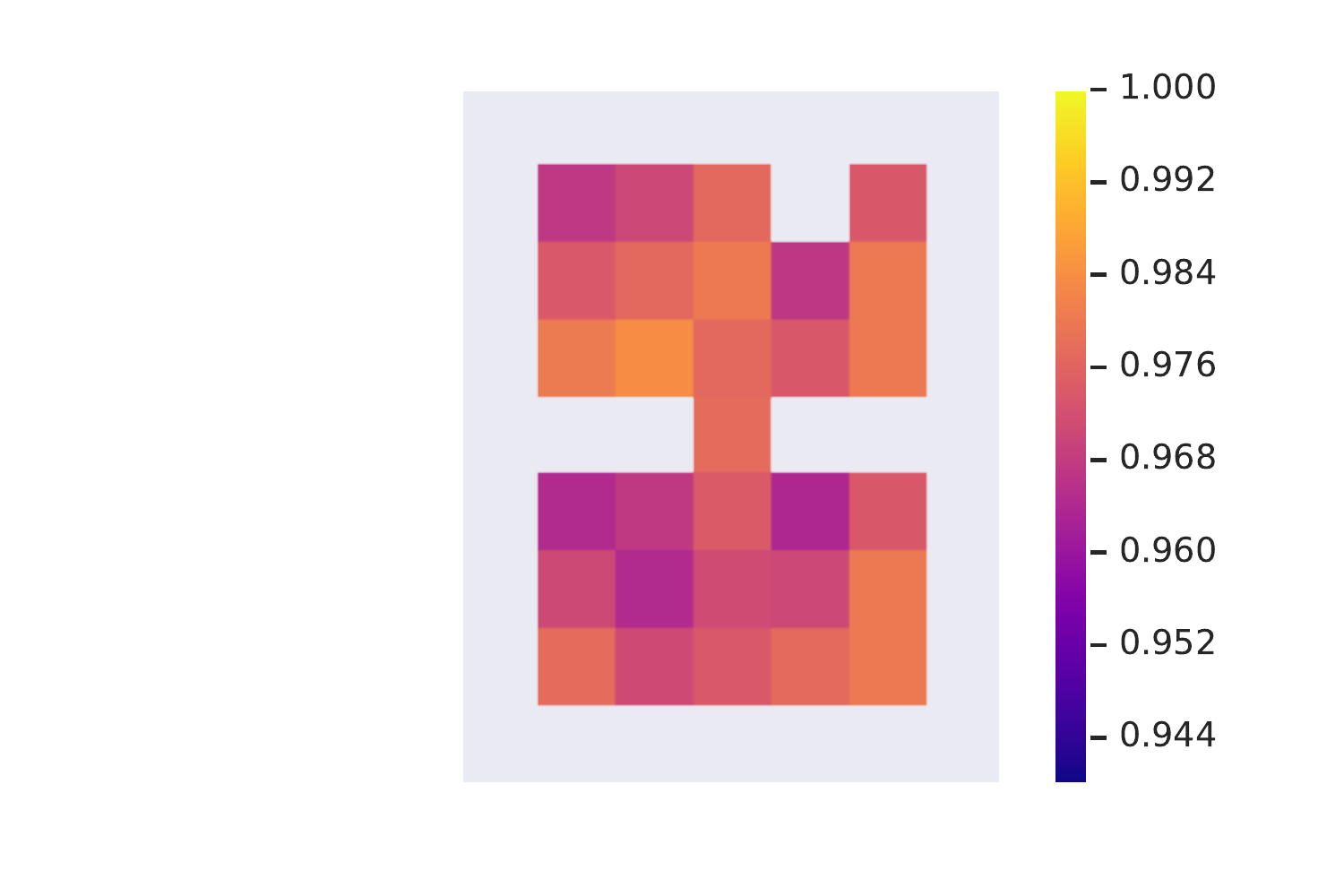}
  \end{subfigure}
  \begin{subfigure}[t]{0.18\textwidth}
    \centering\captionsetup{width=.8\linewidth}%
    \includegraphics[width=\textwidth]{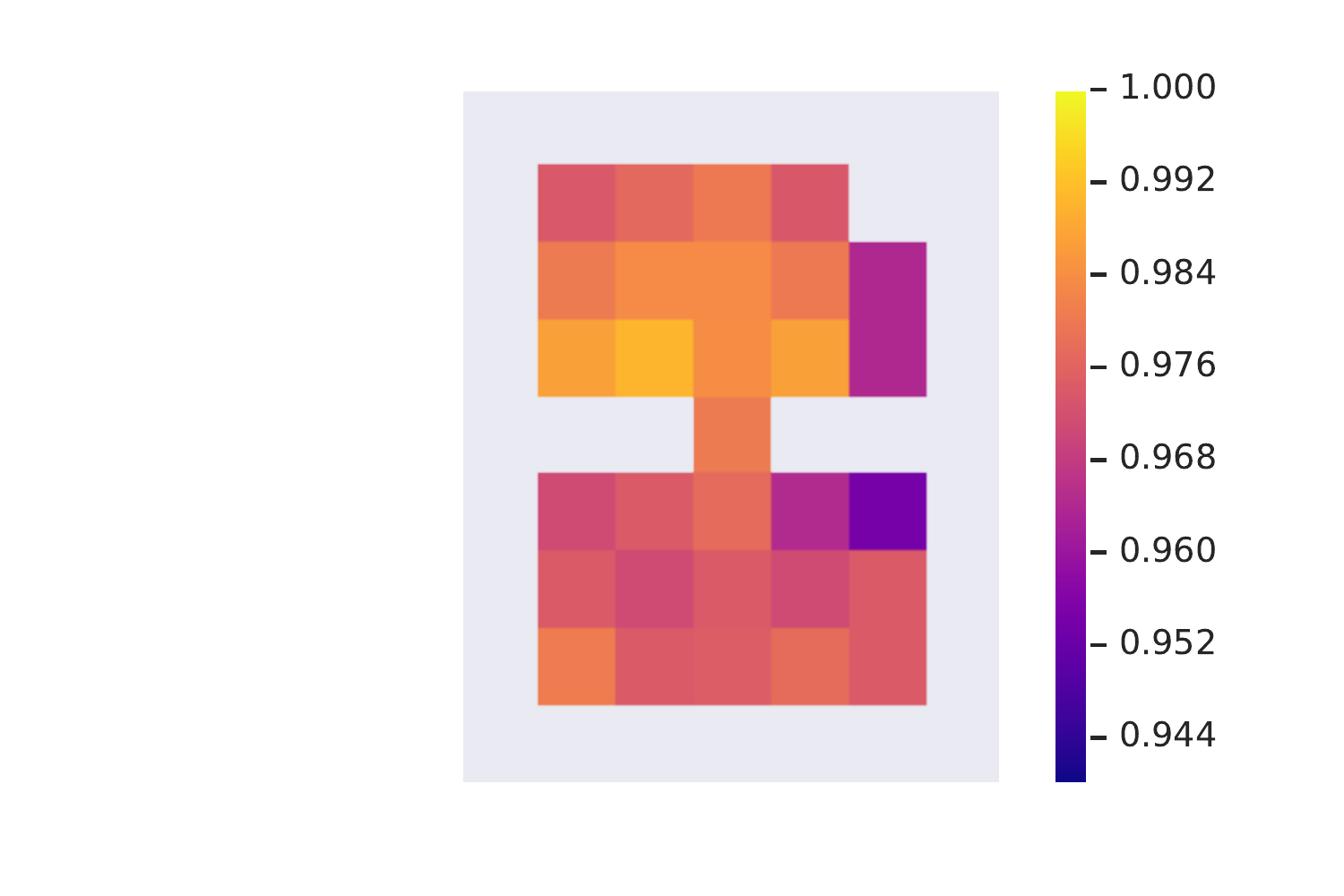}
  \end{subfigure}
  \caption{Bisimulation distances from all states.}
  \label{fig:bisim_distances}
\end{figure*}

\pagebreak

\section{Configuration file for GridWorld}
The following configuration file describes the hyperparameters used in
subsection~7.1.
\lstinputlisting[language=Python]{config.m}

\section{Aggregating states}
In \autoref{fig:aggregation} we explore aggregating a set of states sampled
from the continuous variant of the 31-state MDP. We sampled 100 independent
samples for each underlying cell, computed the distances between each pair of
sampled states, and then aggregated them by incrementally growing ``clusters''
of states while ensuring that all states in a cluster are within a certain
distance of each other.  As can be seen, our learned distance is able to
capture many of the symmetries present in the environment: the orange cluster
tends to gather near-goal states, the dark-brown and dark-blue clusters seems
to gather states further away from goals, while the bright red states properly
capture the unique ``hallway'' cell. This experiment highlights the potential
for successfully approximating bisimulation metrics in continuous state MDPs,
which can render continuous environments more manageable.

\begin{figure}[h]
  \centering
  \includegraphics[width=0.45\textwidth]{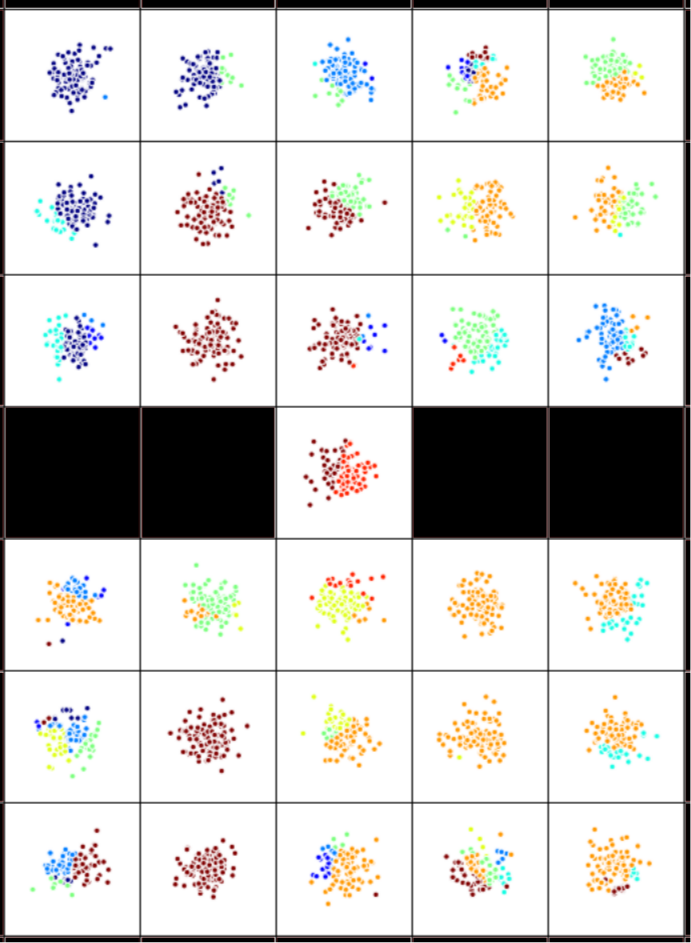}
  \caption{Aggregating samples drawn from a continuous MDP using the learned
  bisimulation metric approximant.}
  \label{fig:aggregation}
\end{figure}

The code for the clustering is displayed below:

\pagebreak

\lstinputlisting[language=Python]{clustering.m}

\pagebreak

\section{Distance plots for all games}
For each game we display four panels:
\begin{itemize}
  \item {\bf Bottom left:} Source frame (state $s$)
  \item {\bf Bottom right:} Closest frame so far
  \item {\bf Top left:} Current frame (state $t$)
  \item {\bf Top right:} Plot of distances from source frame to every other
    frame ($\psi^{\pi}_{\theta}([\phi(s), \phi(t)])$, in black) and the
    difference in value function, according to the trained Rainbow network
    ($|\hat{V}^{\pi}(\phi(s)) - \hat{V}^{\pi}(\phi(t))|$, in blue)
\end{itemize}

To pick the hyperparameters, we did a sweep over the following values, performing
3 independent runs for each setting. We picked a setting which gave the best
overall performance across all games (final values specified in the $rainbow.gin$
file provided with the source code, except for Pong where we used a learning rate
of $0.001$, as specified in the paper):
\begin{itemize}
  \item {\bf Learning rates:} $[0.00005, 0.000075, 0.0001, 0.00025, 0.0005, 0.00075, 0.001]$
  \item {\bf Batch size:} $[4, 8, 16, 32, 64, 128]$
  \item {\bf Target update period:} $[10, 50, 100, 250, 500, 1000, 2000]$
  \item {\bf Number of hidden layers:} $[1, 2, 3]$
  \item {\bf Number of hidden units per layer:} $[16, 32, 64, 128]$
\end{itemize}

\begin{figure}[!h]
  \centering
  \includegraphics[width=0.8\textwidth]{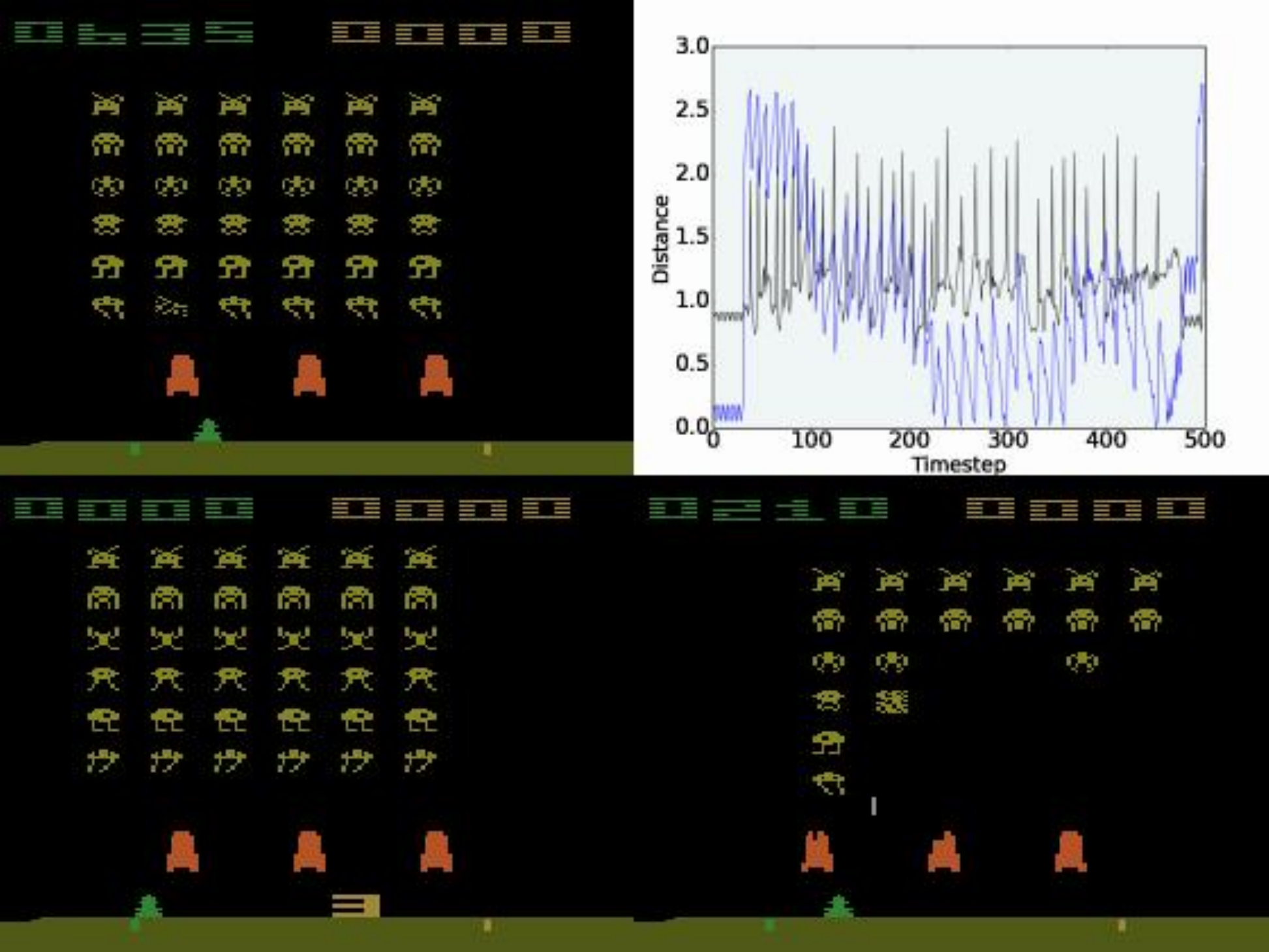}
  \caption{Space Invaders: evaluation at frame 500.}
\end{figure}

\begin{figure}[!h]
  \centering
  \includegraphics[width=0.8\textwidth]{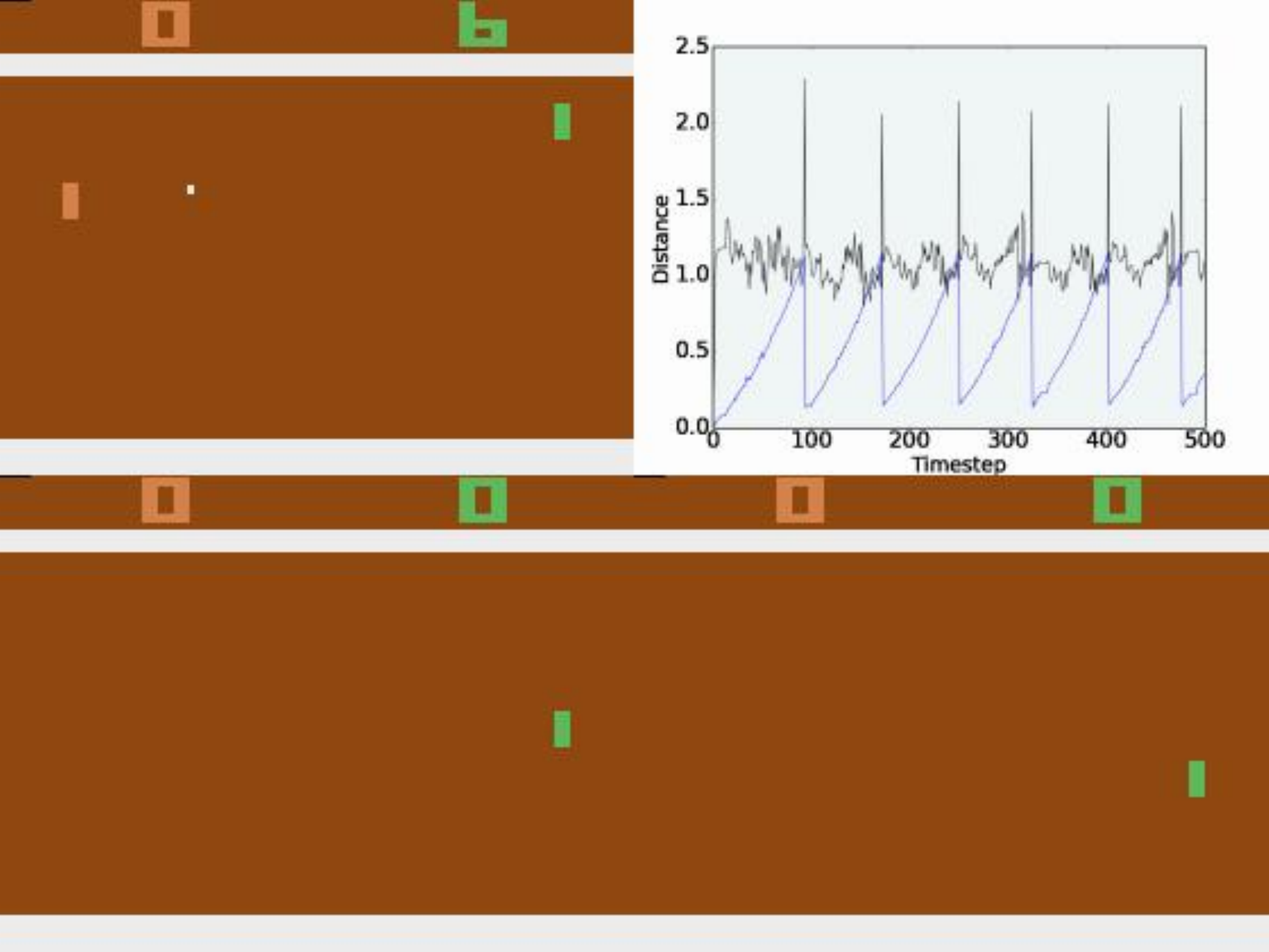}
  \caption{Pong: evaluation at frame 500, peaks are when the agent scores.}
\end{figure}

\begin{figure}[!h]
  \centering
  \includegraphics[width=0.8\textwidth]{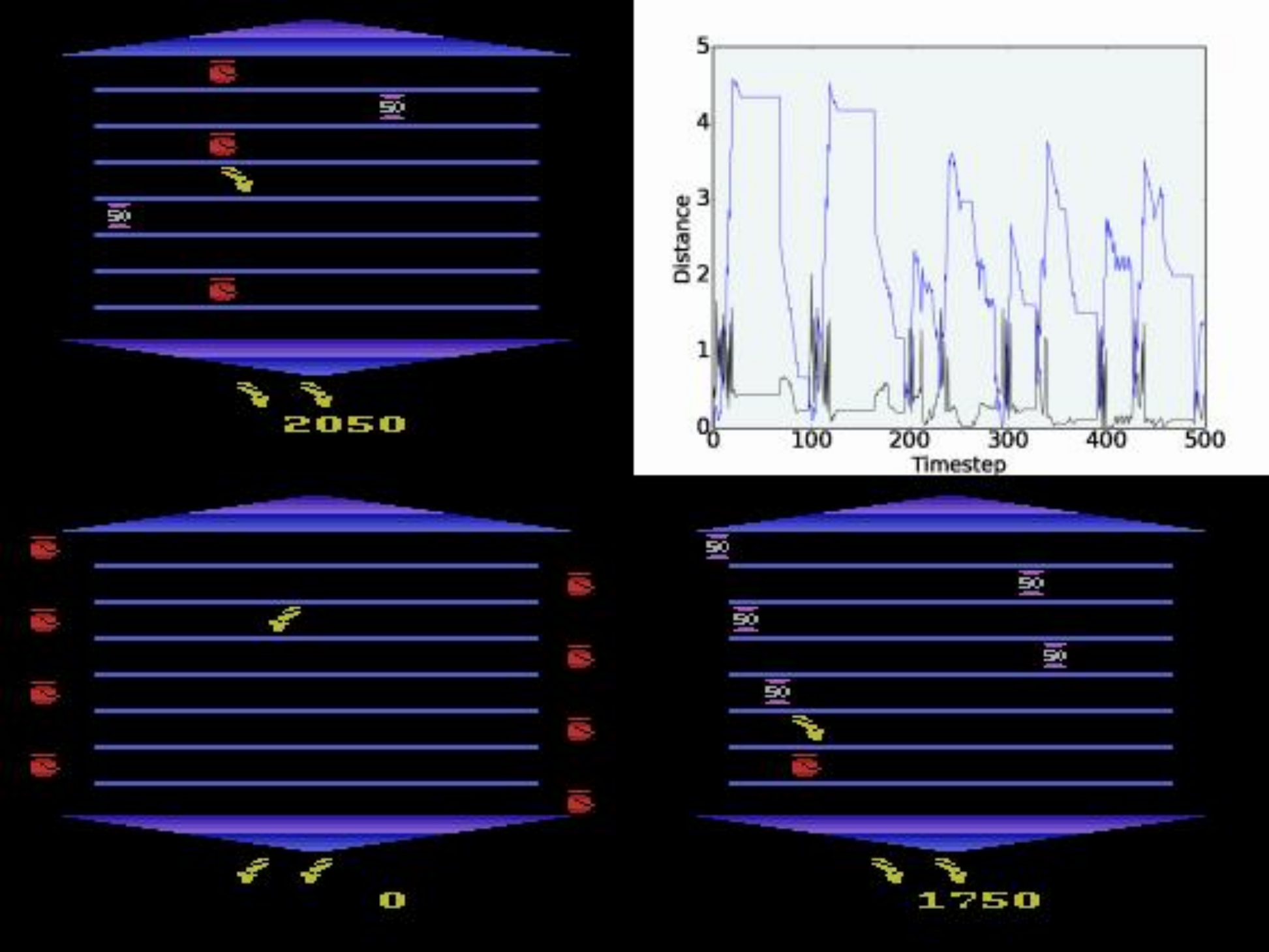}
  \caption{Asterix: evaluation at frame 500. Plateaus are when the agent is not
  moving.}
\end{figure}

\pagebreak

\section{Training curves for $d^{\pi}_{\sim}$}
We provide the training curves for the bisimulation metric $d^{\pi}_{\sim}$
learned on the trained reinforcement learning agents.

\begin{figure*}[t]
  \begin{subfigure}[t]{0.33\textwidth}
    \centering
    \includegraphics[width=\textwidth]{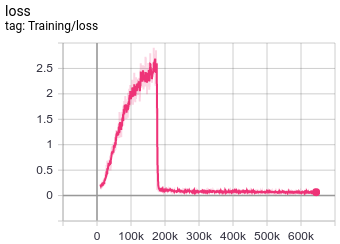}
  \end{subfigure}
  \begin{subfigure}[t]{0.33\textwidth}
    \centering
    \includegraphics[width=\textwidth]{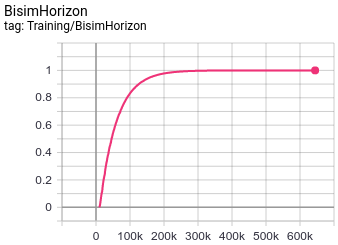}
  \end{subfigure}
  \begin{subfigure}[t]{0.33\textwidth}
    \centering
    \includegraphics[width=\textwidth]{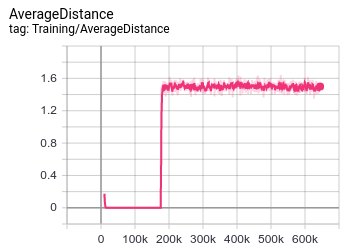}
  \end{subfigure}
  \caption{Training curves for Space Invaders.}
\end{figure*}

\begin{figure*}[t]
  \begin{subfigure}[t]{0.33\textwidth}
    \centering
    \includegraphics[width=\textwidth]{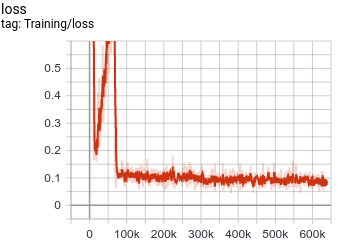}
  \end{subfigure}
  \begin{subfigure}[t]{0.33\textwidth}
    \centering
    \includegraphics[width=\textwidth]{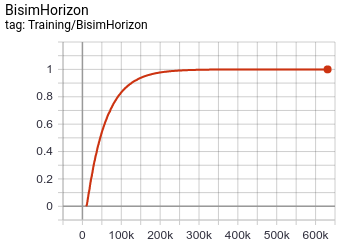}
  \end{subfigure}
  \begin{subfigure}[t]{0.33\textwidth}
    \centering
    \includegraphics[width=\textwidth]{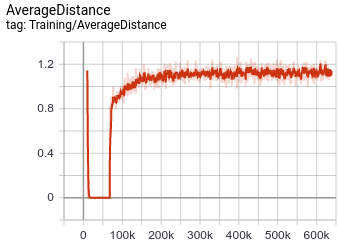}
  \end{subfigure}
  \caption{Training curves for Asterix.}
\end{figure*}

\begin{figure*}[t]
  \begin{subfigure}[t]{0.33\textwidth}
    \centering
    \includegraphics[width=\textwidth]{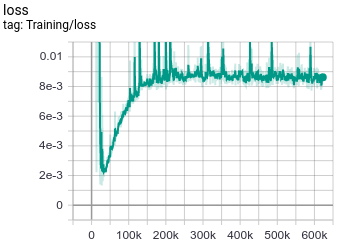}
  \end{subfigure}
  \begin{subfigure}[t]{0.33\textwidth}
    \centering
    \includegraphics[width=\textwidth]{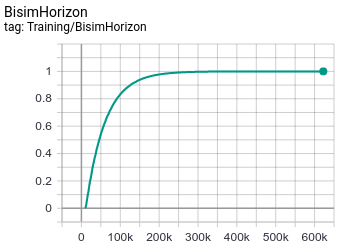}
  \end{subfigure}
  \begin{subfigure}[t]{0.33\textwidth}
    \centering
    \includegraphics[width=\textwidth]{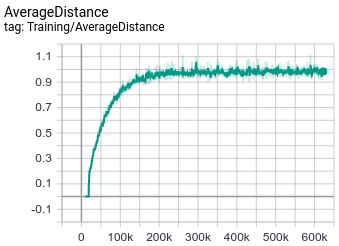}
  \end{subfigure}
  \caption{Training curves for Pong.}
\end{figure*}

\section{Lax bisimulation metrics}
For completeness, in this section we include the main definitions and
theoretical results of lax-bisimulation metrics introduced in
\cite{taylor09bounding}, modified to match the notation used in this paper.
\begin{definition}
  A relation $E\subseteq\cS\times\cS$ is a lax (probabilistic) bisimulation
  relation if whenever $(s,t)\in E$ we have that:
  \begin{enumerate}
    \item $\forall a\in\cA\exists b\in\cA$ such that $\cR(s, a)=\cR(t, b)$
    \item $\forall a\in\cA\exists b\in\cA.\quad\forall c\in\cS_{E}.\cP(s, a)(c) = \cP(t, b)(c)$,\\
      where $\cP(x, y)(c) = \sum_{z\in c}\cP(x, y)(z)$,
  \end{enumerate}
  The lax bisimulation $\sim_{lax}$ is the union of all lax bisimulation relations.
\end{definition}

\begin{definition}
  Given a $1$-bounded pseudometric $d\in\rM$, the metric
  $\delta(d):\cS\times\cA\rightarrow [0,1]$ is defined as follows:
  \[ \delta(d)((s, a), (t, b)) = |\cR(s, a) - \cR(t, b)| + \gamma\cW(d)(\cP(s, a), \cP(t, b)) \]
\end{definition}

\begin{definition}
  Given a finite $1$-bounded metric space $(\mathfrak{M}, d)$ let
  $\mathcal{C}(\mathfrak{M})$ be the set of compact spaces (e.g. closed and
  bounded in $\rR$). The {\em Hausdorff metric}
  $H(d):\mathcal{C}(\mathfrak{M})\times\mathcal{C}(\mathfrak{M})\rightarrow [0,
  1]$ is defined as:
  \[ H(d)(X, Y) = \max\left(\sup_{x\in X}\inf_{y\in Y} d(x, y), \sup_{y\in Y}\inf_{x\in X} d(x, y)\right) \]
\end{definition}

\begin{definition}
  Denote $X_s=\lbrace (s, a) | a\in\cA\rbrace$. We define the operator
  $F:\rM\rightarrow\rM$ as:
  \[ F(d)(s, t) = H(\delta(d))(X_s, X_t) \]
\end{definition}

\begin{theorem}
  $F$ is monotonic and has a least fixed point $d_{lax}$ in which
  $d_{lax}(s, t) = 0$ iff $s\sim_{lax} t$.
\end{theorem}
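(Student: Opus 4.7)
\medskip

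\noindent\textbf{Proof proposal.} The plan is to mirror the structure of the proof of Theorem~\ref{thm:pi_bisim_operator} (and of Theorem~4.5 in \cite{ferns04metrics}), treating $F$ as an endofunctor on the $\omega$-cpo $(\rM, \leq)$ of $1$-bounded pseudometrics ordered pointwise, with bottom $\bot\equiv 0$. I would first establish monotonicity/continuity, invoke Kleene's fixed point theorem to produce $d_{lax}$, and then characterize its zero set.

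For monotonicity, I would show that each building block of $F$ is monotone in $d$. The Wasserstein lift $\cW$ is monotone and $\omega$-continuous in $d$ (Lemma~4.4 of \cite{ferns04metrics}); hence $\delta(d)((s,a),(t,b)) = |\cR(s,a)-\cR(t,b)| + \gamma\cW(d)(\cP(s,a),\cP(t,b))$ is monotone and continuous in $d$, since the reward term does not depend on $d$ and $\gamma\geq 0$. The Hausdorff lift $H$ is a combination of $\sup$'s and $\inf$'s over finite sets $X_s, X_t$ (finiteness follows from $\cA$ being finite), and composing monotone/continuous maps with these operations preserves both properties. Thus $F$ is monotone and $\omega$-continuous on the $\omega$-cpo $\rM$.

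Existence of a least fixed point then follows from Kleene's theorem: $d_{lax} := \bigsqcup_{n\in\rN} F^n(\bot)$ is the least fixed point of $F$, and $d_{lax}\in\rM$ because $\rM$ is closed under suprema of $\omega$-chains of $1$-bounded pseudometrics.

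The substance of the theorem is the zero-set characterization. For the $(\Leftarrow)$ direction, I would show by induction on $n$ that $s\sim_{lax} t \implies F^n(\bot)(s,t) = 0$. The base case is immediate. For the inductive step, given a lax bisimulation $E$ with $(s,t)\in E$, for each $a\in\cA$ choose the witnessing $b\in\cA$ from the definition; the reward term vanishes, and the inductive hypothesis together with the fact that the distributions $\cP(s,a)$ and $\cP(t,b)$ agree on $E$-classes lets me construct a coupling supported on $E$, forcing $\cW(F^{n-1}(\bot))(\cP(s,a),\cP(t,b))=0$. By symmetry the same holds swapping the roles, so $H(\delta(F^{n-1}(\bot)))(X_s,X_t)=0$. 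Taking the supremum over $n$ yields $d_{lax}(s,t)=0$. For the $(\Rightarrow)$ direction, I would define $E_0 := \{(s,t)\mid d_{lax}(s,t)=0\}$ and verify it is a lax bisimulation. Since $F(d_{lax})=d_{lax}$, whenever $(s,t)\in E_0$ each $a\in\cA$ admits some $b\in\cA$ with $\delta(d_{lax})((s,a),(t,b))=0$, giving reward equality and $\cW(d_{lax})(\cP(s,a),\cP(t,b))=0$; I would then use the standard fact that vanishing Wasserstein distance against a pseudometric $d$ forces the two measures to agree on every $E_d$-equivalence class, specialized to $E_{d_{lax}}=E_0$.

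The main obstacle will be this last bridge between the analytic statement ($\cW$-distance zero) and the combinatorial statement (measures agree on $E_0$-classes). I expect to handle it by revisiting the primal LP in \autoref{eqn:primal_lp}: for any class $C\in\cS_{E_0}$, the indicator $u_{s'}=\mathbbm{1}[s'\in C]$ (suitably relaxed to be $E_0$-measurable so that $u_{s'}-u_{s''}\leq d_{lax}(s',s'')$ holds — this uses that $d_{lax}$ separates classes of $E_0$) is feasible, so $\cW(d_{lax})=0$ forces $\cP(s,a)(C)=\cP(t,b)(C)$. Symmetry of the Hausdorff construction then yields the reverse quantifier, completing the lax bisimulation check and thus $E_0\subseteq{\sim_{lax}}$.
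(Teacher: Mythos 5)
The paper does not actually prove this statement: it appears in the supplemental section on lax bisimulation only ``for completeness,'' imported verbatim (up to notation) from \citet{taylor09bounding}, so there is no in-paper proof to compare against. Your proposal is the standard argument (the same Kleene/$\omega$-cpo template the paper uses for \autoref{thm:pi_bisim_operator}) and I find it essentially correct: monotonicity and $\omega$-continuity of $\cW$ lift through $\delta$ and through the Hausdorff $\sup$/$\inf$ because $X_s$ and $X_t$ are finite (you are right to flag finiteness of $\cA$ --- this is also what lets you extract an exact witnessing $b$ with $\delta(d_{lax})((s,a),(t,b))=0$ in the forward direction), and the two directions of the zero-set characterization are handled the right way. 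The one genuinely delicate step is the one you identify: the raw indicator of a class $C$ is \emph{not} feasible for the primal LP in \autoref{eqn:primal_lp}, since cross-class distances under $d_{lax}$ need not be $\geq 1$. Your fix works but should be stated concretely: scale the indicator by $\epsilon := \min\{d_{lax}(s',s'') : s'\in C,\ s''\notin C\}$, which is strictly positive by finiteness of $\cS$ and the definition of $E_0$, so that $u_{s'} = \epsilon\,\mathbb{1}[s'\in C]$ satisfies $u_{s'}-u_{s''}\leq d_{lax}(s',s'')$; feasibility then gives $\epsilon(\cP(s,a)(C)-\cP(t,b)(C)) \leq \cW(d_{lax})(\cP(s,a),\cP(t,b)) = 0$ and symmetry of $\cW$ gives equality. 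The cleaner route is the dual: an optimal coupling of cost $0$ must be supported on $E_0$, and any coupling supported on an equivalence relation forces the two marginals to agree on its classes; this is also the natural converse of the coupling you build in the $(\Leftarrow)$ direction. Two minor points to tidy up: you should note explicitly that $F$ maps pseudometrics to pseudometrics (so that $E_0$ is an equivalence relation and $\cS_{E_0}$ is well defined), and the paper's restated definition of lax bisimulation quantifies $b$ separately in conditions~1 and~2 --- your induction needs a single $b$ witnessing both, which is the definition actually used by \citet{taylor09bounding}.
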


\begin{theorem}
  $|V^*(s) - V^*(t)| \leq d_{lax}(s, t) \leq d_{\sim}(s, t)$.
\end{theorem}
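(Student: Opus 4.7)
The statement splits naturally into the two inequalities, and I would handle them in the opposite order to the way they are written, since the right-hand inequality is a fairly direct operator-comparison argument and the left-hand inequality builds on the same machinery.

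For the right-hand inequality $d_{lax}(s,t) \leq d_{\sim}(s,t)$, the plan is to show that the lax operator is dominated pointwise by $\cF$ and then promote this to the fixed points. Fix any pseudometric $d \in \rM$; by definition $F(d)(s,t) = \max\bigl(\sup_{a} \inf_{b} \delta(d)((s,a),(t,b)), \sup_{b} \inf_{a} \delta(d)((s,a),(t,b))\bigr)$, while $\cF(d)(s,t) = \max_{a} \delta(d)((s,a),(t,a))$. For any action $a$ the infimum over $b$ is at most the specific choice $b=a$, so both sup-inf expressions are bounded above by $\max_{a}\delta(d)((s,a),(t,a)) = \cF(d)(s,t)$; hence $F(d) \leq \cF(d)$ pointwise. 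Starting both iterations at $d_0 \equiv 0$ and using monotonicity of both operators, an easy induction gives $F^n(0) \leq \cF^n(0)$, and taking $n\to\infty$ yields $d_{lax} \leq d_{\sim}$.

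For the left-hand inequality $|V^{*}(s) - V^{*}(t)| \leq d_{lax}(s,t)$, I would mirror the induction used in the proof of \autoref{thm:pi_bisim_operator}'s value bound. Define iterates $V^{*}_n$ by the Bellman optimality update with $V^{*}_0 \equiv 0$, and $d_{lax,n} = F^n(0)$. The base case is vacuous. For the inductive step, suppose without loss of generality $V^{*}_{n+1}(s) \geq V^{*}_{n+1}(t)$, and let $a^{*}$ attain the max in $V^{*}_{n+1}(s)$. Since $V^{*}_{n+1}(t)$ is a max, for every $b\in\cA$ it upper-bounds the quantity $\cR(t,b) + \gamma \sum_{s'} \cP(t,b)(s') V^{*}_n(s')$, so I can subtract the $b$-action value from $t$ for any $b$ I like. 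Choose $b^{*} \in \arg\min_{b} \delta(d_{lax,n})((s,a^{*}),(t,b))$. Using the inductive hypothesis, $V^{*}_n$ is a feasible solution to the primal LP (Equation~1) that defines $\cW(d_{lax,n})(\cP(s,a^{*}), \cP(t,b^{*}))$, so the triangle-inequality-style manipulation gives
\begin{align*}
  |V^{*}_{n+1}(s) - V^{*}_{n+1}(t)| & \leq |\cR(s,a^{*}) - \cR(t,b^{*})| + \gamma \cW(d_{lax,n})(\cP(s,a^{*}), \cP(t,b^{*})) \\
  & = \delta(d_{lax,n})((s,a^{*}),(t,b^{*})) \\
  & = \inf_{b} \delta(d_{lax,n})((s,a^{*}),(t,b)) \\
  & \leq \sup_{a}\inf_{b} \delta(d_{lax,n})((s,a),(t,b)) \\
  & \leq H(\delta(d_{lax,n}))(X_s, X_t) = d_{lax,n+1}(s,t).
\end{align*}
Passing to the limit in $n$ (using convergence of $V^{*}_n \to V^{*}$ and of $d_{lax,n} \to d_{lax}$) then gives the bound.

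I expect the inductive step of the second inequality to be the main obstacle, because it requires being careful about three things simultaneously: (i) choosing $b^{*}$ via the $\inf$ branch of the Hausdorff metric rather than trying to do symmetric matching, (ii) arguing that $V^{*}_n$ is feasible for the Wasserstein LP with respect to $d_{lax,n}$ (which needs the inductive hypothesis applied to $V^{*}_n$, not just in absolute value but with compatible signs on the LP inequality constraints), and (iii) justifying the sign-reduction at the start so that only one side of $|V^{*}_{n+1}(s) - V^{*}_{n+1}(t)|$ has to be bounded, which keeps us on the $\sup_a \inf_b$ branch of $H$ rather than needing both branches. Once these are in place, the rest of the argument is essentially the same as in the $\pi$-bisimulation value bound.
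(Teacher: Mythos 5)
The paper never actually proves this theorem: it is imported verbatim (in the paper's notation) from Taylor, Precup, and Panangaden's work on lax bisimulation in the supplemental section, so there is no in-paper proof to compare against. Your argument is nonetheless correct and is essentially the standard one. For the right-hand inequality, the pointwise domination $F(d) \leq \cF(d)$ obtained by taking $b=a$ inside each $\inf$ is sound; note that you could avoid the Kleene-iteration limit (and hence any appeal to continuity of $F$) by observing that $F(d_{\sim}) \leq \cF(d_{\sim}) = d_{\sim}$ makes $d_{\sim}$ a prefixed point of the monotone operator $F$, so $d_{lax} \leq d_{\sim}$ follows directly from the Knaster--Tarski characterization of the least fixed point. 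For the left-hand inequality, your induction correctly mirrors the paper's proof of the $\pi$-bisimulation value bound, with exactly the two additions you flag: the sign reduction so that only the $\sup_a\inf_b$ branch of the Hausdorff max is needed, and the choice of $b^{*}$ through the $\inf$ rather than by action-label matching. The one caveat --- which the paper's own proof of the analogous $\pi$-bisimulation bound shares and likewise glosses over --- is that feasibility of $V^{*}_n$ for the primal LP in Equation~1 must be reconciled with the constraint $0 \leq \mathbf{u} \leq 1$, since $V^{*}_n$ is only bounded by $R_{max}/(1-\gamma)$; this is resolved either by the implicit normalization to $1$-bounded rewards and metrics (as in the appendix's ``$1$-bounded pseudometric'' hypothesis) or by noting that the objective is invariant to adding constants to $\mathbf{u}$, so only the Lipschitz constraints bind. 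With that understood, your proof is complete.
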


\end{document}